\documentclass{article}

\usepackage[preprint]{neurips_2026}

\usepackage{tcolorbox}
\usepackage[utf8]{inputenc} 
\usepackage[T1]{fontenc}    
\usepackage{hyperref}       
\usepackage{url}            
\usepackage{booktabs}       
\usepackage{amsfonts}       
\usepackage{nicefrac}       
\usepackage{microtype}      
\usepackage{xcolor}         
\usepackage{graphicx}
\usepackage{subcaption}
\usepackage{booktabs} 
\usepackage{multirow} 
\usepackage{threeparttable}
\usepackage{enumitem}
\definecolor{darkblue}{RGB}{0, 0, 139}
\usepackage{hyperref}
\hypersetup{
    colorlinks=true,
    linkcolor=orange,
    citecolor=darkblue,
    urlcolor=blue
}

\usepackage{amsmath}
\usepackage{amssymb}
\usepackage{mathtools}
\usepackage{amsthm}
\usepackage{bm} %
\usepackage{algorithmicx}%
\usepackage{wrapfig}

\theoremstyle{plain}
\newtheorem{theorem}{Theorem}
\theoremstyle{definition}
\newtheorem{intuition}{Intuition}
\theoremstyle{remark}
\newtheorem{remark}{Remark}

\usepackage[capitalize,noabbrev]{cleveref}

\usepackage[]{todonotes}
\definecolor{HaColor}{RGB}{127, 255, 212}

\title{Embracing Biased Transition Matrices for Complementary-Label Learning with Many Classes}

\newcommand{\affil}[1]{\textsuperscript{\textbf{#1}}}

\author{%
  \textbf{Tan-Ha Mai}\affil{1,3}\thanks{Portions of this work were conducted at NTU and during Tan-Ha Mai's internship at the RIKEN AIP.} \quad
  \textbf{Chao-Kai Chiang}\affil{2} \quad
  \textbf{Han-Hwa Shih}\affil{1}
  \\[4pt]
  \textbf{Gang Niu}\affil{3} \quad
  \textbf{Masashi Sugiyama}\affil{2,3} \quad
  \textbf{Hsuan-Tien Lin}\affil{1}\thanks{Correspondence to: Hsuan-Tien Lin \texttt{<htlin@csie.ntu.edu.tw>}.}
  \\[6pt]
  \textsuperscript{1}National Taiwan University \\
  \textsuperscript{2}The University of Tokyo \\
  \textsuperscript{3}RIKEN Center for Advanced Intelligence Project
}

\begin{document}

\maketitle


\begin{abstract}
\emph{Complementary-label learning} (CLL) is a weakly supervised paradigm where instances are labeled with classes they do not belong to. Despite a decade of research, CLL methods remain competitive mainly on 10-class classification, with scaling to large label spaces continuing to be an enduring bottleneck. This limitation stems from the common assumption of uniform label generation in traditional methods, which fatally dilutes the learning signal in many-class settings. In this paper, we demonstrate that this long-standing barrier can be overcome by deliberately designing a biased (non-uniform) generation process that restricts complementary labels to a subset of classes. This finding motivates us to propose Bias-Induced Constrained Labeling (BICL), a principled framework spanning data collection to training that leverages this bias. BICL enables effective learning on CIFAR-100 and TinyImageNet-200, achieving more than sevenfold accuracy improvements over traditional methods. Our findings establish a new trajectory for making CLL feasible for many classes in real-world applications.
\end{abstract}
\section{Introduction}
\label{introduction}

Large and accurately annotated data have driven the development of machine learning in recent decades. To address the high cost and limited efficiency of human labeling, weakly supervised learning (WSL) leverages \emph{imperfect supervision} i.e., training signals that are incomplete, ambiguous, or noisy, yet cheaper to acquire than ground-truth labels via humans or agents~\cite{ aclimage2025}. WSL contains various scenarios~\cite{sugiyama2022machine}, including positive-unlabeled learning~\cite{elkan2008learning,kiryo2017positive}, confidence information~\cite{cao2021learning,wang2023binary}, pairwise relationship~\cite{BNS18, bao2022pairwise}, complementary-label learning (CLL)~\cite{ishida2017learning,scl2020}, partial-label learning~\cite{jin2002learning,lv2020progressive}, or noisy-label learning data~\cite{noisy-ll-2013,patrini2017making}. 

In this work, we focus on CLL, where weak supervision specifies a class that an instance does \textit{not} belong to. CLL has attracted increasing attention~\cite{aclimage2025, ishida2017learning, wang2024learning} as it is potentially appealing when identifying an \textit{incorrect} or \textit{opposite} label is easier than determining the true class, making it promising for real-world applications such as crowdsourcing and preference-based feedback~\cite{wang2024climage}. For example, in crowdsourcing, annotators may find it easier to label a tomato as \textit{not an animal} than to decide whether it should be categorized as a ``vegetable'' or a ``fruit''. However, CLL remains one of the most challenging WSL paradigms because each label provides only indirect negative evidence about the true class. As the label space grows, this supervision becomes increasingly ambiguous, and many existing methods struggle to scale beyond small or moderate settings, such as 10 or 20 classes~\cite{aclimage2025,ishida2017learning,scl2020,wang2024climage}, hindering practical deployment. 

Existing CLL models generally assume that complementary labels (CLs) are generated from the true class according to a transition matrix~\cite{ishida2017learning,fwd2018,kim2019nlnl,gao2021discriminative,cpe2023}, which plays a central role in CLL studies.~\cite{ishida2017learning} pioneered the theoretical study of CLL by assuming a zero-diagonal uniform transition matrix, under which each label complementary to the true class is generated with equal probability. \citep{fwd2018} relaxed the assumption by introducing a biased (non-uniform) transition matrix to capture systematic biases during annotation. Such biased transition matrices better reflect realistic annotation mechanisms and have often yielded stronger empirical performance than the uniform ones~\citep{ishida2017learning}, as observed in subsequent studies~\citep{fwd2018,libcll_2024}.

Despite these empirical gains, biased transition matrices remain difficult to exploit in large-scale CLL settings, such as 100-class classification, because it is still unclear how to effectively control and scale the bias in CLs during data collection. 
As a result, existing CLL pipelines lack a systematic approach for \textit{designing} annotation protocols that deliberately induce beneficial biased transition structures.
In this work, we first initiate a pioneering study that uncovers the significant potential of designing an annotation process that concentrates on a few candidate CLs per true class. The potential motivates us to propose \emph{Bias-Induced Constrained Labeling} (BICL), a principled and practical framework for introducing and controlling beneficial bias in the transition matrix during CLL data collection. 
As illustrated on the left-hand side of Figure~\ref{fig:2}, the annotation process of BICL concentrates on a few candidate CLs per \textit{cluster} instead of the true class, allowing it to approximate the success of the pioneering study without requiring access to true labels. 

\begin{figure*}[t]
    \includegraphics[width=1.0\textwidth]{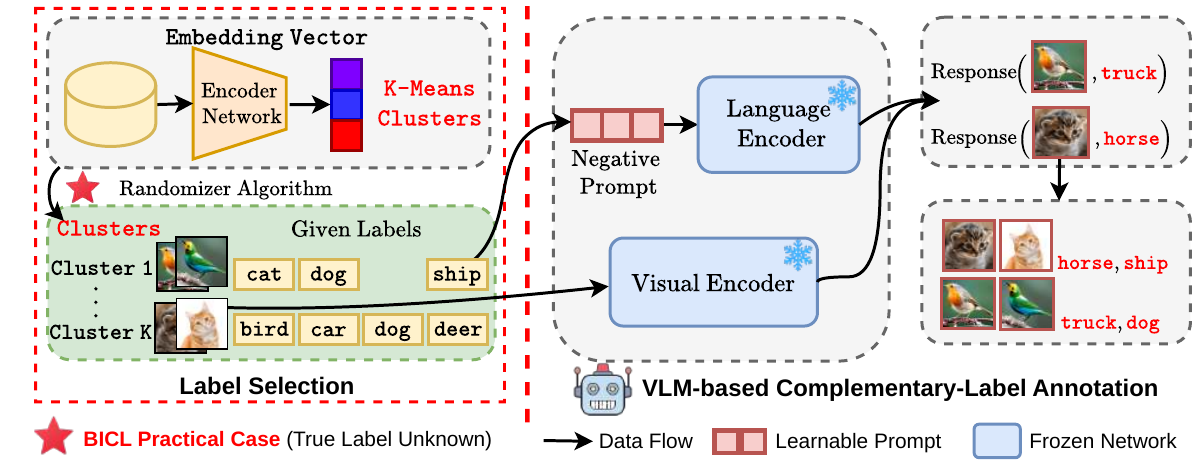}
    \caption{\textbf{BICL Practical Case}: Overview of the proposed practical design for bias-induced constrained labeling (BICL) that operates without true label access.}
    \label{fig:2}
\end{figure*}

BICL is designed to flexibly integrate with a wide range of annotators. We demonstrate its effectiveness using VLM-based annotators to evaluate its scalability potential at reasonable cost, as illustrated on the right-hand side of Figure~\ref{fig:2}.
Empirical results across different VLMs consistently verify that BICL significantly improves CLL performance, particularly in large label spaces where existing CLL methods struggle to scale.
This suggests that carefully structured bias can serve as a design principle for scalable CLL. 

Our main contributions are summarized as follows.
\begin{itemize}
    \item We demonstrate that BICL overcomes a \emph{critical scalability bottleneck} in CLL. In particular, BICL achieves a 7$\times$ improvement (from 6 to 46 percentage points) on CIFAR-100 and an 8$\times$ improvement (from 4 to 32 percentage points) on TinyImageNet-200 compared to 
    \citep{ishida2017learning}, whereas prior methods \emph{struggle to scale beyond} datasets with more than 20 classes.
    \item We develop an information-theoretic lower bound to characterize the potential of leveraging a biased transition matrix.
    This analysis explains why \citep{fwd2018} outperforms \citep{ishida2017learning}, and further shows that the reduction in conditional entropy induced by BICL is a key factor underlying its performance gains over \citep{fwd2018}.
    \item We conduct a systematic ablation study to isolate the contributions of the key components in BICL. The results support our theoretical analysis and demonstrate how candidate-set construction and transition-matrix bias can be controlled during CLL data collection to drive the observed performance gains.
\end{itemize}

\section{Background}
\label{Background}

In this section, we establish the notation and mathematical background for classification, moving from the standard supervised setting to the more specialized CLL framework.

\subsection{Ordinary-Label Classification}
\label{sec:ord-label-classification}

In the standard supervised classification setting, we assume that a $d$-dimensional pattern $\mathbf{x} \in \mathbb{R}^d$ and its corresponding class label $y \in \mathcal{Y} = \{1, \ldots, C\}$ are sampled jointly from an unknown underlying density $p(\mathbf{x},y)$. The dataset is denoted as $\mathcal{D} = \{(\mathbf{x}_i, y_i)\}_{i=1}^{n}$, where $n$ represents the number of training data, and these $n$ instance-label pairs are sampled independently. 

The goal of ordinary multi-class classification is to learn a classifier that minimizes the classification risk. To achieve this, we employ a prediction function vector $\mathbf{g}(\mathbf{x}) = (g_1(\mathbf{x}), \ldots, g_C(\mathbf{x}))^\top$, where $^\top$ denotes the transpose and $g_k(\mathbf{x})$ acts as the prediction score for class $k$. The final classifier $f(\mathbf{x})$ is typically determined by
\begin{equation}
    f(\mathbf{x}) = \operatorname*{arg\,max}_{k \in \{1, \ldots, C\}} g_k(\mathbf{x}).
    \nonumber
\end{equation}
The classification risk is defined as the expectation of a multi-class loss function $\mathcal{L}(\mathbf{g}(\mathbf{x}), y)$ computed on the soft label vector $\mathbf{g}(\mathbf{x})$ over the joint density $p(\mathbf{x}, y)$:
\begin{equation}
    R(\mathbf{g}) = \mathbb{E}_{p(\mathbf{x},y)}[\mathcal{L}(\mathbf{g}(\mathbf{x}), y)],
    \nonumber
\end{equation}
where $\mathbb{E}$ denotes the expectation.
In the context of Empirical Risk Minimization (ERM), since $p(\mathbf{x}, y)$ is unknown, we approximate the risk using the training sample average. 

\subsection{Complementary-Label Classification}
\label{sec:complementary-classification}

In the setting of complementary-label classification, instead of a ground-truth label $y$, we observe a complementary label $\bar{y}$, which specifies a class that the pattern $\mathbf{x}$ does \textit{not} belong to. Formally, given a complementary-label training set $\bar{\mathcal{D}} = \{(\mathbf{x}_i, \bar{y}_i)\}_{i=1}^{n_{\text{tr}}}$, our goal is to learn a classifier $f$ such that the classification error on an unseen ordinary-label test set $\mathcal{D}_{\text{te}} = \{(\mathbf{x}_i, y_i)\}_{i=1}^{n_{\text{te}}}$ is minimized.

To utilize such data effectively, we often model the relationship between the latent true labels and the observed CLs using a transition matrix $Q \in [0, 1]^{C \times C}$, where $Q_{kj} = p(\bar{Y}=j \mid Y=k)$. Since a complementary label cannot be the true label, we typically assume $Q_{kk} = 0$. This transition model allows us to map the ordinary class posteriors $\bm{\eta}(\mathbf{x}) = (p(Y=1|\mathbf{x}), \ldots, p(Y=C|\mathbf{x}))^\top$ to the complementary posteriors $\bar{\bm{\eta}}(\mathbf{x}) = (p(\bar{Y}=1|\mathbf{x}), \ldots, p(\bar{Y}=C|\mathbf{x}))^\top$ via $\bar{\bm{\eta}}(\mathbf{x}) = Q^\top \bm{\eta}(\mathbf{x})$. 

Transition-matrix-based methods, such as forward correction (FWD)~\cite{fwd2018}, exploit this relationship by applying the transition operator to the model predictions before computing the loss on CLs. In the biased-CL setting, these methods typically rely on either a know transition matrix or an estimate obtained from a small clean seed set of ordinary-labeled examples. This small amount of supervision---often only 2 to 5 true-labeled examples per class in existing CLL protocol~\cite{fwd2018,cpe2023,libcll_2024} (see Appendix~\ref{compared_few_true_labels})---has become a common practical assumption for enabling transition-matrix estimation. Given such an estimate, FWD can leverage the biased transition structure encoded in $Q$, rather than assuming uniform CL generation, and thereby improve learning when CLs are systematically biased.


Alternatively,~\citep{ishida2017learning} proposed a direct risk minimization framework without estimating the transition matrix under a stronger assumption that the complementary labels are generated \textit{uniformly}. Specifically, this setting assumes that CLs are sampled uniformly from the set of incorrect classes, meaning $Q_{kj} = \frac{1}{C-1}$ for all $j \neq k$. 
Under this assumption, they derived an unbiased estimator for the classification risk under a moderate condition:
\begin{equation}
    R(\mathbf{g}) = (C-1) \mathbb{E}_{p(\mathbf{x}, \bar{y})} [\bar{\mathcal{L}}(\mathbf{g}(\mathbf{x}), \bar{y})] + M, 
    \nonumber
\end{equation}
where $\bar{\mathcal{L}}$ is the complementary loss computed on $\mathbf{g}(\mathbf{x})$ and $M$ represents a constant. This formulation allows for consistent risk minimization using only CL data.
\section{BICL: Bias-Induced Constrained Labeling}
\label{our-approach}
In this section, we present a biased complementary-label collection protocol designed to systematically acquire CLs with structured bias. The protocol comprises two variants: the \textit{BICL Analysis Case} and the \textit{BICL Practical Case}. Using this protocol, we construct biased complementary-label datasets on four benchmarks: CIFAR-10, CIFAR-20, CIFAR-100~\cite{cifar10}, and TinyImageNet-200~\cite{tinyImageNet}. Notably, to the best of our knowledge, this is the \emph{first work} to scale CLL data collection to a 200-class setting. Prior CLL studies have focused on datasets with at most 20 classes, as evidenced in~\citep{aclimage2025,wang2024climage,libcll_2024}.

\subsection{Biased Complementary-Label Collection Protocol}
\label{sec:3.1}

Applying CLL to large-scale classification remains a major bottleneck, raising the question: \emph{Is CLL practical in many-class settings}? Existing studies focus on datasets with at most $20$ classes~\cite{aclimage2025,wang2024climage}, as performance on larger benchmarks is poor. For example, on CIFAR-100, CLL methods typically achieve only 5 to 7 percentage points of accuracy, even when CLs are synthetically generated under uniform and noiseless assumptions~\cite{ishida2017learning}.

\begin{figure*}[ht]
\centering
\includegraphics[width=1.0\textwidth]{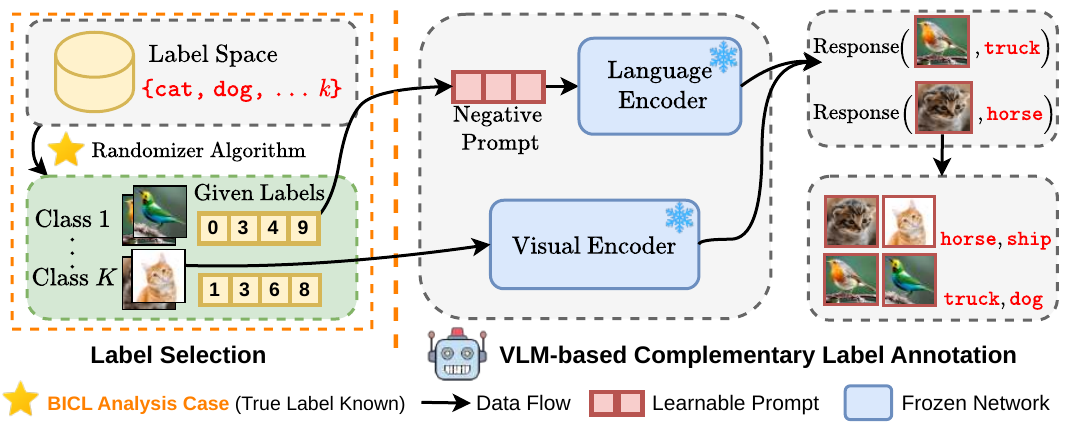}
\caption{\textbf{BICL Analysis Case:} Overview of the proof of concept design for complementary candidate-label selection that operates with true label access. It consists of (1) reducing candidate label selection, (2) VLM-based complementary-label annotation with a \textit{negative prompt}.}
  \label{fig:1}
\end{figure*}

Extending CLL to many-class settings raises a key question: ``\textit{what is the main difficulty?}'' The challenge is twofold. First, from the labeling perspective, whether using human annotators or vision-language model (VLM) labelers, it is impractical to present hundreds of candidate classes for complementary labeling. Second, even though existing protocols, such as those in CLImage~\cite{wang2024climage} and ACLImage~\cite{aclimage2025}, address the first challenge by restricting the candidate space by randomly selecting a small subset (e.g., four) of CLs from the full $C$ classes, the candidate space is offered in a per-instance basis. Then, the approach still produces a diffuse, highly scattered transition matrix, weakening the learning signal.


Our key idea is to examine whether further constraining the effective choice space can induce a stronger and more informative bias in CLs. Motivated by this insight, we propose a novel data collection protocol that systematically introduces structured bias into CLs. The overall pipeline, illustrated in Figure~\ref{fig:1}, comprises two main components, described below.


\textit{Label Selection.}
This component is central to our data-collection protocol and is crucial for effectively collecting biased CLs. Its role is to determine the candidate complementary label set presented to the VLM for annotation, as illustrated on the \emph{\textcolor{blue}{left-hand side}} of Figure~\ref{fig:1}. First, for each class, we randomly sample four labels from the $C-1$ possible CLs\footnote{This is a proof-of-concept setting in which the true label is assumed to be excluded from the candidate complementary label set.} using a randomization algorithm. The further details of why we sample four labels can be found in Appendix~\ref{number_sampled_labels}. Second, we assign the same candidate label set (e.g., \texttt{\{0,3,4,9\}}) to all images belonging to that class, forming an initial pool of plausible complementary-label candidates. The key idea is to fix a small candidate set per class, thereby restricting the effective label space and inducing stronger bias in the resulting CLs. We provide a information-theoretic perspective for why restricting the label space can be beneficial in Section~\ref{sec:intuitive_justification}. To prevent the VLM from developing class-specific label preferences, we randomly shuffle the order of candidate labels for each image when requesting annotation from VLM.

\textit{VLM-based Complementary-Label Annotation.}
Next, we employ a VLM~\cite{LLava} as the annotator in our framework. Prior work by \citep{aclimage2025} shows that VLM-based annotation can be more efficient and scalable than human annotation for collecting CLs. We therefore adopt a VLM as our starting point, while noting that the same protocol can be naturally extended to human annotators. Following \citep{aclimage2025}, we adopt the \textit{negative prompt}: ``\textcolor{darkblue}{\textit{\textbf{Question:}<image> Which label does not belong to this image? Answer the question with a single word:} \{\textit{{label[0]}, {label[1]}, {label[2]}, {label[3]}}\}}''. For each image, the VLM is prompted to select a label $\bar{y}$ that \emph{does not} correspond to the image content from the candidate set determined by the \textit{label selection} component.
This procedure produces a \emph{biased complementary-label dataset}, consisting of image–label pairs where the assigned label is complementary label (i.e., an image of a bird labeled as a ``truck''), while the overall annotation distribution is systematically shaped by our structured candidate-label design. Additional analyses on the effects of VLM choice and negative-prompt variants are provided in Appendices~\ref{vlm_choice} and~\ref{additional_negative_prompt}, respectively.



To systematically evaluate the effectiveness of \textit{BICL Analysis Case} approach illustrated in Figure~\ref{fig:1}, we conduct an ablation study by benchmarking three complementary-label generation assumptions: Uniform~\cite{ishida2017learning}, Biased~\cite{fwd2018}, and BICL (ours). Experiments are performed on CIFAR-10 and CIFAR-20 using two representative CLL algorithms, complementary probability estimation (CPE)~\cite{cpe2023} and forward correction (FWD)~\cite{fwd2018}. The empirical results reported in Table~\ref{tab:1} demonstrate that our BICL-based approach consistently outperforms both Uniform and Biased settings across datasets and algorithms. In particular, BICL achieves improvements of more than 6 percentage points and 3 percentage points over the Biased setting on CIFAR-10 and CIFAR-20, respectively. The performance gap is even larger when compared to Uniform CLs, exceeding 19 percentage points on CIFAR-10 and 50 percentage points on CIFAR-20.

\begin{table*}[htb]
\centering
\caption{Accuracy (mean$\pm$std) on CIFAR-10/20 for FWD and CPE-F under Uniform, Biased, and our BICL Analysis complementary-label assumptions. $\Delta$ is the absolute gain of BICL over the stronger baseline between Uniform and Biased. Best results are bolded.}
\label{tab:1}
\small
\setlength{\tabcolsep}{1.4pt}
\begin{tabular}{l|cccc|cccc}
\toprule
\textbf{Datasets} 
& \multicolumn{4}{c|}{\textbf{CIFAR-10}} 
& \multicolumn{4}{c}{\textbf{CIFAR-20}} \\
\midrule
\textbf{Algorithm}
& Uniform & Biased & \textbf{BICL Analysis} & $\Delta$
& Uniform & Biased & \textbf{BICL Analysis} & $\Delta$ \\
\midrule
FWD       
& 66.38\scriptsize{$\pm$2.16} & 80.04\scriptsize{$\pm$1.88}& \textbf{86.31}\scriptsize{$\pm$0.05} & \textcolor{blue}{$\uparrow$ 6.27}
& 20.50\scriptsize{$\pm$0.75} & 67.79\scriptsize{$\pm$0.36}& \textbf{71.32}\scriptsize{$\pm$0.25} & \textcolor{blue}{$\uparrow$ 3.53} \\
CPE-F  
& 66.24\scriptsize{$\pm$2.04}& 79.59\scriptsize{$\pm$1.87} & \textbf{85.71}\scriptsize{$\pm$0.18} & \textcolor{blue}{$\uparrow$ 6.12}
& 20.73\scriptsize{$\pm$1.17}& 68.40\scriptsize{$\pm$0.19} & \textbf{71.58}\scriptsize{$\pm$0.21} & \textcolor{blue}{$\uparrow$ 3.18} \\
\bottomrule
\end{tabular}
\begin{tablenotes}
\item \textit{\textbf{Biased}}: Following~\cite{fwd2018}, we consider a biased transition setting. As the original transition matrix is not publicly available, we adopt the reproducible construction of~\cite{gao2021discriminative}, assigning three probability levels $(\frac{0.75}{3}, \frac{0.24}{3}, \frac{0.01}{3})$ to CLs for each class $y$ as also used in~\cite{libcll_2024}.

\end{tablenotes}
\end{table*}

These results validate the effectiveness of the oracle-based formulation shown in Figure~\ref{fig:1}. However, it is important to emphasize that this setting is \textit{not deployable} in practice, as it assumes access to the true label to construct noise-free CLs.\footnote{All three dataset assumptions in Table~\ref{tab:1} rely on oracle knowledge of the true label to exclude it from the complementary-label set, resulting in a zero transition probability for the true class.} In realistic scenarios, such oracle access is unavailable and annotation noise is unavoidable.

\subsection{Practical BICL Design}

To move toward a practical and deployable solution, we extend the oracle formulation in Figure~\ref{fig:1} to the realistic protocol in Figure~\ref{fig:2}, which we refer to as the \textit{BICL Practical Case} (hereafter, we use \textit{BICL} for brevity). This practical variant is designed to operate without access to true labels. The key distinction from the idealized framework lies in the \textbf{\textit{label selection}} strategy.
In standard CLL, true labels are not available during training~\cite{ishida2017learning}. Consequently, unlike the \textit{BICL Analysis Case}, we cannot group samples by their true class to assign a shared candidate set and then sample CLs accordingly. Moreover, prior work shows that CLL is highly sensitive to annotation noise, which can substantially degrade performance due to overfitting~\cite{mai2025intra}. These considerations motivate the need for a practical mechanism that (i) estimates groups of samples likely to share the same underlying class and (ii) better controls the noise introduced during annotation.

We adopt representations from a pre-trained \emph{encoder network}~\cite{chen2020simsiam} to group samples with similar feature characteristics. This clustering step assigns a cluster index to each sample and serves as a preprocessing phase for candidate label selection. We provide additional analysis on how the choice of encoder affects BICL performance in Appendix~\ref{effect_encoder}.
As illustrated in Figure~\ref{fig:2}, we cluster these representations using $K$-means to obtain groups of visually similar samples. Each image is then assigned to a cluster $Z_c$, for $c \in \{1,\ldots,C\}$. We provide in-depth analysis on optimal number of clusters for each dataset in Appendix~\ref{number_cluster}.
For each cluster $Z_c$, we randomly sample a fixed-size candidate label set (four labels in our experiments) from the global label space. When querying the VLM for complementary-label annotation, we randomly shuffle the order of candidate labels for each image to prevent order-induced preferences.

\begin{figure*}[htb]
  \centering
  \begin{subfigure}[b]{0.22\textwidth}
    \includegraphics[width=\textwidth]{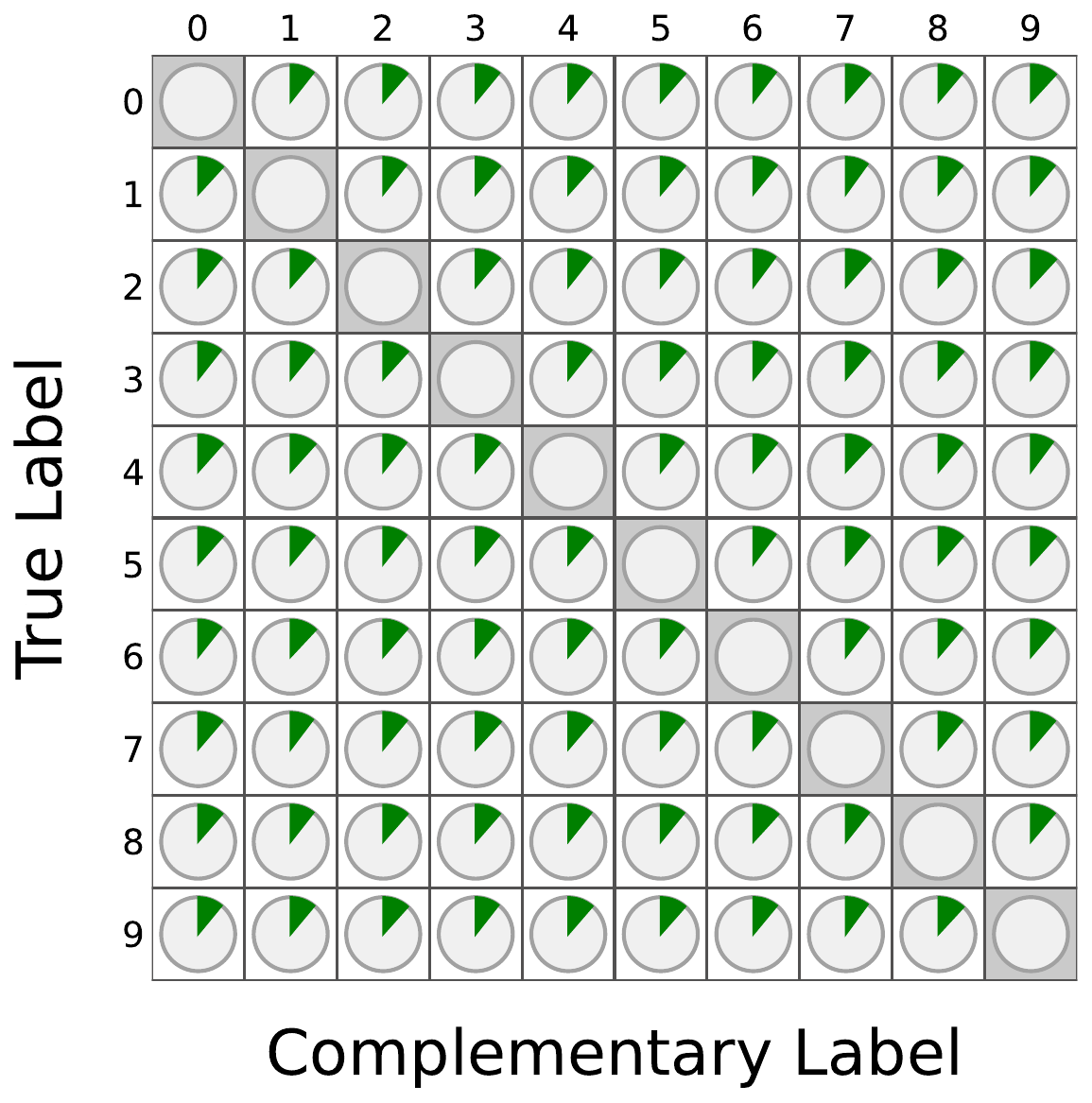}
    \caption{CIFAR-10}
    \label{fig:img1}
  \end{subfigure}
  \begin{subfigure}[b]{0.22\textwidth}
    \includegraphics[width=\textwidth]{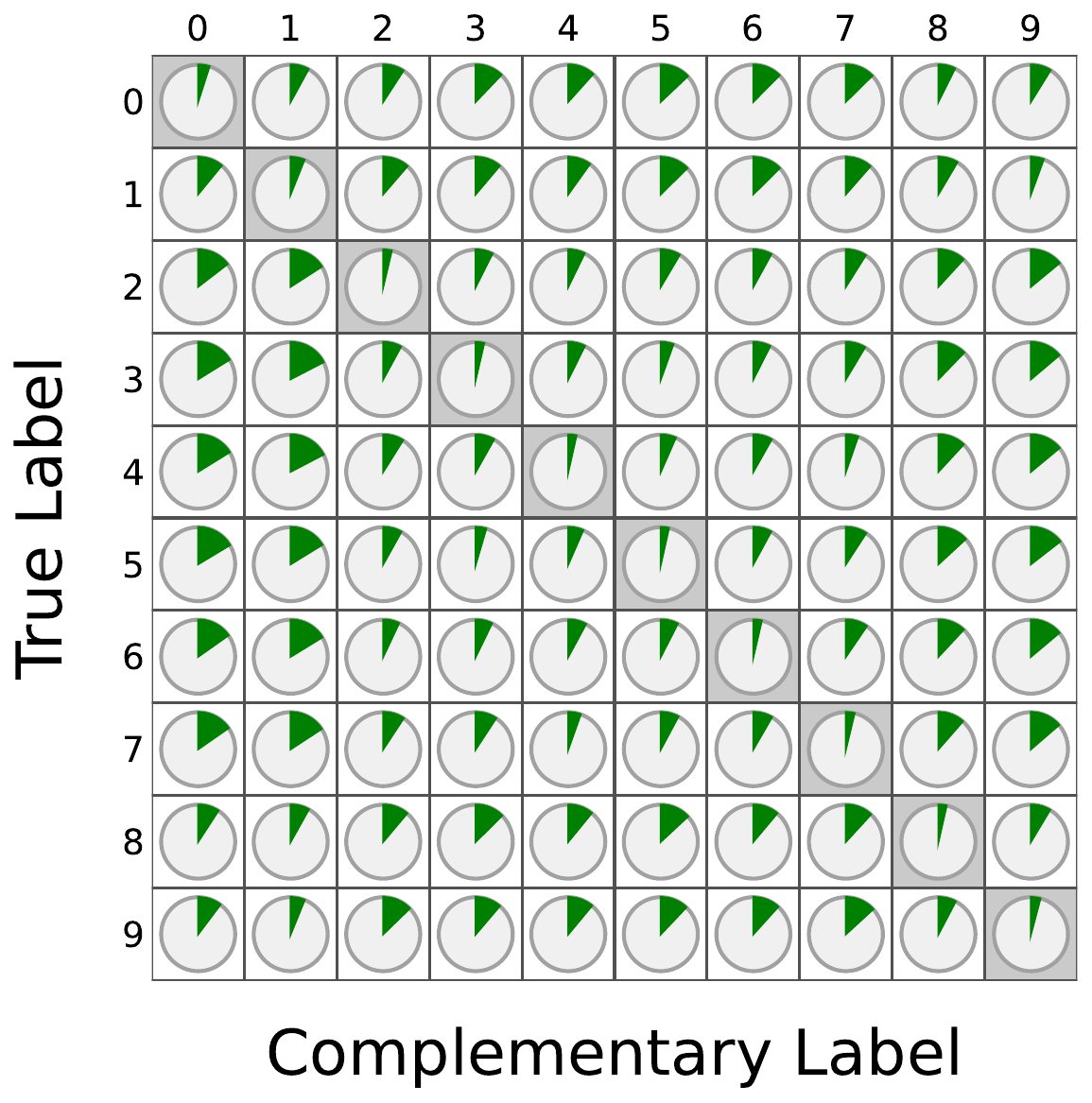}
    \caption{CLCIFAR-10}
    \label{fig:img2}
  \end{subfigure}
  \begin{subfigure}[b]{0.22\textwidth}
    \includegraphics[width=\textwidth]{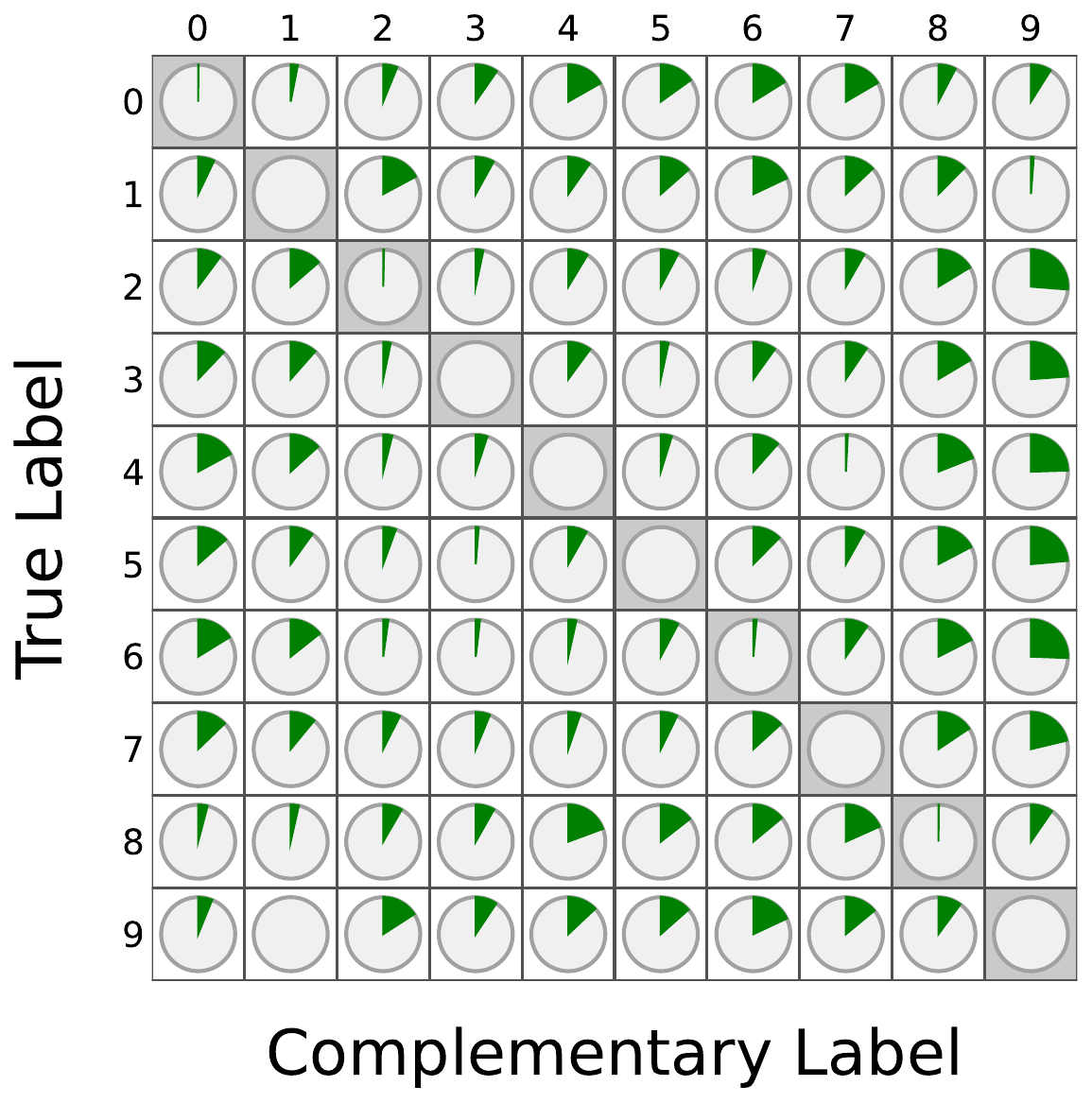}
    \caption{ACLCIFAR-10}
    \label{fig:img3}
  \end{subfigure}
  \begin{subfigure}[b]{0.22\textwidth}
    \includegraphics[width=\textwidth]{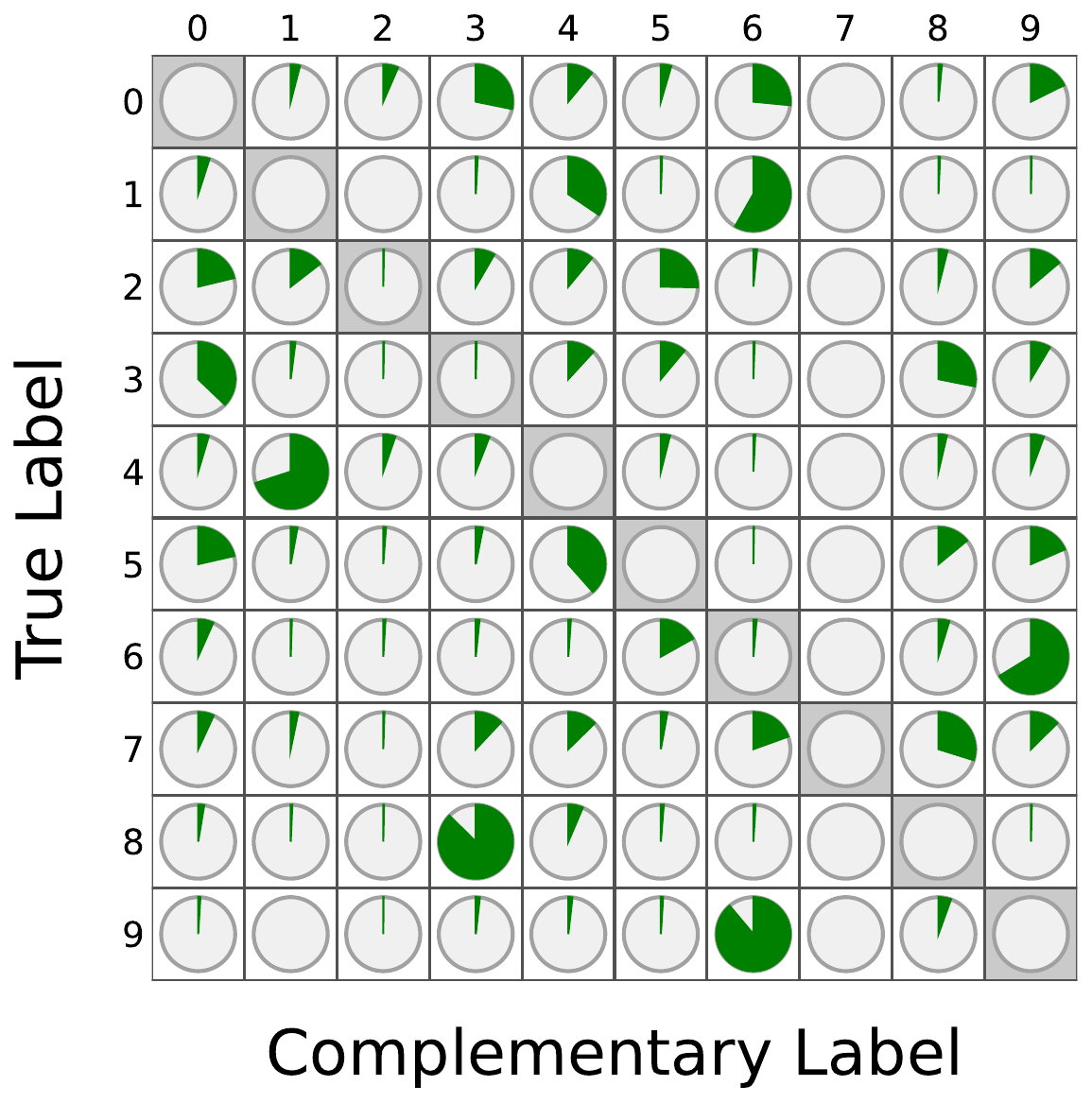}
    \caption{BCLCIFAR-10}
    \label{fig:img4}
  \end{subfigure}
  \caption{BICL transition matrix compared to other label collection approaches.}
  \label{fig:transition-matrix}
\end{figure*}

Additionally, we analyze the properties of the biased CLs collected via VLM-based annotation. Specifically, we compare the transition matrices, and label noise rates across four settings: the idealized uniform assumption~\citep{ishida2017learning}, human-annotated CLs~\cite{wang2024climage}, VLM-annotated CLs with a uniform design~\cite{aclimage2025}, and our proposed dataset. These comparisons provide insight into the structural characteristics induced by different complementary-label generation mechanisms.

\textbf{Highly biased transition matrix.}
As shown in Figure~\ref{fig:transition-matrix}, the empirical transition matrix of our biased complementary-label CIFAR-10 dataset (BCLCIFAR-10) exhibits substantially stronger bias than those observed under the idealized, human-annotated, and uniformly designed VLM-annotated settings. This pronounced bias is a direct consequence of our design, which combines two sources of asymmetry: (i) an intentionally bias-induced constrained labeling and (ii) systematic biases inherent to VLM-based annotation. Importantly, additional ablation results show that the performance gains of BICL are primarily driven by the designed constrained-labeling mechanism rather than by VLM bias alone (Appendix~\ref{source_bias}). Due to space constraints, we report label-noise rates, additional transition matrices, and label distributions for the BICL datasets in Appendix~\ref{extended_bicl_dataset}.



\section{Intuition from an Information-Theoretic Perspective}
\label{sec:intuitive_justification}


We provide an information-theoretic perspective to understand BICL.
First, we recall the complementary-label generation process $Q$.
Second, we quantify the uncertainty induced by $Q$ using conditional entropy.
Finally, we connect this quantity to the classification error.

Let $f: \mathbb{R}^d \to \mathcal{Y}$ be a classifier, and denote the prediction error by $p(y \neq f(\mathbf{x}))$.
To facilitate the discussion, we assume that the complementary-label generation process is instance-independent. Under this assumption, the transition matrix $Q_{kj} = p(\bar{Y} = j \mid Y = k)$, defined in Section~\ref{sec:complementary-classification}, serves as a tractable model of the complementary-label generation mechanism.

To quantify the uncertainty introduced by $Q$, we consider an information-theoretic characterization.
In particular,
Fano’s inequality~\cite{cover2006elements} enables us to derive a lower bound on the prediction error in terms of the ambiguity of the supervision signal. The \textit{proof} is deferred to Appendix~\ref{app:proof_thm1}.
\begin{theorem}[Lower Bound on Supervision Error]
\label{thm:fano_bound}
Let the complementary-label $\bar{Y}$ be generated by a transition mechanism $Q$. For any classifier $f$ learned from $\bar{\mathcal{D}}$, the prediction error probability $p(y \neq f(\mathbf{x}))$ is lower bounded by
$
\underline{p_{\text{err}}^{Q}} = \frac{H^{Q}(Y \mid \bar{Y}) - I(Y; X) - 1}{\log(C - 1)},
$
where $H^{Q}(Y \mid \bar{Y})$ is the conditional entropy of the true label given the complementary-label induced by $Q$.
\end{theorem}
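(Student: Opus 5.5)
The plan is to apply Fano's inequality~\cite{cover2006elements} to the Markov structure behind learning from complementary labels, and then to split the conditional entropy that Fano produces into a part coming from the supervision channel $Q$ and a part coming from the instance. The estimator is $\hat{Y} = f(X)$, where $f$ depends on $X$ and on the training sample $\bar{\mathcal{D}}$. The relevant Markov chain is $Y \to (X,\bar{Y}) \to \hat{Y}$, because all information about $Y$ that reaches the prediction has to pass through the instance and its complementary label. For an estimator taking values in $\mathcal{Y}$ with $|\mathcal{Y}| = C$, Fano gives
\begin{equation}
H(Y \mid X, \bar{Y}) \le 1 + p(y \neq f(\mathbf{x})) \log(C-1),
\nonumber
\end{equation}
using the weakened form $H_b(p_e) \le 1$.

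\textbf{Step 1.} Rewrite Fano's inequality as $p(y \neq f(\mathbf{x})) \ge \bigl(H(Y \mid X,\bar{Y}) - 1\bigr)/\log(C-1)$. This step is routine.

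\textbf{Step 2.} Lower bound $H(Y\mid X,\bar{Y})$ by $H^{Q}(Y\mid\bar{Y}) - I(Y;X)$. By the chain rule for conditional mutual information,
\begin{equation}
H(Y\mid X,\bar{Y}) = H(Y\mid \bar{Y}) - I(Y; X \mid \bar{Y}).
\nonumber
\end{equation}
It then suffices to show $I(Y;X\mid\bar{Y}) \le I(Y;X)$. Under the section's instance-independence assumption, $\bar{Y}$ is generated from $Y$ alone through $Q$. This gives the chain $X \to Y \to \bar{Y}$, i.e.\ $X \perp \bar{Y} \mid Y$.

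Expanding $I(Y,\bar{Y}; X)$ in two ways gives
\begin{equation}
I(Y;X) + I(\bar{Y};X\mid Y) = I(\bar{Y};X) + I(Y;X\mid\bar{Y}).
\nonumber
\end{equation}
The conditional independence makes $I(\bar{Y};X\mid Y)=0$. Hence $I(Y;X\mid\bar{Y}) = I(Y;X) - I(\bar{Y};X) \le I(Y;X)$. Finally, $H(Y\mid\bar{Y})$ is determined by the prior on $Y$ and by $Q$, which is what the notation $H^{Q}(Y\mid\bar{Y})$ records.

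\textbf{Step 3.} Substitute the bound from Step 2 into Step 1 to obtain the stated $\underline{p_{\text{err}}^{Q}}$.

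The main obstacle is justifying the first Markov chain rigorously, because $f$ is learned from $\bar{\mathcal{D}}$ rather than from the test pair $(X,\bar{Y})$. My first attempt would be to interpret the bound per test instance. The learned $f$ is treated as a (possibly randomized) function of $X$ that is independent of the test pair's $Y$ given $X$. This yields $Y \to X \to \hat{Y}$ and hence $H(Y\mid \hat{Y}) \ge H(Y\mid X) \ge H(Y\mid X,\bar{Y})$, so Fano applies with the weaker quantity. If only the chain $Y\to(X,\bar{Y})\to\hat{Y}$ is available, the same argument still goes through, since $H(Y\mid\hat{Y}) \ge H(Y\mid X,\bar{Y})$ by data processing.

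A second concern is that $\hat{Y}$ ranges over all $C$ labels. Fano's denominator is then $\log(C-1)$, which matches the statement, and no further reduction of the alphabet is needed.
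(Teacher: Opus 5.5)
Your proof is correct under the section's instance-independence assumption, but it uses a different decomposition from the paper's.

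\textbf{The paper's route.} It applies Fano to $H(Y\mid X)$, i.e., to a predictor $f(\mathbf{x})$ that sees only $X$. It then writes $H(Y\mid X)=H(Y)-I(Y;X)$ and uses only that conditioning reduces entropy, $H(Y)\ge H^{Q}(Y\mid\bar Y)$. That step holds for any joint law of $(X,Y,\bar Y)$, so instance-independence is never used.

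\textbf{Your route.} You apply Fano to the smaller quantity $H(Y\mid X,\bar Y)$ and expand it as $H(Y\mid\bar Y)-I(Y;X\mid\bar Y)$. You then use the chain $X\to Y\to\bar Y$ to get $I(Y;X\mid\bar Y)=I(Y;X)-I(X;\bar Y)\le I(Y;X)$.

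\textbf{What each buys.} Your route keeps the Fano step valid even for estimators that also see $\bar Y$ at test time. The cost is that your key inequality genuinely depends on $X\perp\bar Y\mid Y$. Without it the inequality can fail even in a valid complementary-label setting: take $Y$ uniform on three classes, $X$ an independent fair bit, and $\bar Y\equiv Y+1+X \pmod 3$. Then $\bar Y\neq Y$ always and $I(Y;X)=0$, but $I(Y;X\mid\bar Y)=1$ bit.

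\textbf{A shortcut you already have.} Your own justification of the Fano step treats $f$ as a function of $X$, which yields $H(Y\mid\hat Y)\ge H(Y\mid X)$. You could stop at $H(Y\mid X)$ and finish with the paper's one-line bound, with no detour and no Markov assumption. Under the Markov chain, the two routes differ only in slack: $I(Y;\bar Y)$ for the paper versus $I(X;\bar Y)$ for yours. This does not affect the final statement.
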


\begin{intuition}
\label{notice:intuition}
The conditional entropy $H^{Q}(Y \mid \bar{Y})$ quantifies the uncertainty of the true label $Y$ after observing the complementary label $\bar{Y}$.
Consider two transition mechanisms $Q_1$ and $Q_2$. If $H^{Q_1}(Y \mid \bar{Y}) < H^{Q_2}(Y \mid \bar{Y})$, then  $\underline{p_{\text{err}}^{Q_1}} < \underline{p_{\text{err}}^{Q_2}}$ since $I(Y; X)$ and $\log(C - 1)$ are fixed for a given dataset.
Therefore, under the instance-independent assumption, a transition mechanism that induces a smaller conditional entropy yields a tighter (i.e., lower) bound on the prediction error.
\end{intuition}

\subsection{The Advantages of Bias}
\label{sec:bias_advantage}

The intuition above helps explain the advantage of introducing bias.
We begin by explaining why \citep{fwd2018} outperforms \citep{ishida2017learning}.
The mechanism in \citep{ishida2017learning} is denoted by $Q^{\text{Unif}}$, where the diagonal entries are zero and the off-diagonal entries are $\frac{1}{C-1}$.
Thus, each row corresponds to a uniform distribution. 
This distribution attains the maximum conditional entropy 
$H^{\text{Unif}}(Y \mid \bar{Y}) = \log(C-1)$. 
In contrast, a biased mechanism $Q^{\text{Bias}}$ induces a smaller conditional entropy, i.e.,
$H^{\text{Bias}}(Y \mid \bar{Y}) \le H^{\text{Unif}}(Y \mid \bar{Y})$. 
This reduction in uncertainty implies a tighter lower bound on the error,
$\underline{p_\text{err}^{\text{Bias}}} \le \underline{p_\text{err}^{\text{Unif}}}$.
Therefore, by incorporating semantic bias as in \citep{fwd2018}, the model retains more information about the true label, which is consistent with improved empirical performance reported in prior work.

Second, we examine the label selection strategy proposed in Section~\ref{sec:3.1}.
While \citep{fwd2018} introduces bias, it often results in a dense transition matrix, where a complementary label may still weakly correspond to many possible true classes.
In contrast, our label selection strategy mitigates this issue by randomly selecting a small candidate set (of size four) for each class. 
A natural question is whether the conditional entropy induced by our method satisfies $H^{\text{Ours}}(Y \mid \bar{Y}) \le H^{\text{Bias}}(Y \mid \bar{Y}).$ 
Given the counterexample in Appendix~\ref{app:counterexample}, such an ordering cannot be established in general.
Nevertheless, simulations in Appendix~\ref{app:simulation} suggest that, with high probability, our method empirically satisfies
$H^{\text{Ours}}(Y \mid \bar{Y}) \le H^{\text{Bias}}(Y \mid \bar{Y})$\footnote{This phenomenon is also consistently observed in experiments; see Tables~\ref{tab:entropy_1}, \ref{tab:entropy_2}.}. 
Based on this empirical evidence, our method satisfies
$\underline{p_{\text{err}}^{\text{Ours}}} \le \underline{p_{\text{err}}^{\text{Bias}}} \le \underline{p_{\text{err}}^{\text{Unif}}}$ in practice.
Therefore, by explicitly reducing the size of the candidate set, our label selection strategy produces a supervision signal that is empirically more informative than those induced by the biased and uniform frameworks~\citep{fwd2018, ishida2017learning}, in the sense of inducing lower uncertainty about the true label.

\subsubsection{Addressing Realistic Situations}
\label{sec:connect_to_real}


\begin{remark}[Instance Dependence]
\label{notice:rmk_instance_dep}
In the setting considered in this section, the transition $Q$ is assumed to be instance-independent.
In practice, however, it may be instance-dependent, i.e., $Q(X)$.
Our working hypothesis is that the dataset is predominantly composed of CLs governed by an instance-independent transition $Q$, with a smaller portion influenced by an instance-dependent $Q(X)$.
Although this assumption does not fully capture the real data-generating process, it provides a tractable approximation that enables the design of a label collection protocol that is easy to construct (Figure~\ref{fig:1} and Section~\ref{sec:3.1}), intuitively explainable (Section~\ref{sec:intuitive_justification}), and achieves scalable and state-of-the-art performance (Tables~\ref{tab:2} and \ref{tab:3}).
\end{remark}

\begin{remark}[Role of Conditional Entropy]
\label{notice:rmk_confounder}
In the setting considered in this section, the conditional entropy $H^{Q}(Y \mid \bar{Y})$ is related to the classification error.
In practice, however, the protocol pipeline consists of a label selection step followed by an annotation step, involving multiple factors that jointly contribute to the final performance.
It is therefore natural to ask whether $H^{Q}(Y \mid \bar{Y})$ remains a dominant explanatory factor in this more realistic setting. To examine this, we conduct auxiliary experiments comparing the conditional entropy of different complementary-label datasets, as reported in Tables~\ref{tab:entropy_1} and~\ref{tab:entropy_2} in Appendix~\ref{sec:entroy_mutualinfor}. Across these datasets, we observe that higher accuracies under our proposed protocol consistently coincide
with lower values of $H^{Q}(Y \mid \bar{Y})$.
\end{remark}

\section{Experiments}
\label{sec:6}
In this section, we describe the experimental setup used to evaluate the proposed method. We then present empirical results and provide a detailed analysis by comparing our approach with representative baselines and competing methods. These experiments are designed to assess the effectiveness, robustness, and scalability of our method across different settings.

\subsection{Experimental Setups}
\label{sec:6.1}

\textbf{Datasets.} We evaluated the effectiveness of our proposed biased CLL approach across four synthesis labeled datasets with varying number of class. These include CIFAR-10, CIFAR-20, CIFAR-100~\cite{cifar10}, TinyImageNet-200~\cite{tinyImageNet} which contain 10, 20, 100 and 200 classes, respectively. Each CIFAR-based dataset consist of 50,000 training samples and 10,000 testing samples. TinyImageNet-200 has 100,000 training images and 10,000 testing images. 

\textbf{Implementation details.} All experiments were conducted using ResNet34~\cite{He2015} backbone network. We applied standard data augmentation techniques, namely \textit{RandomHorizontalFlip}, \textit{RandomCrop}, and normalization, to all images.
Model training was conducted using the Adam optimizer with $10^{-3}$ as selected learning rate, and weight decay $10^{-5}$. All experiments used a batch size of 512 over 300 epochs on NVIDIA A6000 48GB GPU. 
The experiments were repeated three times with different random seeds to ensure robustness.

\textbf{Baseline methods.}
We evaluate \emph{Q-aware} CLL algorithms, which explicitly use the transition matrix $Q$ to define the training loss. During training, these methods are provided with additional information in the form of an estimated transition matrix. Specifically, we consider six representative baselines: FWD~\cite{fwd2018}, CPE-F/I/T~\cite{cpe2023}, and URE-TNN/TGA~\cite{ishida2019complementarylabel}.
For all \textit{Q-aware} baselines, we estimate the transition matrix using a small set of ground-truth labels, namely five labeled samples per class. Additional details of this estimation procedure are provided in Appendix~\ref{compared_few_true_labels}. Due to space constraints, related-work and the complete experimental setup are provided in Appendices~\ref{background},~\ref{details_setup}, respectively.



\subsection{Results and Analysis}

As shown in Tables~\ref{tab:2} and~\ref{tab:3}, our proposed BICL annotation consistently yields substantial performance improvements across a wide range of CLL algorithms under four annotation settings: the idealized uniform assumption, human-annotated CLCIFAR, uniformly designed VLM-annotated ACLCIFAR, and our BCLCIFAR constructed via BICL.
On CIFAR-10 and CIFAR-20, Table~\ref{tab:2}, BICL achieves the best performance for all FWD and CPE-based methods, with large absolute gains over the second best baselines, approximately 16--21 percentage points on CIFAR-10 and 22--26 percentage points on CIFAR-20. These improvements are more pronounced on CIFAR-20, indicating that BICL is particularly effective in more challenging settings with finer-grained class partitions. This trend aligns with our earlier analysis showing that controlled transition bias reduces label ambiguity and increases the informativeness of CLs.

\begin{figure*}[htb]
  \centering
  \includegraphics[width=0.55\textwidth]{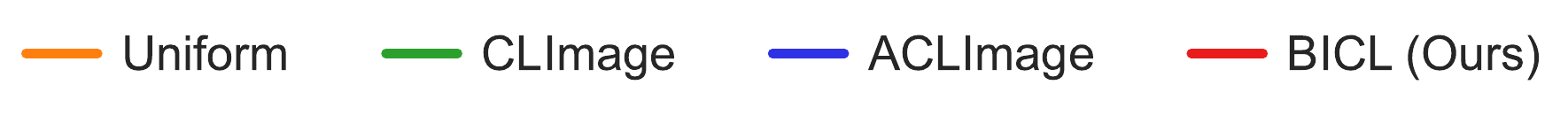} \\
  \vspace{0.05cm} 
  \begin{subfigure}[b]{0.22\textwidth}
    \includegraphics[width=\textwidth]{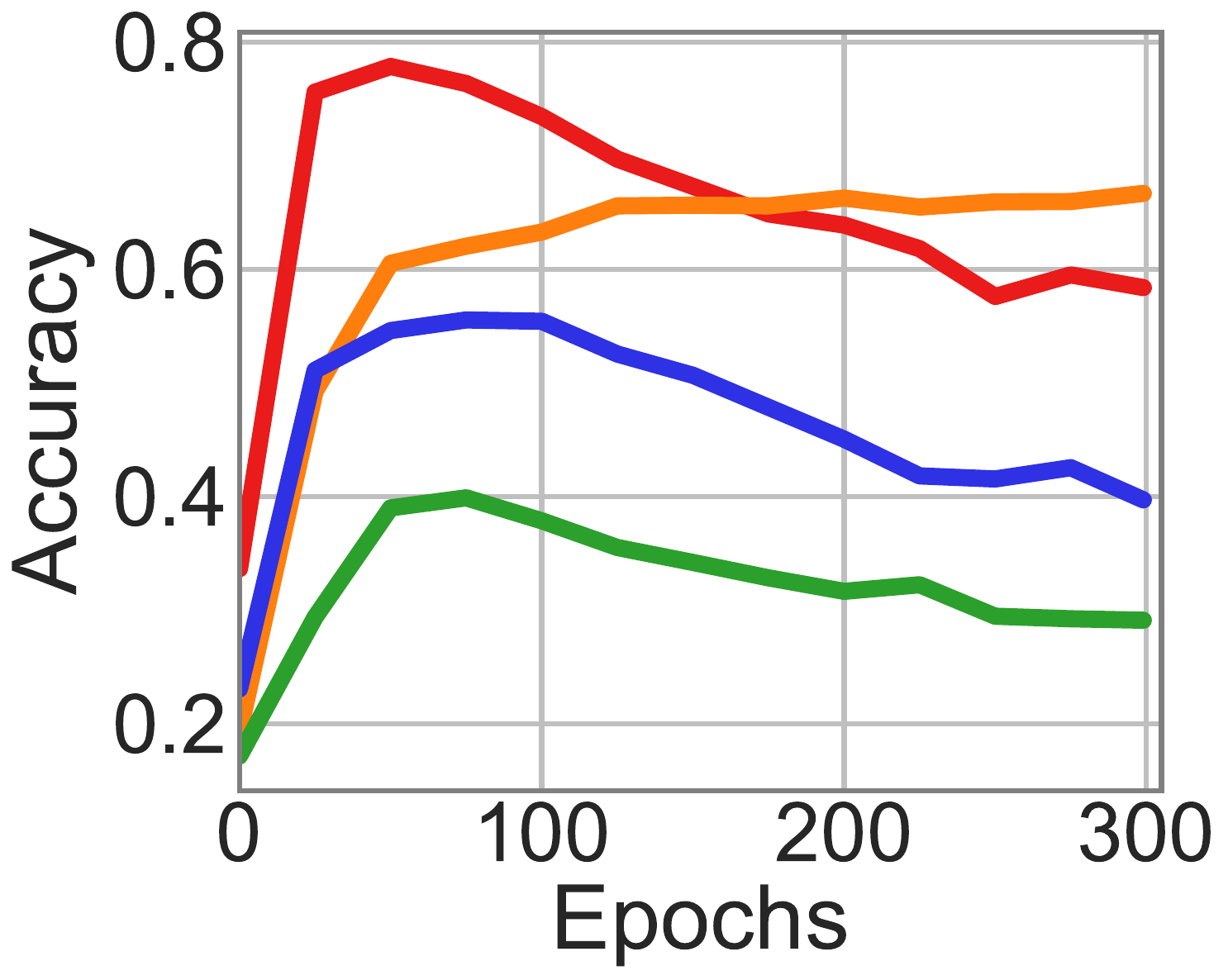}
    \caption{CIFAR-10}
    \label{fig:bcl1}
  \end{subfigure}
  \begin{subfigure}[b]{0.22\textwidth}
    \includegraphics[width=\textwidth]{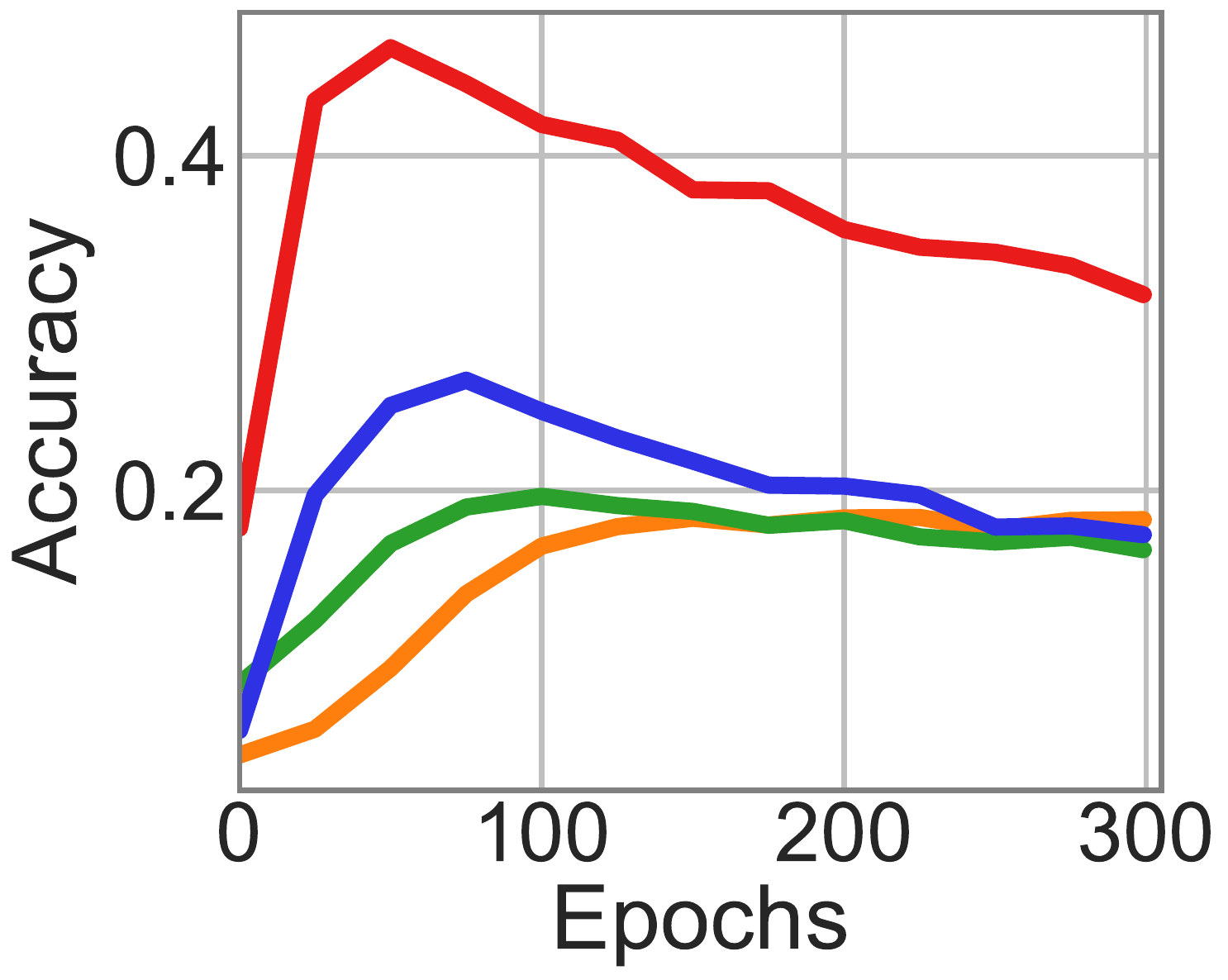}
    \caption{CIFAR-20}
    \label{fig:bcl2}
  \end{subfigure}
  \begin{subfigure}[b]{0.22\textwidth}
    \includegraphics[width=\textwidth]{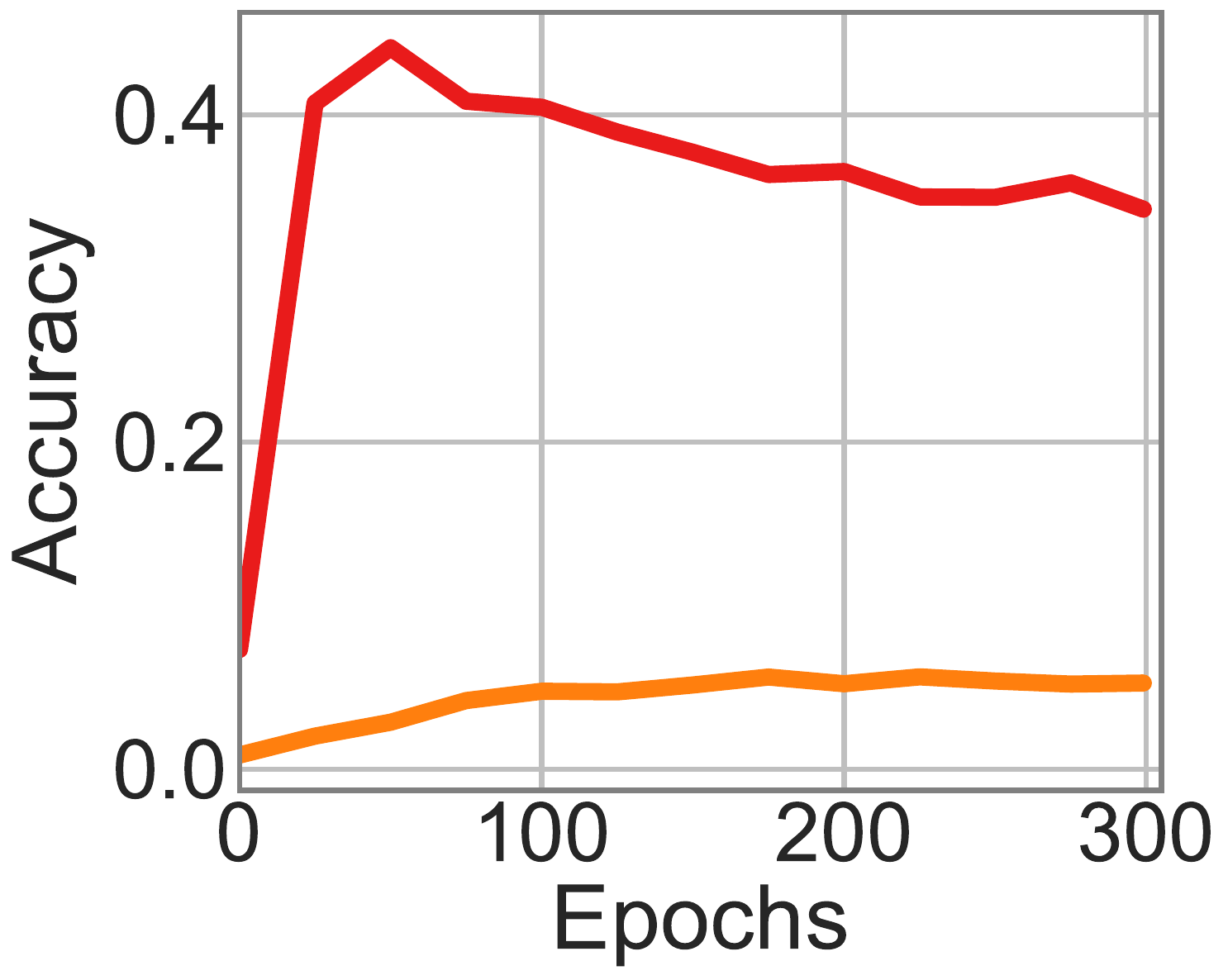}
    \caption{CIFAR-100}
    \label{fig:bcl3}
  \end{subfigure}
  \begin{subfigure}[b]{0.22\textwidth}
    \includegraphics[width=\textwidth]{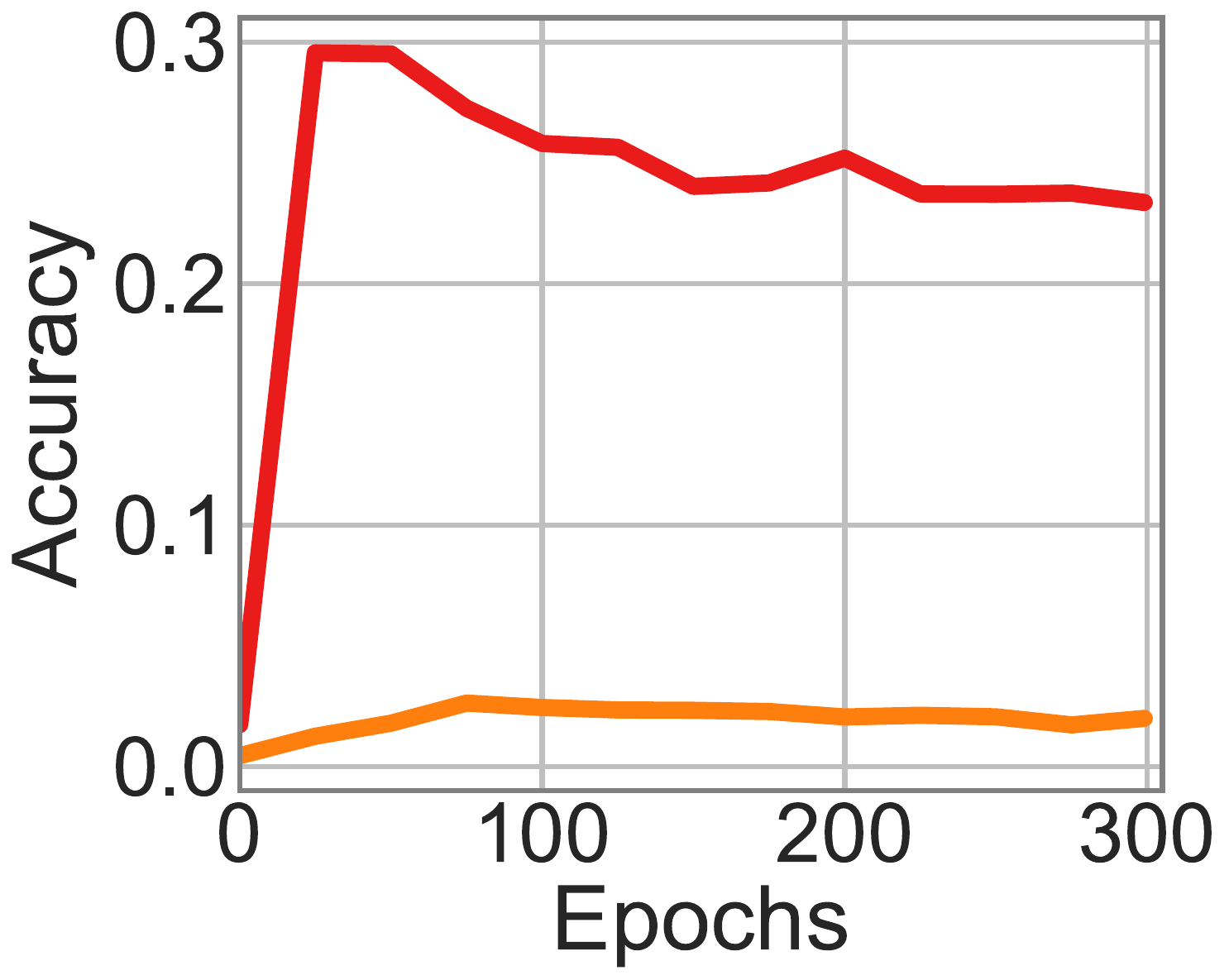}
    \caption{TinyImageNet-200}
    \label{fig:bcl4}
  \end{subfigure}
\caption{Test accuracy during training on four datasets. Our method reaches its peak performance faster than Uniform, CLImage, and ACLImage when applicable. CLImage and ACLImage are limited to at most 20 classes, so panels (c) and (d) omit them.}
  \label{fig:learning-curve}
\end{figure*}


URE-based methods exhibit mixed responses to BICL: URE-TNN benefits in some settings, whereas URE-TGA generally prefers the uniform assumption, reflecting the strong reliance of URE-based approaches on uniform complementary-label models. This result underscores that the effectiveness of biased CLs is algorithm-dependent and that BICL is best suited to methods that can leverage non-uniform transition structures.

Importantly, Table~\ref{tab:3} reveals that BICL enables CLL to scale to large label spaces that were previously infeasible, such as CIFAR-100 and TinyImageNet-200. For FWD and CPE-based methods, BICL boosts accuracy from near-random performance under the uniform assumption to over 44--47 percentage points on CIFAR-100 and above 30 percentage points on TinyImageNet-200. These results underscore the critical role of structured bias in alleviating severe ambiguity in large-class CLL.

\begin{table*}[t]
\centering
\caption{Performance comparison of various algorithms FWD, CPE-F/I/T, URE-TNN/TGA across different datasets and annotation settings on CIFAR-10 and CIFAR-20. $\Delta$ is the difference between the performance of the second best and our \textbf{BICL}.}
\label{tab:2}
\resizebox{1.0\textwidth}{!}{
\small
\setlength{\tabcolsep}{1.5pt}
\begin{tabular}{l|ccccc|ccccc}
\toprule
\textbf{Datasets} 
& \multicolumn{5}{c|}{\textbf{CIFAR-10}} 
& \multicolumn{5}{c}{\textbf{CIFAR-20}} \\
\midrule
\textbf{Algorithm}
& Uniform & CLImage & ACLImage & \textbf{BICL} & $\Delta$
& Uniform & CLImage & ACLImage & \textbf{BICL} & $\Delta$ \\
\midrule
FWD       
& 64.99\tiny{$\pm$0.64}& 39.25\tiny{$\pm$1.55} & 56.58\tiny{$\pm$1.18}& \textbf{81.23}\tiny{$\pm$0.05} & \textcolor{blue}{$\uparrow$ 16.24}
& 20.50\tiny{$\pm$0.75}& 19.82\tiny{$\pm$0.38} & 25.35\tiny{$\pm$0.35} & \textbf{50.84}\tiny{$\pm$0.32} & \textcolor{blue}{$\uparrow$ 25.49}\\
CPE-F  
& 64.97\tiny{$\pm$0.61}& 38.94\tiny{$\pm$1.39} & 56.43\tiny{$\pm$0.84} & \textbf{80.98}\tiny{$\pm$0.18} & \textcolor{blue}{$\uparrow$ 16.01}
& 20.73\tiny{$\pm$1.17} & 19.48\tiny{$\pm$0.35} & 25.02\tiny{$\pm$0.68}& \textbf{50.89}\tiny{$\pm$0.30} & \textcolor{blue}{$\uparrow$ 25.87}\\
CPE-I  
& 54.44\tiny{$\pm$1.95} & 33.94\tiny{$\pm$1.40} & 52.79\tiny{$\pm$1.34} & \textbf{75.64}\tiny{$\pm$0.59} & \textcolor{blue}{$\uparrow$ 21.20}
& 12.01\tiny{$\pm$0.93} & 16.79\tiny{$\pm$0.48} & 21.58\tiny{$\pm$0.97}& \textbf{46.42}\tiny{$\pm$0.54} & \textcolor{blue}{$\uparrow$ 24.84}\\
CPE-T  
& 57.94\tiny{$\pm$0.97} & 38.80\tiny{$\pm$1.16} & 52.84\tiny{$\pm$1.05} & \textbf{77.47}\tiny{$\pm$1.28} & \textcolor{blue}{$\uparrow$ 19.53} & 20.58\tiny{$\pm$0.78} & 19.33\tiny{$\pm$0.75} & 23.39\tiny{$\pm$0.68}& \textbf{45.31}\tiny{$\pm$0.66}& \textcolor{blue}{$\uparrow$ 21.92}\\
URE-TNN      
& \textbf{47.25}\tiny{$\pm$2.35} & 31.21\tiny{$\pm$1.11} & 30.70\tiny{$\pm$4.56} & 42.84\tiny{$\pm$2.27} & \textcolor{red}{$\downarrow$ 4.41}
& 15.29\tiny{$\pm$1.11} & 9.47\tiny{$\pm$2.67} & 8.63\tiny{$\pm$1.70} & \textbf{17.33}\tiny{$\pm$1.68} & \textcolor{blue}{$\uparrow$ 2.02}\\
URE-TGA       
& \textbf{57.37}\tiny{$\pm$1.12} & 33.68\tiny{$\pm$1.06} & 42.28\tiny{$\pm$5.19} & 45.65\tiny{$\pm$6.72} & \textcolor{red}{$\downarrow$ 11.72}
& \textbf{13.61}\tiny{$\pm$4.51} & 5.17\tiny{$\pm$0.25} & 5.03\tiny{$\pm$0.10} & 5.05\tiny{$\pm$0.00} & \textcolor{red}{$\downarrow$ 8.56}\\
\bottomrule
\toprule
Std-Supp & \multicolumn{5}{c|}{85.09\tiny{$\pm$0.39}} & \multicolumn{5}{c}{33.19\tiny{$\pm$0.21}} \\
\bottomrule
\end{tabular}
}
\begin{tablenotes}
\item \textbf{\textit{Std-Supp (Standard Supervision)}}: Instead of collecting CLs, we collect the true labels using the same VLM. We then train the model with the cross-entropy loss and report the results to compare standard supervision with CLL.
\end{tablenotes}
\end{table*}

\begin{table*}[htb]
\centering
\caption{Performance comparison of various algorithms FWD, CPE-F/I/T, URE-TNN/TGA across different datasets and annotation settings on CIFAR-100 and TinyImageNet-200. $\Delta$ is the difference between the performance of the second best and our \textbf{BICL}.}
\label{tab:3}
\resizebox{1.00\textwidth}{!}{
\small
\setlength{\tabcolsep}{1.5pt}
\begin{tabular}{l|ccccc|ccccc}
\toprule
\textbf{Datasets} 
& \multicolumn{5}{c|}{\textbf{CIFAR-100}} 
& \multicolumn{5}{c}{\textbf{TinyImageNet-200}} \\
\midrule
\textbf{Algorithm}
& Uniform & CLImage & ACLImage & \textbf{BICL} & $\Delta$
& Uniform & CLImage & ACLImage & \textbf{BICL} & $\Delta$ \\
\midrule
FWD       
& 5.53\tiny{$\pm$0.47} & N/A & N/A & \textbf{46.70}\tiny{$\pm$0.60} & \textcolor{blue}{$\uparrow$ 41.17}
& 4.00\tiny{$\pm$0.43} & N/A & N/A & \textbf{32.15}\tiny{$\pm$0.30} & \textcolor{blue}{$\uparrow$ 28.15}\\
CPE-F  
& 5.06\tiny{$\pm$0.76} & N/A & N/A & \textbf{46.57}\tiny{$\pm$0.09} & \textcolor{blue}{$\uparrow$ 41.51}
& 2.14\tiny{$\pm$0.87} & N/A & N/A & \textbf{31.89}\tiny{$\pm$0.07} & \textcolor{blue}{$\uparrow$ 29.75} \\
CPE-I  
& 5.88\tiny{$\pm$1.27} & N/A & N/A & \textbf{34.03}\tiny{$\pm$0.15} & \textcolor{blue}{$\uparrow$ 28.15}
& 1.67\tiny{$\pm$0.20} & N/A & N/A & \textbf{21.70}\tiny{$\pm$0.40} & \textcolor{blue}{$\uparrow$ 20.03} \\
CPE-T  
& 3.91\tiny{$\pm$1.09} & N/A & N/A & \textbf{44.35}\tiny{$\pm$0.33} & \textcolor{blue}{$\uparrow$ 40.44}
& 1.19\tiny{$\pm$0.06} & N/A & N/A & \textbf{28.51}\tiny{$\pm$1.40} & \textcolor{blue}{$\uparrow$ 27.32} \\
URE-TNN      
& 1.39\tiny{$\pm$0.05} & N/A & N/A & \textbf{9.58}\tiny{$\pm$1.06} & \textcolor{blue}{$\uparrow$ 8.19}
& 0.60\tiny{$\pm$0.05} & N/A & N/A & \textbf{5.84}\tiny{$\pm$0.49} & \textcolor{blue}{$\uparrow$ 5.24}\\
URE-TGA       
& 1.03\tiny{$\pm$0.03} & N/A & N/A & \textbf{1.05}\tiny{$\pm$0.01} & \textcolor{blue}{$\uparrow$ 0.02 }
& \textbf{0.54}\tiny{$\pm$0.03} & N/A & N/A & 0.49\tiny{$\pm$0.00} & \textcolor{red}{$\downarrow$ 0.05}\\
\bottomrule
\toprule
Std-Supp & \multicolumn{5}{c|}{36.49\tiny{$\pm$0.49}} & \multicolumn{5}{c}{16.54\tiny{$\pm$0.28}} \\
\bottomrule
\end{tabular}
}
\begin{tablenotes}
    \item \textbf{\textit{N/A}}: CIFAR-100 and TinyImageNet-200 are not available for CLImage and ACLImage. Most prior work on CLL has focused on datasets with at most 20 classes. Consequently, complementary-label versions of large-scale benchmarks have not been collected.
\end{tablenotes}
\end{table*}

Additionally, Table~\ref{tab:2} shows a clear discrepancy between the commonly used \emph{Uniform} assumption~\cite{ishida2017learning} and CLs collected in practice. Historically, the uniform transition has been treated as a reasonable default, and often as an implicit \emph{upper bound}, because it assumes noise-free CLs. Motivated by this assumption, recent pipelines such as CLImage~\cite{wang2024climage} and ACLImage~\cite{aclimage2025} aim to collect CLs that approximate a uniform distribution on real-world data. However, due to unavoidable annotation noise and systematic biases introduced by the collection process, the resulting ``uniform-like'' labels lead to substantially lower accuracy than the idealized \emph{Uniform} setting across most algorithms in Table~\ref{tab:2}.

In contrast, BICL consistently outperforms CLImage and ACLImage and achieves substantial gains over the idealized \emph{Uniform} setting. Notably, BICL can even exceed the performance obtained under the \emph{Uniform} assumption, indicating that \emph{carefully controlled bias} can provide more beneficial than uniformly sampled CLs, as shown in Tables~\ref{tab:2}, and~\ref{tab:3}.
Moreover, Figure~\ref{fig:learning-curve} shows that BICL reaches its peak performance faster than the Uniform baseline, reaching strong performance in fewer training epochs.
These findings challenge the prevailing assumption that uniform CLs are optimal in practice and instead highlight biased, which is more promising label collection as a more effective strategy for scalable CLL.
Additional discussions, experiments are provided in Appendices~\ref{further_discussion},~\ref{additional_ablation}, and~\ref{details_setup}.

\section{Conclusion}
\label{Conclusion}

In this paper, we introduced bias-induced constrained labeling (BICL), a principled framework for CLL that introduces and controls beneficial bias in transition matrix. Guided by an information-theoretic lower-bound analysis, BICL shapes the induced label distributions via explicit constraints, improving learning stability and enabling CLL to scale to larger label spaces. Empirically, BICL achieves 7× (6 $\rightarrow$ 46 percentage points) and 8× (4 $\rightarrow$ 32 percentage points) significant improvements over the uniform setting on CIFAR-100 and TinyImageNet-200, respectively. These results demonstrate that BICL alleviates \emph{a key scalability bottleneck} of prior CLL approaches, which have largely been evaluated on problems with at most 20 classes. 
Overall, BICL advances the practicality of CLL for large-scale classification with complementary supervision.

\paragraph{Broader Impact.} It is known that weakly-supervised learning can be used for some privacy-preserving applications. While our works do not fall under such privacy-preserving applications, we suggest that practitioners exercise caution when exploring these use cases.



\medskip
{
\small
\bibliographystyle{unsrt} 
\bibliography{example_paper}
}

\clearpage
\appendix

\section{Proofs of Theoretical Results}

\subsection{Proof of Theorem~\ref{thm:fano_bound}}
\label{app:proof_thm1}

\begin{proof}
We start with the standard Fano's Inequality~\cite{cover2006elements}, which bounds the conditional entropy $H(Y \mid X)$ with respect to the prediction error probability $p_{\text{err}}(f) = p(Y \neq f(\mathbf{x}))$:
\begin{equation}
    H(Y \mid X) \leq H_b(p_{\text{err}}(f)) + p_{\text{err}}(f) \log(C - 1)
    \nonumber
\end{equation}
where $H_b(P_e)$ is the binary entropy function. Bounding $H_b(p_{\text{err}}(f)) \leq 1$, we obtain:
\begin{equation}
    p_{\text{err}}(f) \geq \frac{H(Y \mid X) - 1}{\log(C - 1)}
    \label{eq:pe_basic_app}
\end{equation}

By the definition of mutual information, we express the conditional entropy as:
\begin{equation}
    H(Y \mid X) = H(Y) - I(Y; X)
    \nonumber
\end{equation}

To relate this to the complementary label $\bar{Y}$, we invoke Theorem 2.6.5 from Cover and Thomas~\cite{cover2006elements}, which states that conditioning reduces entropy (information can not hurt). Specifically, $H(Y) \geq H^{\mathcal{A}}(Y \mid \bar{Y})$. Applying this inequality to our context:
\begin{equation}
    H(Y \mid X) = H(Y) - I(Y; X) \geq H^{\mathcal{A}}(Y \mid \bar{Y}) - I(Y; X)
    \nonumber
\end{equation}
This step is crucial as it incorporates the supervision quality of the CLs into the lower bound.

Finally, substituting this result back into Eq.~\eqref{eq:pe_basic_app}, we arrive at the theorem statement:
\begin{equation}
     \underline{p^{\mathcal{A}}_{\text{err}}} = \frac{H^{\mathcal{A}}(Y \mid \bar{Y}) - I(Y; X) - 1}{\log(C - 1)}.
     \nonumber
\end{equation}
\end{proof}

\subsection{Detailed Comparison of Supervision Ambiguity}
\label{app:bias_comparison}

In this section, we compare the supervision ambiguity between the standard uniform assumption and the biased scenario. The quantity of interest is the conditional entropy $H(Y \mid \bar{Y})$, which governs the tightness of the error lower bound derived in Theorem~\ref{thm:fano_bound}.

\paragraph{Uniform Assumption \citep{ishida2017learning}.}
Standard complementary label learning (CLL) approaches~\citep{ishida2017learning} assume that the complementary label $\bar{y}$ is sampled uniformly from the set of incorrect classes $\mathcal{Y} \setminus \{y\}$. Under this assumption, the posterior probability is uniform over the support of size $C-1$.

As established in \citep{cover2006elements}, the uniform distribution is the \textbf{maximum entropy distribution} over a fixed finite support. Therefore, the supervision ambiguity reaches its theoretical maximum:
\begin{equation}
    H^{\text{Unif}}(Y \mid \bar{Y}) = -\sum_{y \neq \bar{y}} \frac{1}{C-1} \log\left(\frac{1}{C-1}\right) = \log(C-1).
    \nonumber
\end{equation}

\paragraph{Biased Scenario \citep{fwd2018}.}
In \citep{fwd2018}, the complementary label is selected based on semantic similarity, resulting in a non-uniform transition probability. Consequently, the posterior distribution $P(Y \mid \bar{Y} = \bar{y})$ becomes non-uniform over the support $\mathcal{Y} \setminus \{\bar{y}\}$.

Since the entropy is strictly maximized only by the uniform distribution, any deviation from uniformity on the same finite support necessarily results in a reduction of entropy. Thus, strictly:
\begin{equation}
    H^{\text{Bias}}(Y \mid \bar{Y}) \leq \log(C-1) = H^{\text{Unif}}(Y \mid \bar{Y}).
    \nonumber
\end{equation}

\paragraph{Conclusion.}
Substituting these terms back into the lower bound $\underline{p^{\mathcal{A}}_{\text{err}}}$, we obtain the inequality:
\begin{equation}
    \underline{p^{\text{Bias}}_{\text{err}}} \leq \underline{p^{\text{Unif}}_{\text{err}}}.
    \nonumber
\end{equation}
This proves that incorporating semantic bias theoretically lowers the fundamental limit of the prediction error compared to the uniform assumption.

\subsection{A Counterexample}
\label{app:counterexample}

We use the following $10 \times 10$ biased transition matrix to make a counterexample.
\[
Q^{\text{Bias}} =
\begin{pmatrix}
0 & .02 & .02 & .02 & .02 & .90 & .005 & .005 & .005 & .005 \\
.02 & 0 & .02 & .02 & .02 & .005 & .90 & .005 & .005 & .005 \\
.02 & .02 & 0 & .02 & .02 & .005 & .005 & .90 & .005 & .005 \\
.02 & .02 & .02 & 0 & .02 & .005 & .005 & .005 & .90 & .005 \\
.02 & .02 & .02 & .02 & 0 & .005 & .005 & .005 & .005 & .90 \\
.02 & .02 & .02 & .02 & .90 & 0 & .005 & .005 & .005 & .005 \\
.02 & .02 & .02 & .02 & .005 & .90 & 0 & .005 & .005 & .005 \\
.02 & .02 & .02 & .02 & .005 & .005 & .90 & 0 & .005 & .005 \\
.02 & .02 & .02 & .02 & .005 & .005 & .005 & .90 & 0 & .005 \\
.02 & .02 & .02 & .02 & .005 & .005 & .005 & .005 & .90 & 0
\end{pmatrix}.
\]
From $Q^{\text{Bias}}$, we take each row, keep 4 classes, zero out the rest of the row entries, and re-normalize the non-zero entries as follows.
\[
Q^{\text{Ours}} =
\begin{pmatrix}
0 & .25 & .25 & .25 & .25 & 0 & 0 & 0 & 0 & 0 \\
.25 & 0 & .25 & .25 & .25 & 0 & 0 & 0 & 0 & 0 \\
.25 & .25 & 0 & .25 & .25 & 0 & 0 & 0 & 0 & 0 \\
.25 & .25 & .25 & 0 & .25 & 0 & 0 & 0 & 0 & 0 \\
.25 & .25 & .25 & .25 & 0 & 0 & 0 & 0 & 0 & 0 \\
.25 & .25 & .25 & .25 & 0 & 0 & 0 & 0 & 0 & 0 \\
.25 & .25 & .25 & .25 & 0 & 0 & 0 & 0 & 0 & 0 \\
.25 & .25 & .25 & .25 & 0 & 0 & 0 & 0 & 0 & 0 \\
.25 & .25 & .25 & .25 & 0 & 0 & 0 & 0 & 0 & 0 \\
.25 & .25 & .25 & .25 & 0 & 0 & 0 & 0 & 0 & 0
\end{pmatrix}.
\]

The counterexample showed that:
\begin{equation}
    H^{\text{Ours}}(Y \mid \bar{Y}) = 3.0529 \text{ bits} 
    > 
    H^{\text{Bias}}(Y \mid \bar{Y}) = 1.1975 \text{ bits}
    \nonumber
\end{equation}

\subsection{Simulation Details for Entropy Comparison}
\label{app:simulation}

To further validate the advantage of our proposed \textbf{sparsity} constraint over a general dense biased scenario (as discussed in Section~\ref{sec:bias_advantage}), we conducted a Monte Carlo simulation.

\paragraph{Setup.}
We performed $N=10,000$ independent trials across varying problem scales, specifically testing class sizes of $C \in \{10, 100, 200\}$. In each trial:
\begin{enumerate}
    \item \textbf{Dense Bias:} We generated a random transition matrix $Q^{\text{Bias}} \in \mathbb{R}^{C \times C}$ where $Q^{\text{Bias}}_{ij} \sim U[0,1]$ for $i \neq j$ and $Q^{\text{Bias}}_{ii}=0$. Rows were normalized to sum to 1.
    \item \textbf{Sparse Bias (Ours):} From $Q^{\text{Bias}}$, we derived a sparse matrix $Q^{\text{Ours}}$ by retaining $k$ ($k=4$) randomly selected elements per row and re-normalizing.
\end{enumerate}

\paragraph{Results.}
We computed the conditional entropy $H(Y \mid \bar{Y})$ for both matrices. The simulation revealed that:
\begin{equation}
    H^{\text{Ours}}(Y \mid \bar{Y}) \leq H^{\text{Bias}}(Y \mid \bar{Y})
    \nonumber
\end{equation}
holds in \textbf{100 percentage point} of the trials across all tested dimensions ($10 \times 10$, $100 \times 100$, and $200 \times 200$). This confirms that explicitly enforcing sparsity concentrates the probability mass, thereby minimizing supervision ambiguity regardless of the number of classes.

\section{Related Work}
\label{background}

\textbf{CL and MCL.}
The seminal work of \citep{ishida2017learning} introduced CLL under a uniform complementary-label assumption.
Later, \citep{ishida2019complementarylabel} proposed a general unbiased risk estimator along with non-negative and gradient-ascent variants to mitigate negative empirical risk and overfitting.
\citep{scl2020} proposed the surrogate complementary loss (SCL) framework that aligns empirical gradients with the true gradients to mitigate overfitting.
Besides complementary label (CL), multiple complementary-label learning (MCL) allows collecting several CLs per instance. 
\citep{mcl2020} formalized MCL and proposed an unbiased risk estimator to minimize all MCLs of each instance altogether.
A recent work connects CLL to other weakly supervised paradigms to improve representations \citep{ComCo2023}.

\textbf{Biased and noisy CL.}
\citep{fwd2018} modeled biased complementary-label generation with a transition matrix and introduced forward correction (FWD).
Subsequent studies targeted on the estimation of the transition matrix.
The Selected-Completely-at-Random (SCARCE) \cite{wang2024learning} removed the need for a uniform complementary-label distribution and avoided requiring an auxiliary ordinary-labeled set for transition-matrix estimation.
\citep{cpe2023} quantified the effect of an inaccurate transition matrix.
The complementary one-versus-rest (COVR) loss further addresses biased distributions by increasing the complementary logit margin to improve robustness \citep{covr_2024}.
\citep{ishiguro2022learning} developed a noise-robust CLL framework by modeling the noise via a noise transition matrix and analyzing consistency under matrix misspecification.

\textbf{Benchmarks, toolkits, and class-imbalance.}
\citep{wang2024climage} and~\cite{aclimage2025} released real-world datasets (CLImage and ACLImage), highlighting practical challenges such as noise, bias, and imbalanced annotations.
To improve reproducibility and fair comparison, \citep{libcll_2024} released \texttt{libcll}, a Python toolkit that standardizes datasets, assumptions, and algorithm implementations under a unified interface.
\citep{wei2023classimbalanced} addressed class-imbalanced training data via weighted empirical risk minimization and provided supporting generalization analysis.



\section{Further Analysis and Discussion}
\label{further_discussion}

\subsection{Limitations and Future Directions}
\label{limitations}
Our study points to two useful extensions. First, although we evaluate the proposed protocol across multiple VLMs, human-in-the-loop CL collection remains an important next step for understanding how human semantic ambiguity and labeling bias affect constrained labeling. Second, our $Q$-aware methods follow the standard CLL protocol of estimating $Q$ from a small clean seed set, typically 2--5 true-labeled examples per class. 
Reducing this dependence on ordinary labels, for example via CL-based $Q$ estimation or robust objectives, is a promising direction for future work.

\subsection{Conditional Entropy and Mutual Information of BICL datasets}
\label{sec:entroy_mutualinfor}
In this section, we provide a quantitative analysis of the supervision quality of our proposed datasets by calculating the conditional entropy $H(Y \mid \bar Y)$ and mutual information $I(Y; \bar{Y})$ based on its transition matrices. As discussed in our theoretical justification (Section~\ref{sec:intuitive_justification}), we hypothesize that these two properties may indicate improved model performance. We expect to design a data collection protocol obtaining minimum conditional entropy and maximum mutual information. Our observation (as shown in Tables~\ref{tab:entropy_1},~\ref{tab:entropy_2}) confirms that among the compared complementary label datasets, BICL (ours) achieves the lowest conditional entropy and the highest mutual information. This explains the better performance of BICL.

\begin{table*}[ht]
\centering
\caption{Noise characteristics and information-theoretic measures under different annotation settings on CIFAR-10 and CIFAR-20.}
\label{tab:entropy_1}
\resizebox{\textwidth}{!}{
\small
\setlength{\tabcolsep}{4pt}
\renewcommand{\arraystretch}{0.9}
\begin{tabular}{l|cccc|cccc}
\toprule
\textbf{Datasets} 
& \multicolumn{4}{c|}{\textbf{CIFAR-10}} 
& \multicolumn{4}{c}{\textbf{CIFAR-20}} \\
\midrule
\textbf{Metric}
& Uniform & CLImage & ACLImage & \textbf{BICL}
& Uniform & CLImage & ACLImage & \textbf{BICL} \\
\midrule

$H(Y \mid \bar{Y})$ 
& 3.1688 & 3.2410 & 3.0191 & \textbf{2.1999} 
& 4.2426 & 4.2692 & 4.1996 & \textbf{3.6177} \\
$I(Y; \bar{Y})$ 
& 0.1531 & 0.0809 & 0.3028 & \textbf{1.1219} 
& 0.0793 & 0.0528 & 0.1224 & \textbf{0.7042} \\
\midrule
Noise ratio (\%)
& 0.0 & 3.93 & 0.24 & 0.23 
& 0.0 & 2.80 & 0.89 & 0.80 \\
\bottomrule
\end{tabular}
}
\begin{tablenotes}
\item \textbf{Note}: In our setting, a better complementary label dataset should yield lower conditional entropy $H(Y \mid \bar{Y})$ and higher mutual information $I(Y; \bar{Y})$.
\end{tablenotes}
\end{table*}

\begin{table*}[ht]
\centering
\caption{Noise characteristics and information-theoretic measures on CIFAR-100 and TinyImageNet-200.}
\label{tab:entropy_2}
\small
\setlength{\tabcolsep}{6pt}
\renewcommand{\arraystretch}{0.9}
\begin{tabular}{l|cc|cc}
\toprule
\textbf{Datasets} 
& \multicolumn{2}{c|}{\textbf{CIFAR-100}} 
& \multicolumn{2}{c}{\textbf{TinyImageNet-200}} \\
\midrule
\textbf{Metric}
& Uniform & \textbf{BICL}
& Uniform & \textbf{BICL} \\
\midrule

$H(Y \mid \bar Y)$ 
& 6.4640 & \textbf{4.5727} 
& 7.2824 & \textbf{5.8390} \\

$I(Y; \bar{Y})$ 
& 0.1798 & \textbf{2.0711 }
& 0.3615 & \textbf{1.8048} \\
\midrule
Noise ratio (\%)
& 0.0 & 0.026 
& 0.0 & 0.006 \\
\bottomrule
\end{tabular}
\begin{tablenotes}
    \item \textbf{Note}: CIFAR-100 and TinyImageNet-200 are not available for CLImage and ACLImage. Most prior work on CLL has focused on datasets with at most 20 classes.
\end{tablenotes}
\end{table*}

\subsection{Ablation on the Source of Improvement in BICL: Label Selection Bias vs. VLM Prior Knowledge}
\label{source_bias}

To examine where the resource of improvement in BICL comes from the biased transition or the prior knowledge of the VLM as annotator. We conduct an ablation study using two types of annotators:

\begin{enumerate}
    \item \textbf{VLM annotator}: Which are provided in Appendix~\ref{number_sampled_labels} “Effect of the Number of Sampled Labels”. We selected 4 candidate labels as CLs for each class. Take note that the Appendix~\ref{number_sampled_labels} is also discard the true label from candidate labels.
    \item \textbf{A rule-based annotator}: discard the true label (reduce the candidate set to 4 classes), and then uniformly select one from the remaining 4 (all 4 are CLs) classes. We can do so since in Figure~\ref{fig:1}, we have the true class.
\end{enumerate}
\begin{table}[ht]
\centering
\caption{Comparison between the VLM annotator and the rule-based annotator on CIFAR-20 (accuracy (\%), mean$\pm$std).}
\label{tab:annotator_ablation}
\vspace{4pt}
\begin{tabular}{l|l|l|c}
\toprule
\textbf{Annotator} & \textbf{Method} & \textbf{Dataset} & \textbf{Accuracy (\%)} \\
\midrule
\multirow{2}{*}{(1) VLM annotator}
& FWD   & CIFAR-20 & 68.89\scriptsize{$\pm$0.87} \\
& CPE-F & CIFAR-20 & 71.13\scriptsize{$\pm$1.00} \\
\midrule
\multirow{2}{*}{(2) A rule-based annotator}
& FWD   & CIFAR-20 & 69.80\scriptsize{$\pm$0.79} \\
& CPE-F & CIFAR-20 & 70.29\scriptsize{$\pm$1.24} \\
\bottomrule
\end{tabular}
\end{table}

The results of (1) and (2) (as shown in Table~\ref{tab:annotator_ablation}) are similar, suggesting that the selection bias introduced by the VLM itself is not the \textbf{main factor} driving the performance gain. Instead, the results support our claim that the improvement primarily comes from the biased transition structure. This is also because what we obtained are instance-dependent CLs, the global unique $Q$ is just a model of what we obtained and cannot reflect all the features of those CLs. We need to show that, as long as $Q$ is the same, the improvement is guaranteed no matter who the annotator is.

\subsection{Effect of Biased Data Collection on Minority-Class Performance}
\label{minority_class_analysis}

To better understand the effect of the imbalance induced by BICL, we conduct an ablation study to compare the per-class accuracy under a uniform complementary-label distribution, where each complementary class is balanced, with that under the BICL distribution, as shown in the following Figure~\ref{fig:minority_class}.

\begin{figure}[htb]
  \centering
  \includegraphics[width=0.4\textwidth]{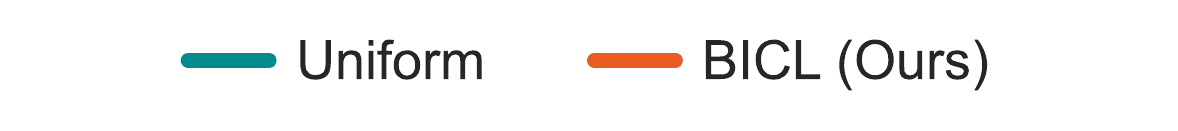} \\
  \vspace{0.05cm} 
  \begin{subfigure}[b]{0.34\textwidth}
    \includegraphics[width=\textwidth]{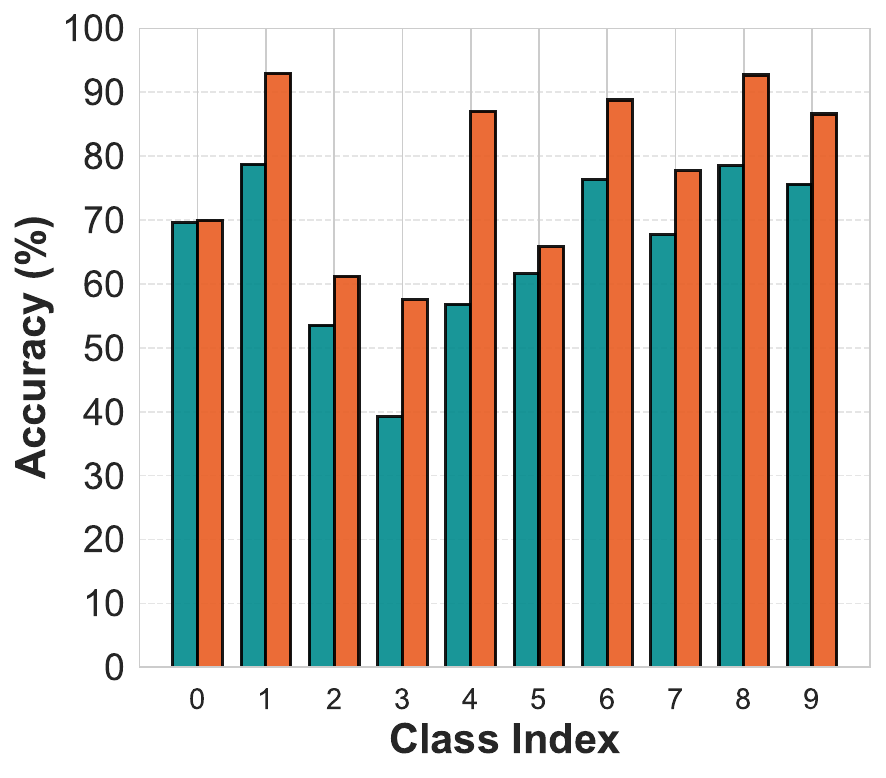}
    \caption{CIFAR-10}
    \label{fig:minority_1}
  \end{subfigure}
  \begin{subfigure}[b]{0.57\textwidth}
    \includegraphics[width=\textwidth]{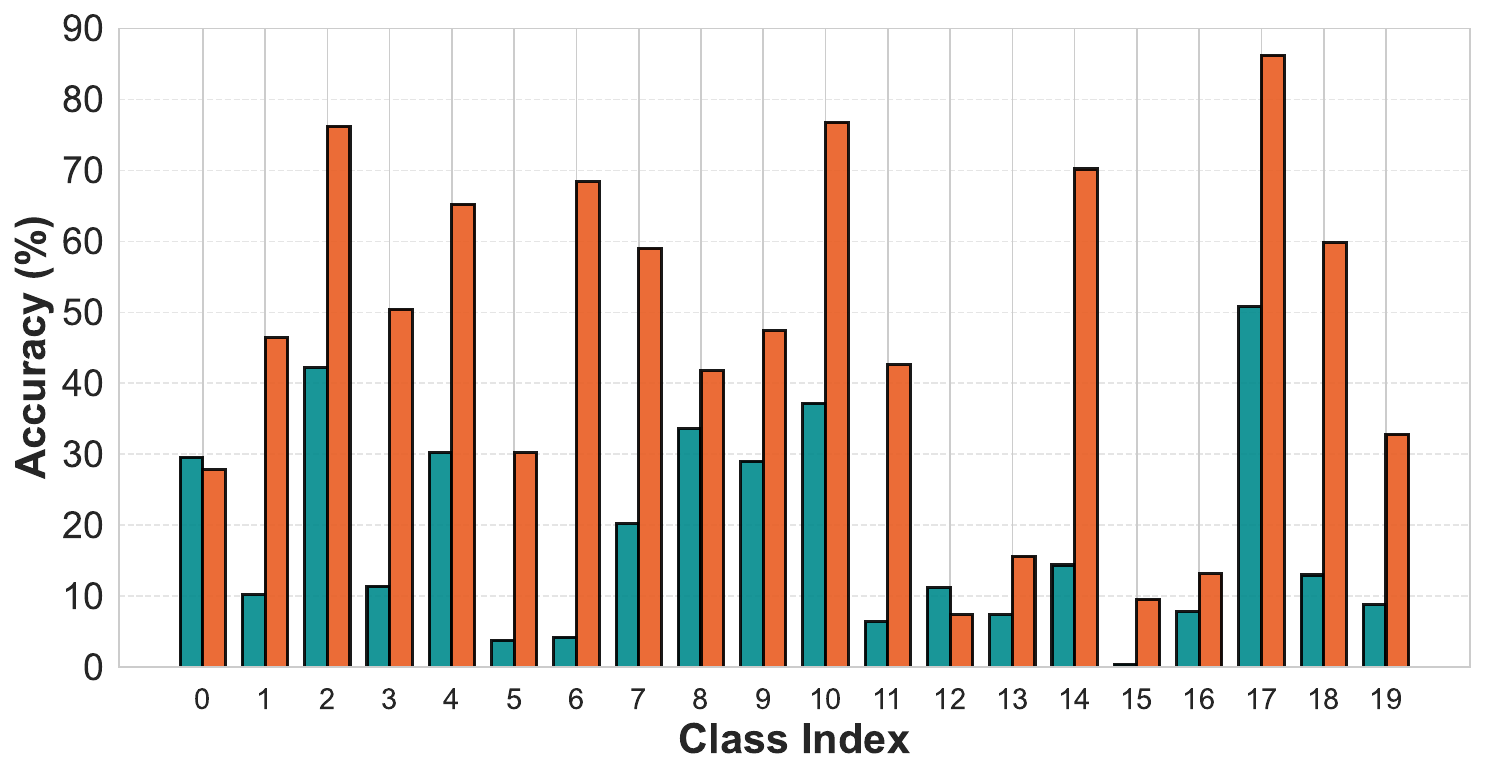}
    \caption{CIFAR-20}
    \label{fig:minority_2}
  \end{subfigure}
\caption{Per-class accuracy comparison between the uniform complementary-label distribution and the BICL distribution on CIFAR-10 (left) and CIFAR-20 (right) under the FWD algorithm. Although BICL induces an imbalanced complementary-label distribution, it does not degrade performance on minority classes; instead, it often achieves better minority-class performance than the uniform distribution.}
  \label{fig:minority_class}
\end{figure}

These results suggest that the imbalanceness in CLs generation process in BICL does not affect performance; in fact, it can even lead to better performance. This highlights that the imbalance in CLL, especially in the complementary-label space, behaves differently from class imbalance in ordinary label learning, where minority classes are often severely affected.

\subsection{The Optimal of Number Cluster}
\label{number_cluster}

In the label selection phase, the granularity of $K$-means clustering, determined by the number of clusters $Z_c$, is a key hyperparameter for grouping visually similar images prior to VLM annotation. 
We conduct an ablation study to examine the sensitivity of BICL to the number of clusters used in the label-selection stage (Figure~\ref{fig:2}). Specifically, we vary the cluster count while keeping all other components of BICL unchanged. For each setting, the entire BICL pipeline is rerun, including feature clustering and re-annotation of CLs by the VLM under the newly induced candidate label sets. Although this procedure is computationally expensive, it allows for a controlled evaluation of how the cluster granularity affects performance.

We explore cluster sizes ranging from $|C|$ to $4|C|$, where $|C|$ denotes the number of classes in the dataset. The results in Table~\ref{tab:num_cluster} show that setting the number of clusters equal to the class cardinality ($|C|$) yields the best or near-best performance across all four datasets. Increasing the number of clusters beyond $|C|$ consistently degrades performance, particularly on large-scale datasets such as CIFAR-100 and TinyImageNet-200.

This trend can be attributed to the trade-off between semantic coherence and granularity in the clustering process. When the number of clusters matches the number of classes, clusters tend to capture class-level structure, leading to more informative and stable candidate label sets for complementary labeling. In contrast, using a larger number of clusters fragments semantically similar samples into smaller groups, increasing annotation noise and reducing the effectiveness of the induced bias. From a practical perspective, choosing $|C|$ clusters also offers a favorable balance between accuracy and computational cost, making it a natural and robust default choice for BICL.

\begin{table*}[ht]
\centering
\caption{Impact of cluster size on model performance across four datasets. $|C|$ denotes the original number of classes in each dataset.}
\label{tab:num_cluster}
\setlength{\tabcolsep}{6pt}
\renewcommand{\arraystretch}{0.9}
\begin{tabular}{l|ccccc}
\toprule
\textbf{Dataset} & $|C|$ & $1.5|C|$ & $2|C|$ & $3|C|$ & $4|C|$ \\ \midrule
CIFAR-10  & \textbf{81.23}{\tiny $\pm$0.05} & 80.80{\tiny $\pm$0.39} & 78.68{\tiny $\pm$0.46} & 79.01{\tiny $\pm$0.18} & 77.34{\tiny $\pm$0.26} \\
CIFAR-20  & 50.84{\tiny $\pm$0.32} & \textbf{51.22}{\tiny $\pm$0.25} & 50.52{\tiny $\pm$0.11} & 47.95{\tiny $\pm$0.28} & 47.52{\tiny $\pm$0.37} \\
CIFAR-100 & \textbf{46.70}{\tiny $\pm$0.60} & 36.98{\tiny $\pm$0.40} & 36.72{\tiny $\pm$0.33} & 36.25{\tiny $\pm$0.17} & 34.15{\tiny $\pm$0.64} \\
TinyImageNet-200  & \textbf{32.15}{\tiny $\pm$0.30} & 31.79{\tiny $\pm$0.23} & 32.10{\tiny $\pm$0.23} & 30.78{\tiny $\pm$0.40} & 30.45{\tiny $\pm$0.31} \\ 
\bottomrule
\end{tabular}
\end{table*}

\subsection{Effect of the Number of Sampled Labels}
\label{number_sampled_labels}

We conduct an ablation study to investigate how the number of sampled candidate labels affects the performance of BICL and to justify our choice of using a fixed number of four labels per instance for VLM annotation. Specifically, we vary the number of sampled labels $n$ from 2 to 10 for each class.
Similar to the ablation in Section~\ref{number_cluster}, the entire BICL pipeline is rerun for each setting: we resample the candidate label sets, request the VLM to re-annotate CLs under the new label space, and retrain the downstream classifier. Although this procedure is computationally expensive, it enables a controlled evaluation of the impact of label-space size on complementary-label quality and downstream performance.

\begin{figure}[ht]
  \centering
  \includegraphics[width=0.38\textwidth]{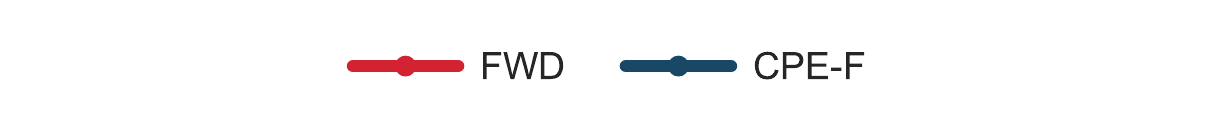} \\
  \vspace{0.05cm} 
  \begin{subfigure}[b]{0.245\textwidth}
    \includegraphics[width=\textwidth]{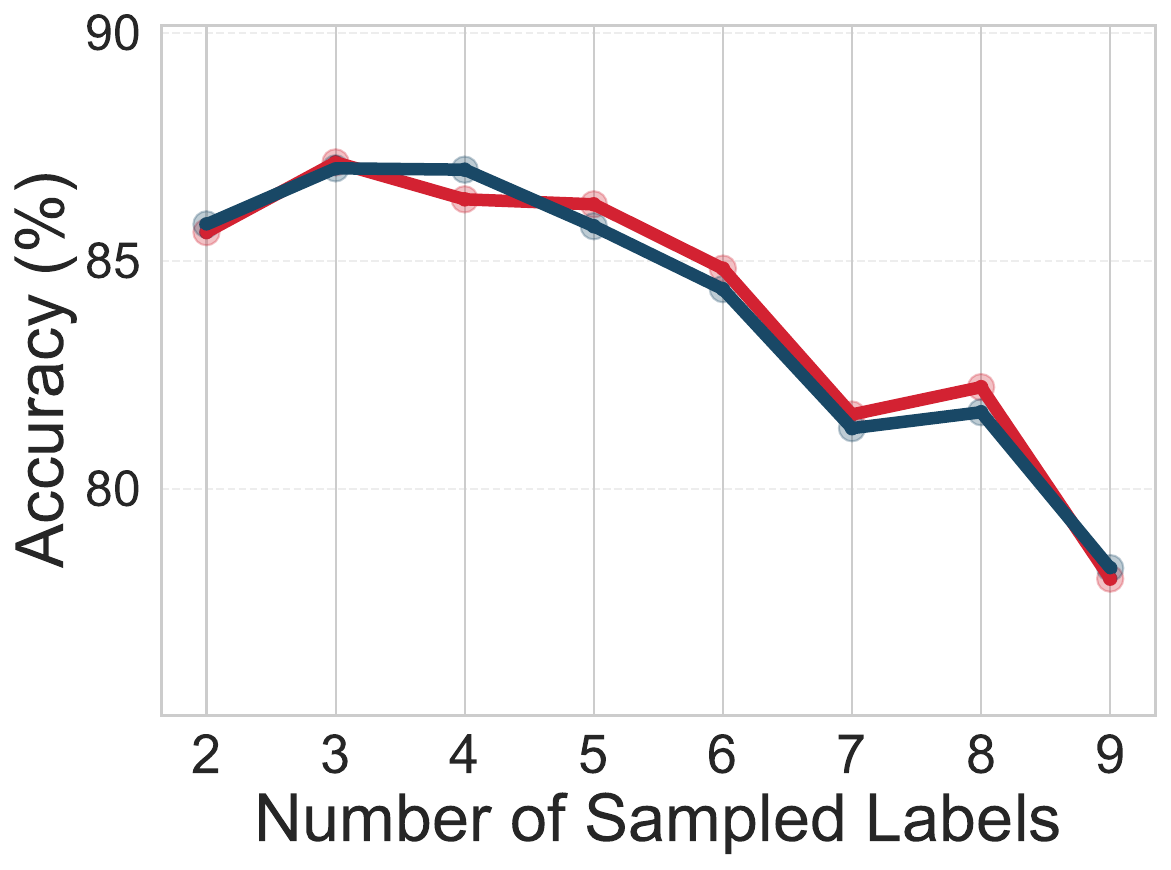}
    \caption{CIFAR-10}
  \end{subfigure}
  \begin{subfigure}[b]{0.236\textwidth}
    \includegraphics[width=\textwidth]{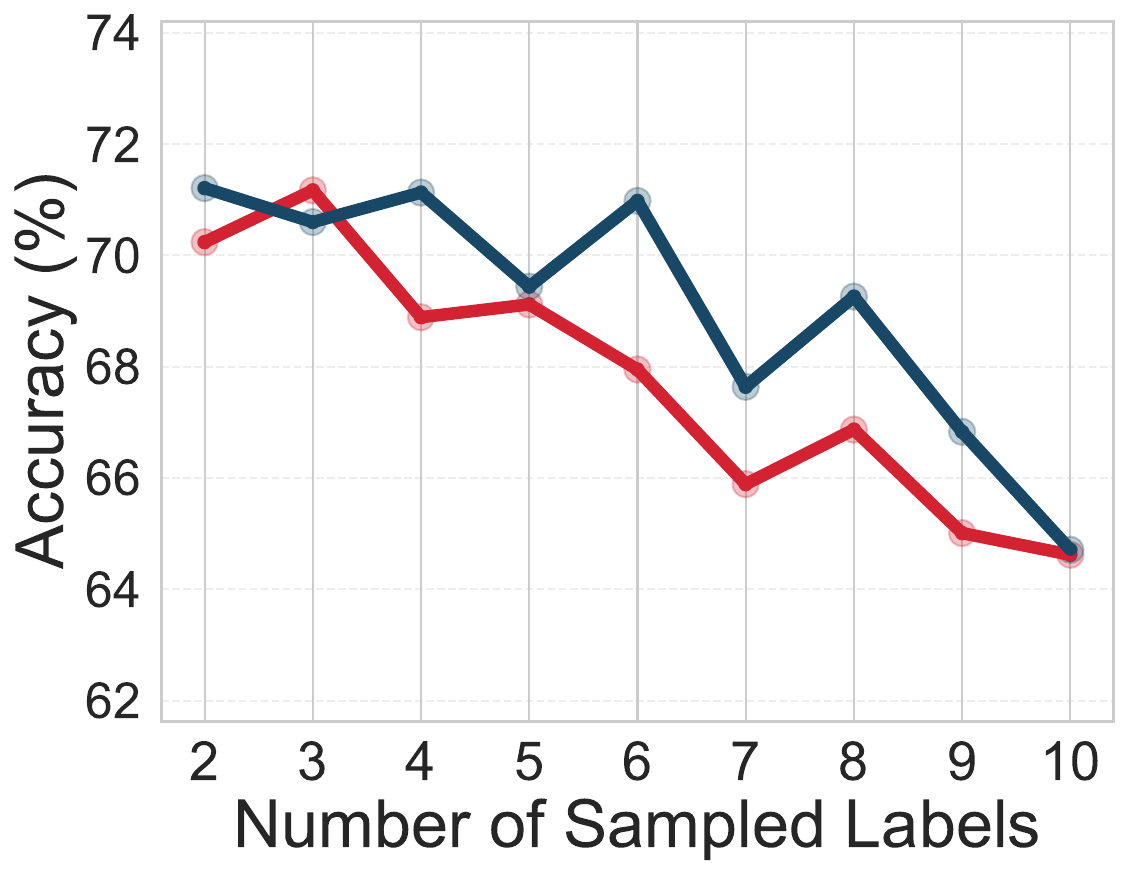}
    \caption{CIFAR-20}
  \end{subfigure}
  \begin{subfigure}[b]{0.245\textwidth}
    \includegraphics[width=\textwidth]{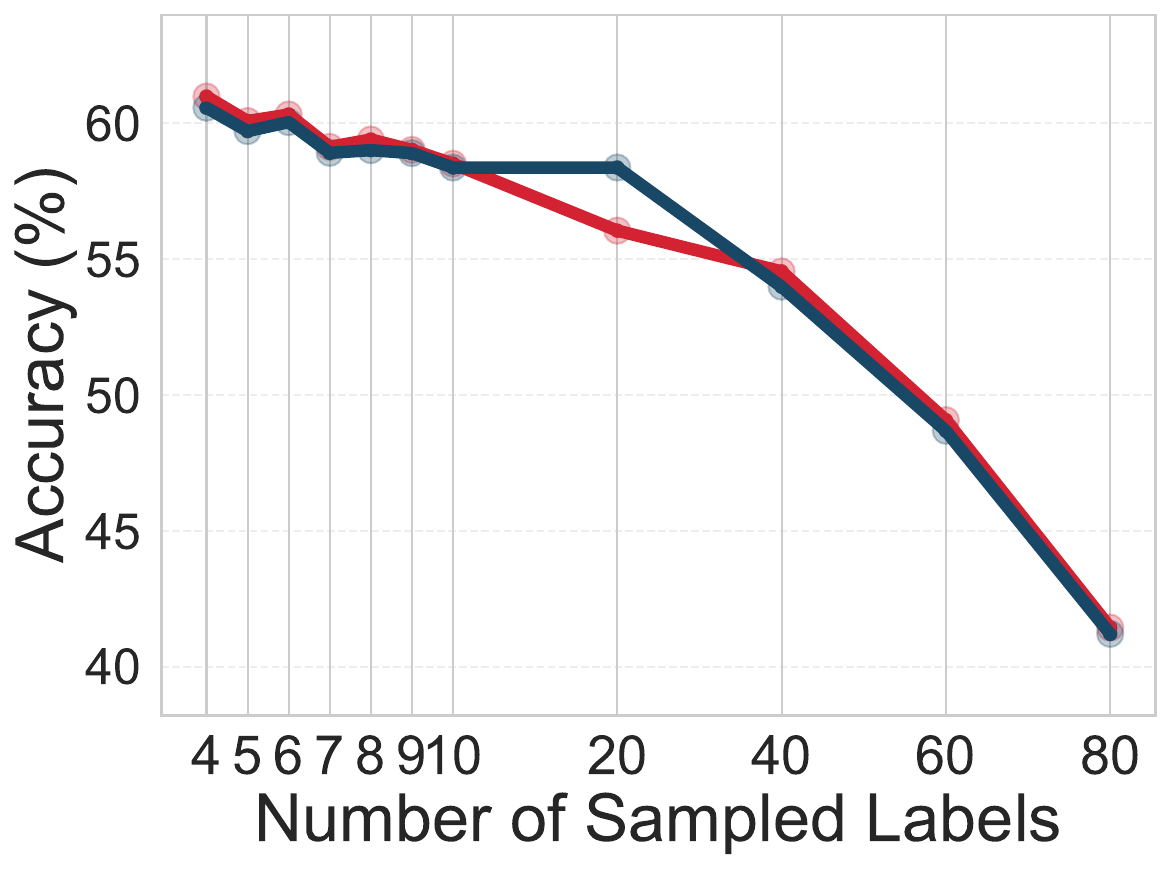}
    \caption{CIFAR-100}
  \end{subfigure}
  \begin{subfigure}[b]{0.245\textwidth}
    \includegraphics[width=\textwidth]{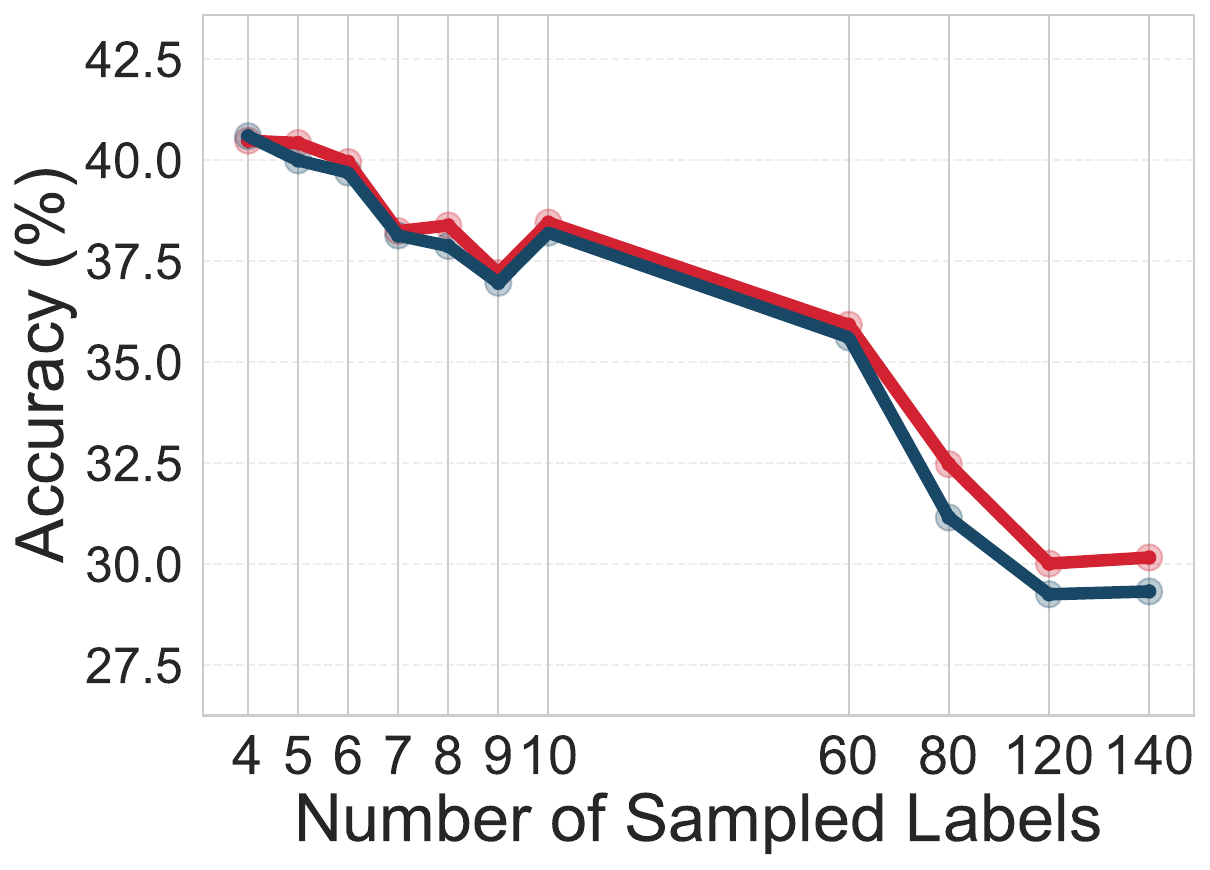}
    \caption{TinyImageNet-200}
  \end{subfigure}  \caption{Performance comparison of different number of sampled label with BICL.}
\label{fig:sampled_label}
\end{figure}

The results, illustrated in Figure~\ref{fig:sampled_label}, show that sampling a small number of labels ($n=3$ or $n=4$) consistently yields the best or near-best performance across all datasets and algorithms. In contrast, increasing $n$ beyond this range leads to a steady degradation in accuracy. This trend becomes particularly pronounced as $n$ approaches the Uniform setting, where the full label space is used. These findings help explain why the Uniform assumption performs poorly in practice: presenting too many candidate labels increases ambiguity for the VLM, resulting in noisier and less informative complementary supervision.
Between $n=3$ and $n=4$, we adopt $n=4$ in our main experiments. This choice follows prior work~\cite{aclimage2025,wang2024climage,wei2022learning}, which also uses four candidate labels per instance, ensuring a fair comparison. Moreover, $n=4$ provides a robust balance between reducing label uncertainty and maintaining sufficient diversity in the candidate set, making it a practical and effective default for BICL.

\subsection{Statistical Significance of Experimental Results}
\label{statistical}
To verify the performance improvements of our proposed method BICL against uniform baseline assumption, we conducted paired t-tests. We performed this analysis across four representative algorithms: FWD, CPE-F, CPE-T, CPE-I for each dataset.

\begin{table}[ht]
    \centering
    \caption{Statistical significance ($p$-values) of the performance improvements of BICL compared to the baseline across datasets. The $p$-values are calculated using a one-tailed paired t-test with 3 different random seeds. In all cases, $p < 0.05$, confirming the significance of the results.}
    \label{tab:significance_test}
    \vspace{5pt}
    \resizebox{0.8\textwidth}{!}{
    \begin{tabular}{l|cccc}
        \toprule
        \textbf{Algorithm} & \textbf{CIFAR-10} & \textbf{CIFAR-20} & \textbf{CIFAR-100} & \textbf{TinyImageNet-200} \\
        \midrule
        FWD   & $8.13 \times 10^{-4}$ & $1.41 \times 10^{-4}$ & $3.42 \times 10^{-4}$ & $3.59 \times 10^{-4}$ \\
        CPE-F & $1.9 \times 10^{-3}$ & $9.84 \times 10^{-5}$ & $3.36 \times 10^{-4}$ & $3.75 \times 10^{-4}$ \\
        CPE-I & $3.13 \times 10^{-3}$ & $3.84 \times 10^{-5}$ & $4.32 \times 10^{-5}$ & $3.59 \times 10^{-5}$ \\
        CPE-T & $3.88 \times 10^{-2}$ & $3.82 \times 10^{-4}$ & $6.18 \times 10^{-4}$ & $4.74 \times 10^{-4}$ \\
        \bottomrule
    \end{tabular}
    }
\end{table}

The experiments were repeated three times with different random seeds. For the significant test, we adopted a confidence threshold $\alpha = 0.05$. We setup all the hypotheses as follow:
\begin{itemize}
    \item \textbf{Null Hypothesis ($H_0$):} The mean accuracy difference between BICL and the baseline - uniform distribution - is less than or equal to zero.
    \item \textbf{Alternative Hypothesis:} BICL achieves higher  accuracy mean than the baseline.
\end{itemize}
As shown in Table \ref{tab:significance_test}, the calculated $p-$values from all evaluated settings are substantially lower then $0.05$. Base on this result, we firmly reject the null hypothesis in favor of the alternative hypothesis. This statistically analysis confirms that the performance gains achieved by BICL reported in Section~\ref{sec:6} of the main text are significant.

\subsection{Extended BICL Dataset Analysis}
\label{extended_bicl_dataset}
        
In this section, we will provide additional information about the properties of all remaining BICL datasets. We validate that the three key characteristics observed in CIFAR-10, biased transitions, label imbalance, and low noise rates, are consistent across datasets with varying numbers of classes and complexity.

\begin{figure*}[htb]
  \centering
  \begin{subfigure}[b]{0.24\textwidth}
\includegraphics[width=\textwidth]{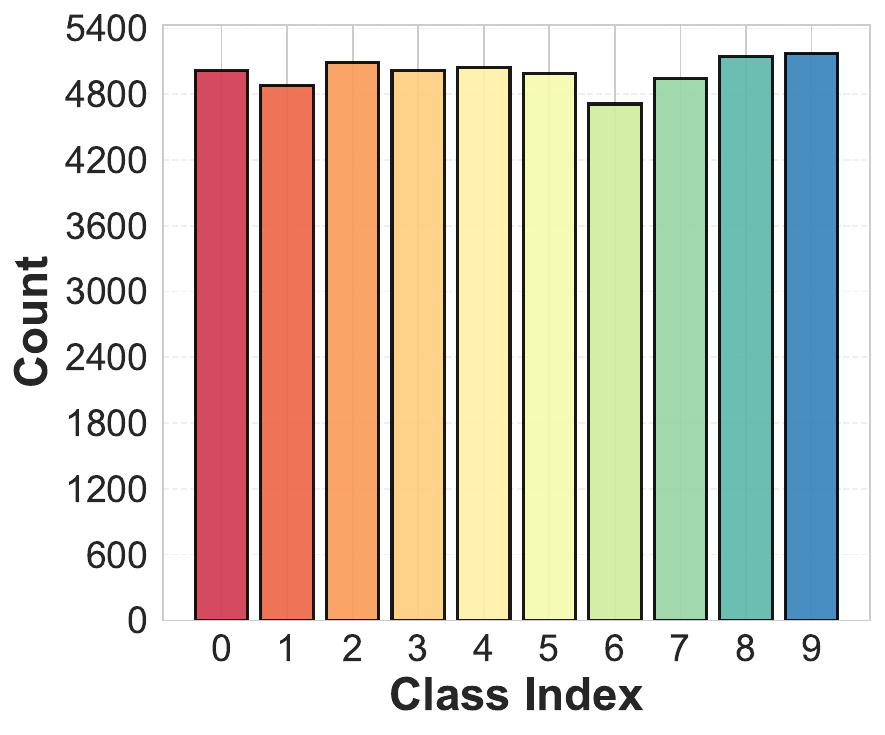}
    \caption{CIFAR-10}
  \end{subfigure}
  \begin{subfigure}[b]{0.24\textwidth}
\includegraphics[width=\textwidth]{Sections/Figures/dist_cifar10_random.pdf}
    \caption{CLCIFAR-10}
  \end{subfigure}
  \begin{subfigure}[b]{0.24\textwidth}
\includegraphics[width=\textwidth]{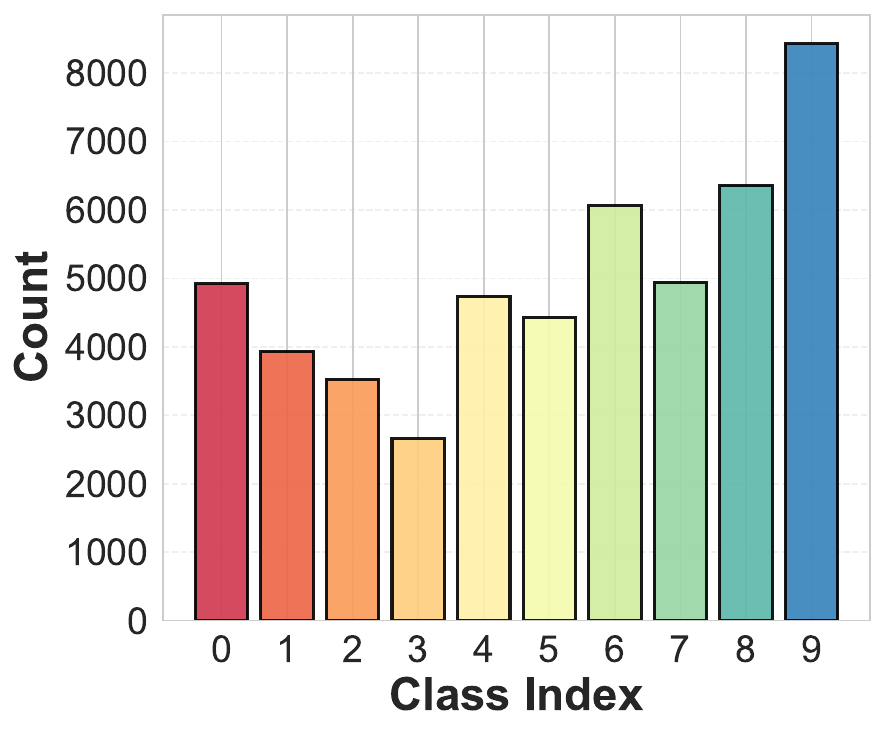}
    \caption{ACLCIFAR-10}
  \end{subfigure}
  \begin{subfigure}[b]{0.24\textwidth}
\includegraphics[width=\textwidth]{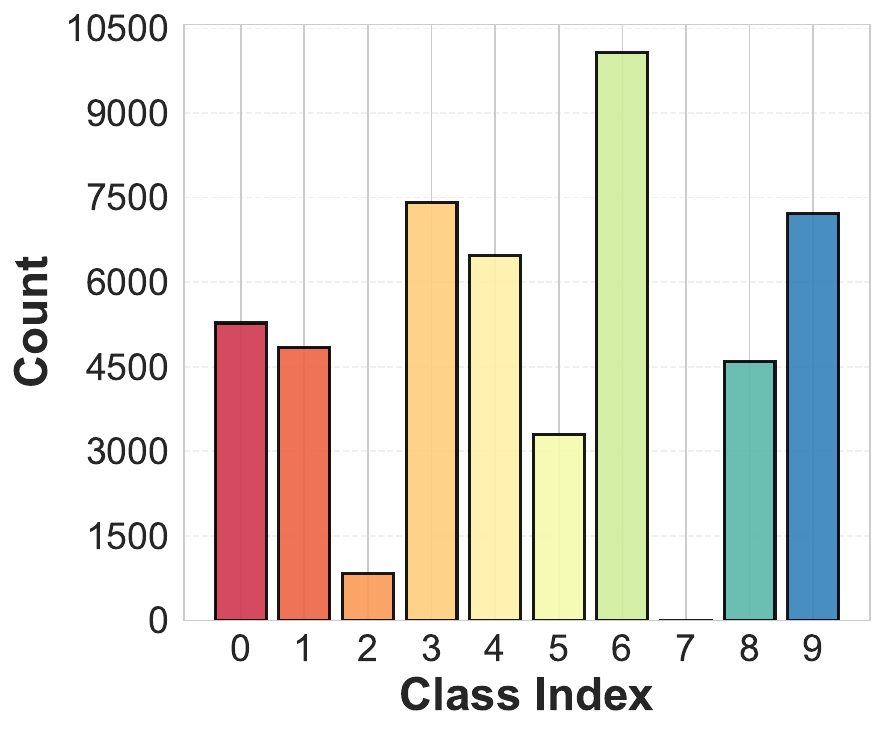}
    \caption{BCLCIFAR-10}
  \end{subfigure}
  \caption{\textbf{Label distributions across CIFAR-10 variants}.
CIFAR-10 represents an idealized setting with noiseless, uniformly distributed labels. CLCIFAR-10 corresponds to a human-annotated and ACLCIFAR-10 is VLM-annotated under a uniform distribution design.}
  \label{fig:label-dist-cl10}
\end{figure*}

\begin{figure}[htb]
  \centering
  \begin{subfigure}[b]{0.24\textwidth}
\includegraphics[width=\textwidth]{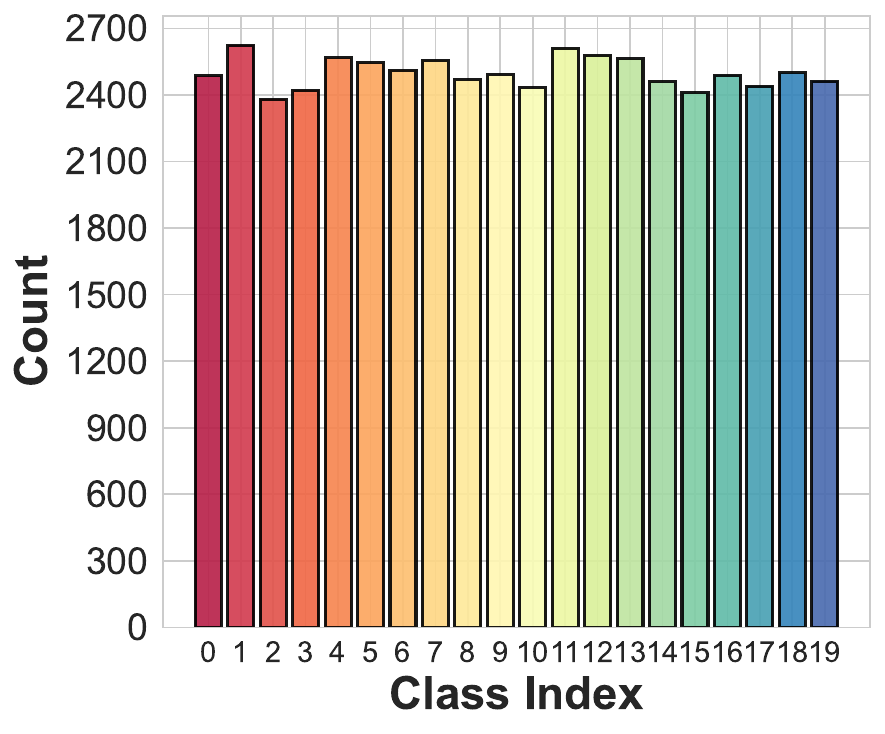}
    \caption{CIFAR-20}
  \end{subfigure}
  \begin{subfigure}[b]{0.24\textwidth}
\includegraphics[width=\textwidth]{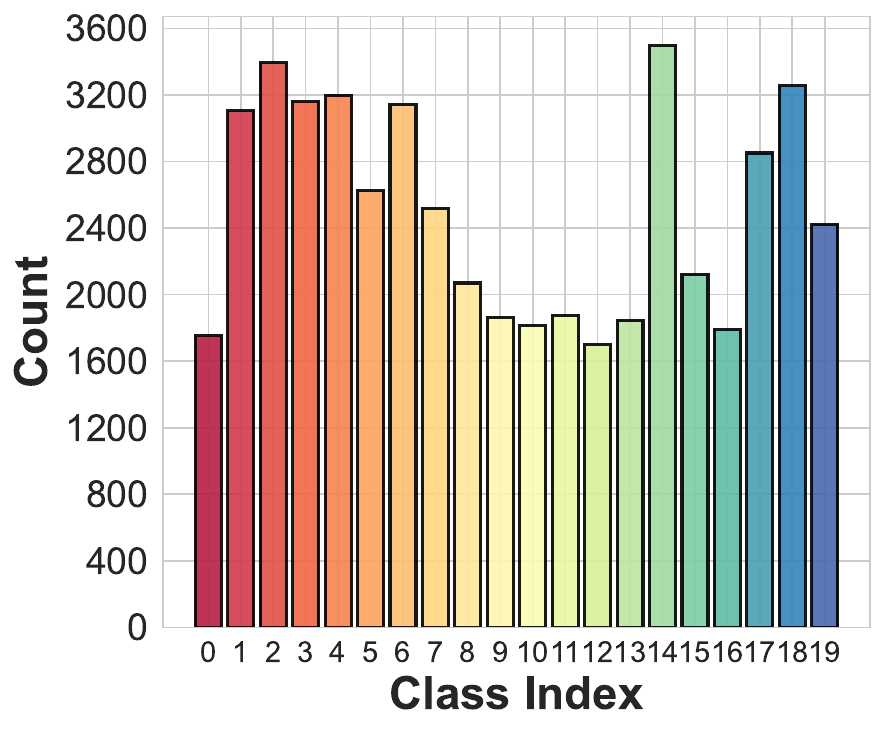}
    \caption{CLCIFAR-20}
  \end{subfigure}
  \begin{subfigure}[b]{0.24\textwidth}
\includegraphics[width=\textwidth]{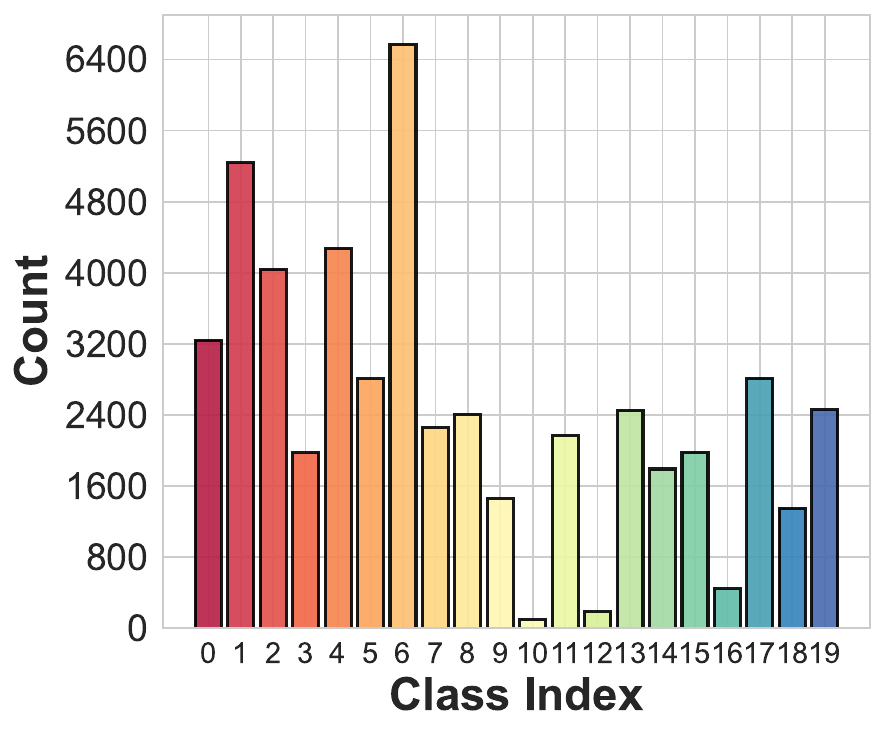}
    \caption{ACLCIFAR-20}
  \end{subfigure}
  \begin{subfigure}[b]{0.24\textwidth}
\includegraphics[width=\textwidth]{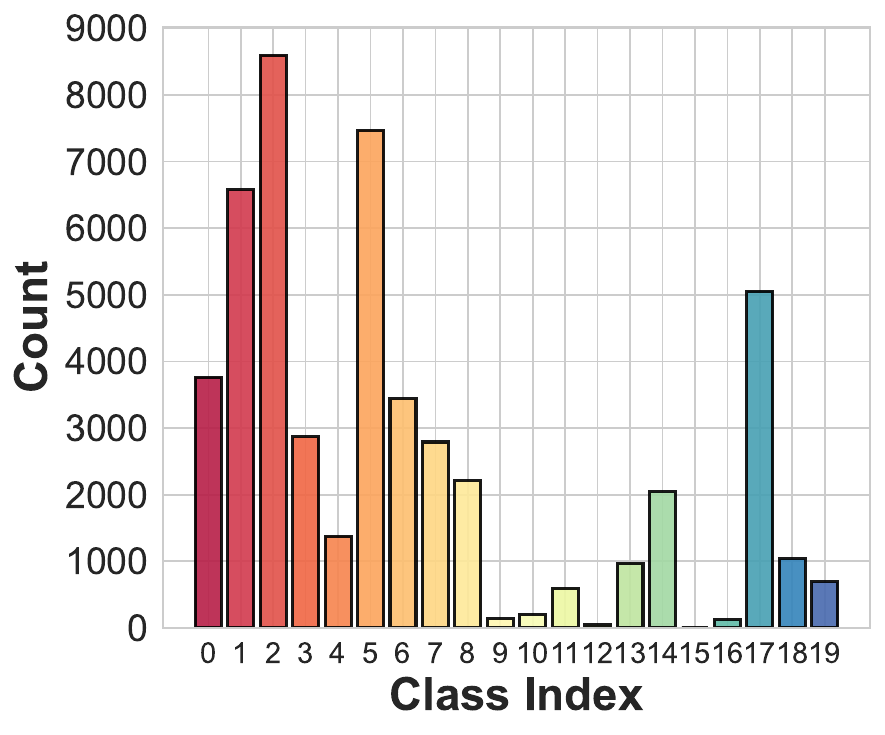}
    \caption{BCLCIFAR-20}
  \end{subfigure}
  \caption{Label distributions across CIFAR-20 variants.}
  \label{fig:label-dist-cl20}
\end{figure}

As illustrated in Figure~\ref{fig:transition-matrix-cifar20}, Figure~\ref{fig:transition-matrix-cl100} the transition matrix for BCLCIFAR-20, BCLCIFAR-100 exhibit distinct, non-uniform patterns similar to the BCLCIFAR-10 results in the main text. This resulted from our design where we deliberately constrain the labeling space combine with innate inductive biases characteristic of VLMs.

\begin{figure}[htb]
  \centering
  \begin{subfigure}[b]{0.48\textwidth}
\includegraphics[width=\textwidth]{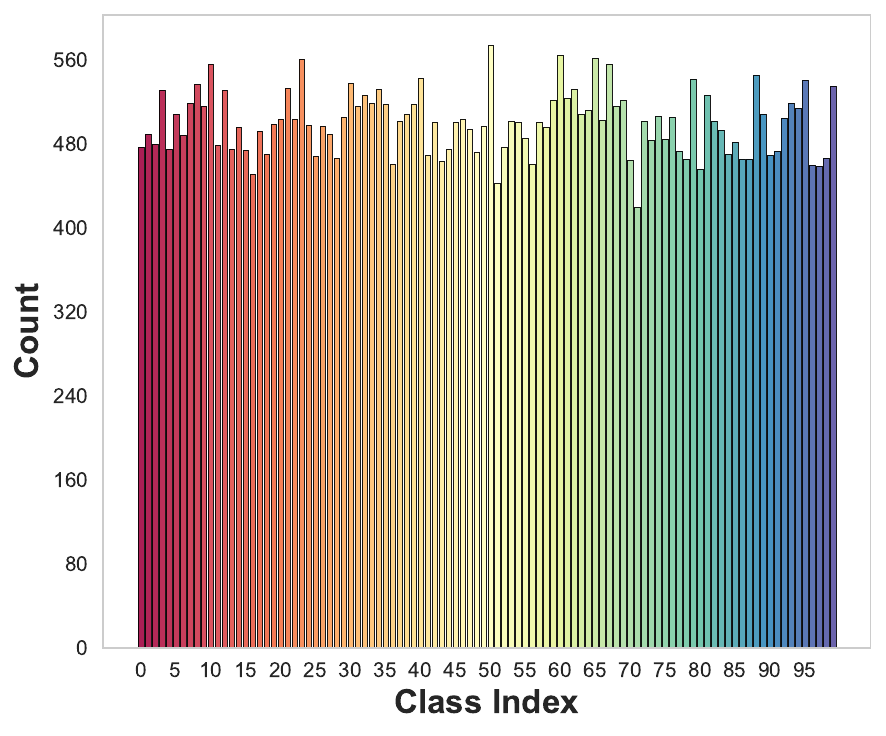}
    \caption{CIFAR-100}
  \end{subfigure}
  \begin{subfigure}[b]{0.48\textwidth}
\includegraphics[width=\textwidth]{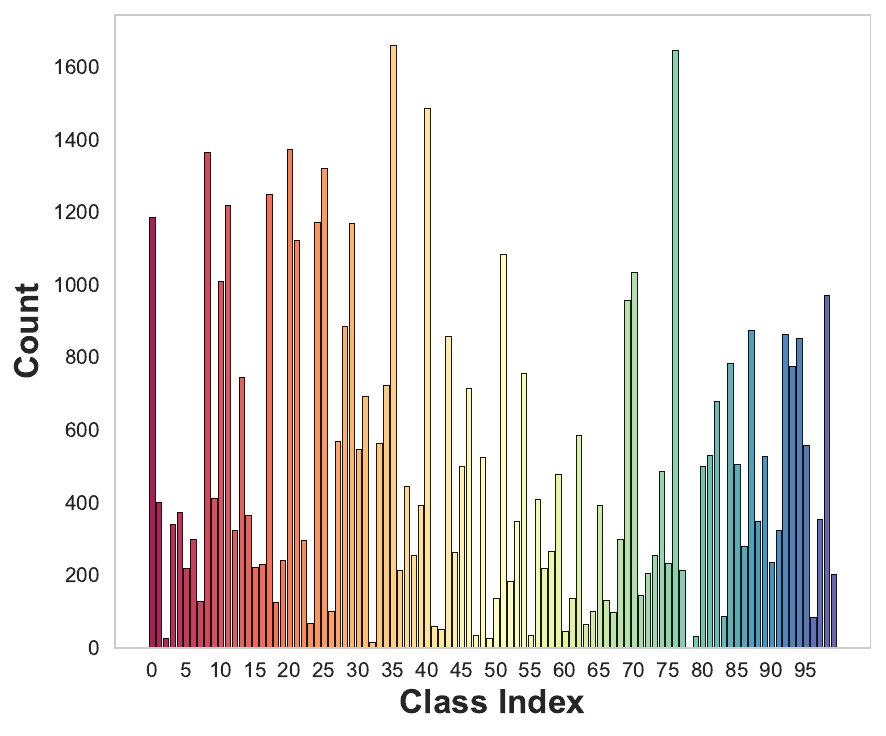}
    \caption{BCLCIFAR-100}
  \end{subfigure}
  \caption{Label distributions across CIFAR-100 variants.}
  \label{fig:label-dist-cl100}
\end{figure}

\begin{figure}[ht]
  \centering
  \begin{subfigure}[htb]{0.45\textwidth}
    \includegraphics[width=\textwidth]{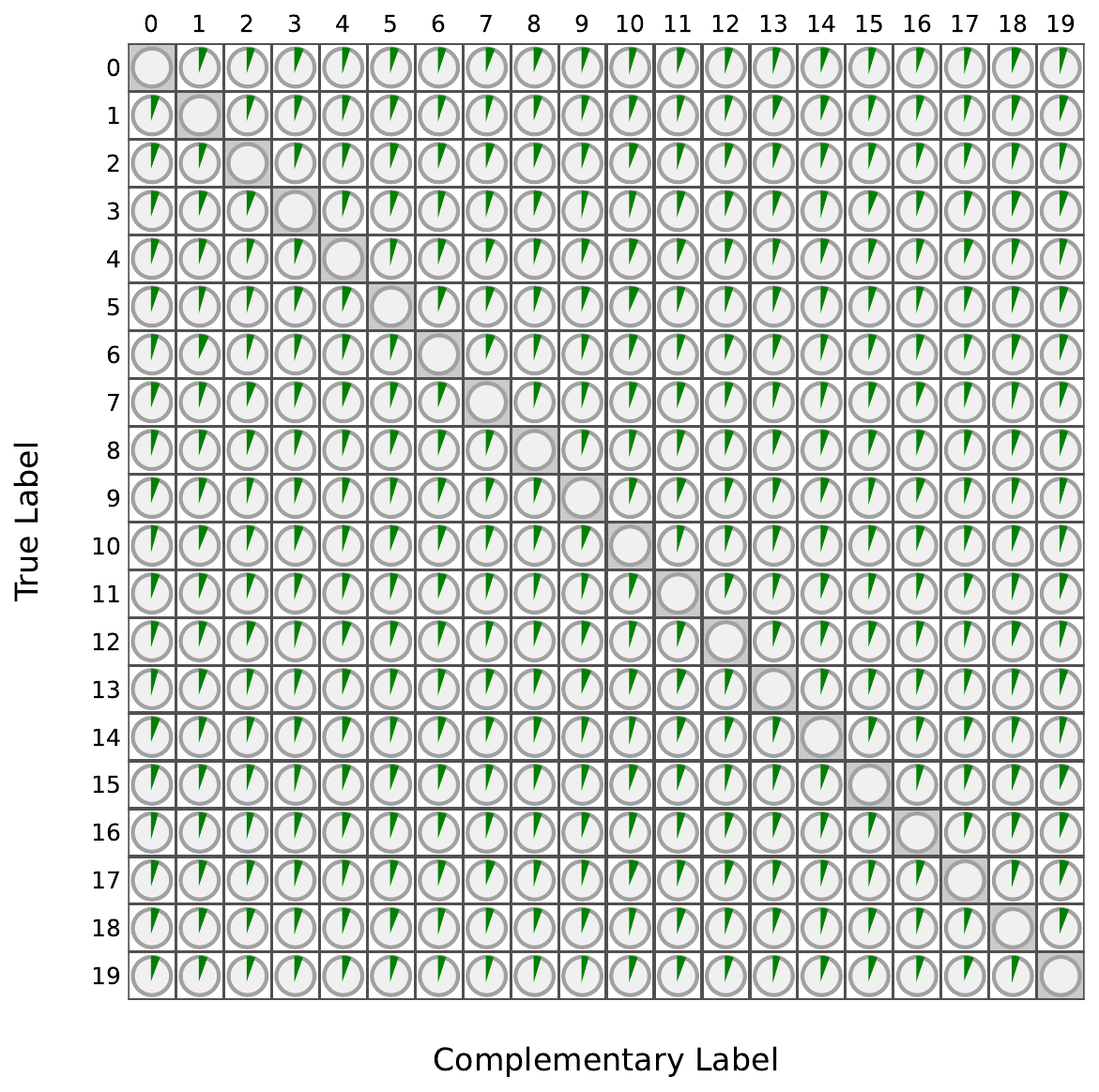}
    \caption{CIFAR-20}
  \end{subfigure}
  \begin{subfigure}[htb]{0.45\textwidth}
    \includegraphics[width=\textwidth]{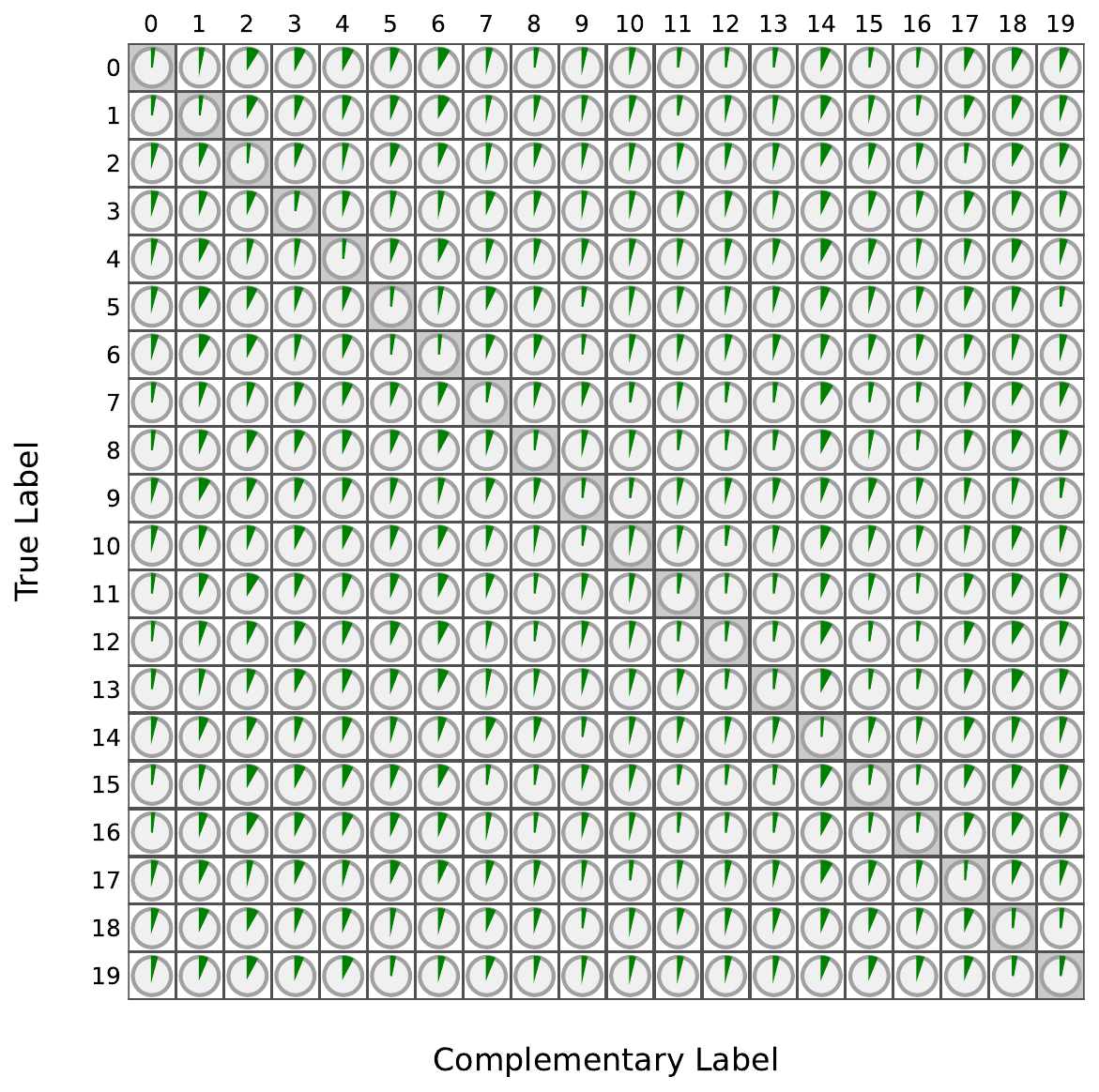}
    \caption{CLCIFAR-20}
  \end{subfigure}
  \begin{subfigure}[htb]{0.45\textwidth}
    \includegraphics[width=\textwidth]{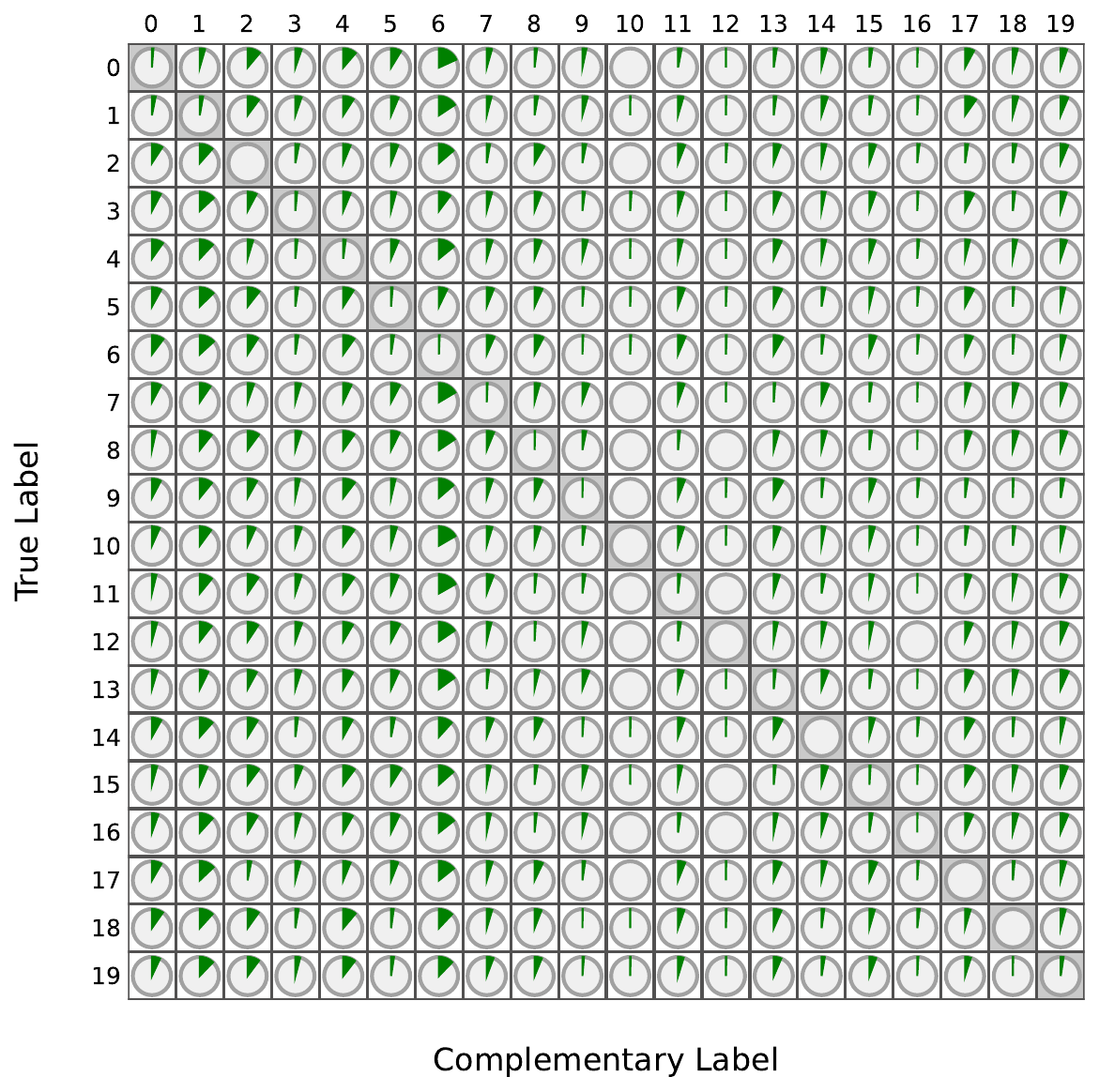}
    \caption{ACLCIFAR-20}
  \end{subfigure}
  \begin{subfigure}[htb]{0.45\textwidth}
    \includegraphics[width=\textwidth]{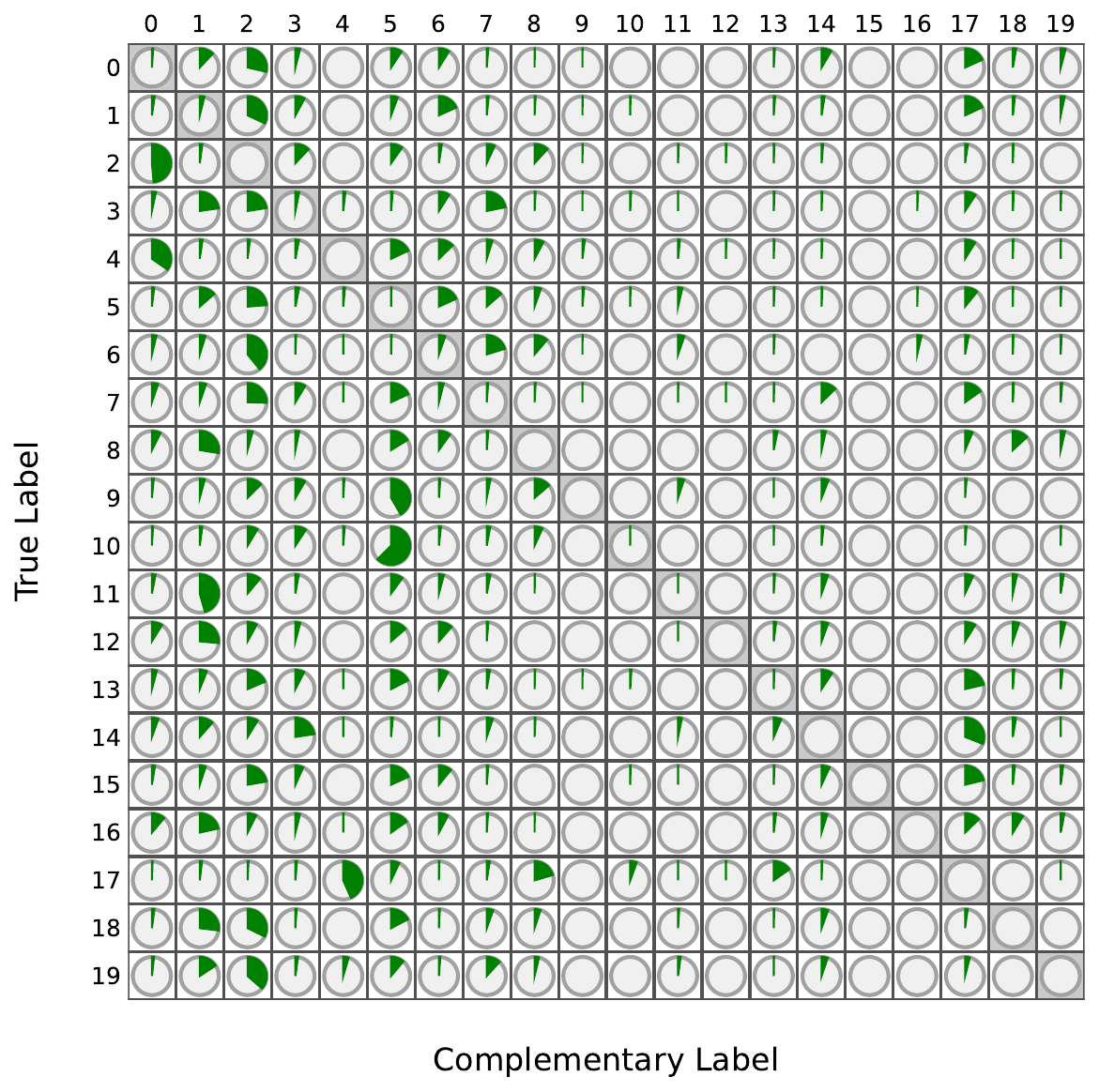}
    \caption{BCLCIFAR-20}
  \end{subfigure}
  \caption{Bias-Induced Constrained Labeling transition matrix on CIFAR-20 variants.}
  \label{fig:transition-matrix-cifar20}
\end{figure}

The imbalance label distribution characteristic of our generated datasets induced by our method is also present through out.

\textbf{Low label noise rate.}
Despite its pronounced bias and imbalance, our dataset maintains a low label noise rate. Specifically, the average noise rate of BCLCIFAR-10 is 0.23 percentage points, which is comparable to that of ACLCIFAR-10 (0.24 percentage points) and substantially lower than that of the human-annotated CLCIFAR-10 dataset (3.93 percentage points). This result suggests that VLM-based annotation, when combined with a controlled bias design, can achieve favorable noise characteristics while enabling structured complementary-label generation.

\begin{table}[htb]
    \centering
    \caption{Imbalance ratios and noise rates for the proposed BICL datasets.}
    \label{tab:extended-bicl}
    \vspace{5pt}
    \begin{tabular}{l|c|c}
        \toprule
        \textbf{Dataset} & \textbf{Imbalance Ratio} & \textbf{Noise Rate (\%)} \\
        \midrule
        BCLCIFAR-10          & 4,966.5 & 0.23 \\
        BCLCIFAR-20          & 200.12  & 0.80 \\
        BCLCIFAR-100         & 119.59 & 0.026 \\
        BCLTinyImageNet-200  & 416.35 & 0.006 \\
        \bottomrule
    \end{tabular}
\end{table}

Figure \ref{fig:label-dist-cl10}, \ref{fig:label-dist-cl20} and \ref{fig:label-dist-cl100} present the histogram of complementary label. The specific imbalance ratio~\cite{CITEKC2019,CITEYC2019} of our dataset can be found in Table \ref{tab:extended-bicl}. These findings align with inherent heavy bias nature of our transition matrices. However, larger labels create a counter-intuitive phenomenon-a subset of classes receives zero probability in the final complementary-label distribution. This situation could arise from the interaction between our candidate sampling strategy and VLM's inherent biases. Specifically, the process involves instructing the VLM with a small subset of candidates along side a negative prompt. Consequently, low-profile and less semantically distinct classes are excluded from the final dataset.

Crucially, this low-noise characteristic is preserved regardless of the label space size. As summarized in Table \ref{tab:extended-bicl}, our method achieves consistently low error rates across all evaluated datasets.

\section{Additional Ablation Studies}
\label{additional_ablation}

\subsection{Result Analysis of Different Neural Network Architectures}

We conducted an ablation study comparing different neural network architectures, including ResNet18, ResNet34, and ResNet50 to determine the most suitable architecture for our complementary dataset. Results shown in Table~\ref{tab:performance} indicate that ResNet34 achieved the best performance.

\newcommand{\meanstd}[2]{#1{\tiny$\pm$#2}}
\newcommand{\bestcell}[2]{\textbf{#1}{\tiny$\pm$#2}}
\begin{table}[htbp]
\centering
\caption{Performance of various neural network architectures using FWD and CPE-F algorithms on our BICL complementary datasets.
The highest accuracy for each dataset–architecture pair is shown in bold.}
\label{tab:performance}
\vspace{5pt}
\resizebox{1.02\textwidth}{!}{%
\setlength{\tabcolsep}{2pt}
\renewcommand{\arraystretch}{0.9}
\begin{tabular}{l|ccc|ccc|ccc|ccc}
\toprule
\multirow{2}{*}{Method} 
& \multicolumn{3}{c|}{\textbf{CIFAR-10}} 
& \multicolumn{3}{c|}{\textbf{CIFAR-20}} 
& \multicolumn{3}{c|}{\textbf{CIFAR-100}} 
& \multicolumn{3}{c}{\textbf{TinyImageNet-200}} \\

\cmidrule(lr){2-4}
\cmidrule(lr){5-7}
\cmidrule(lr){8-10}
\cmidrule(lr){11-13}

& ResNet18 & ResNet34 & ResNet50
& ResNet18 & ResNet34 & ResNet50
& ResNet18 & ResNet34 & ResNet50
& ResNet18 & ResNet34 & ResNet50 \\

\midrule
\textbf{FWD}
& \meanstd{78.79}{0.91} & \bestcell{81.23}{0.05} & \meanstd{78.46}{0.20}
& \meanstd{47.68}{0.35} & \bestcell{50.84}{0.32} & \meanstd{45.68}{0.99}
& \meanstd{44.81}{0.98} & \bestcell{46.70}{0.60} & \meanstd{42.04}{0.82}
& \meanstd{31.13}{0.33} & \bestcell{32.15}{0.30} & \meanstd{30.12}{1.07} \\

\midrule
\textbf{CPE-F}
& \meanstd{78.36}{0.67} & \bestcell{80.98}{0.18} & \meanstd{77.90}{0.40}
& \meanstd{46.42}{0.54} & \bestcell{50.89}{0.30} & \meanstd{44.52}{1.20}
& \meanstd{44.36}{0.94} & \bestcell{46.57}{0.09} & \meanstd{41.59}{0.85}
& \meanstd{30.37}{0.39} & \bestcell{31.89}{0.07} & \meanstd{29.74}{0.75} \\

\bottomrule
\end{tabular}
}
\end{table}

\subsection{Result Analysis of Data Augmentation}

We conduct a further analysis on the integration of BICL with various data augmentation techniques to enhance model robustness. Specifically, we evaluate a spectrum of augmentation intensities, ranging from weaker methods, such as Cutout~\cite{cutout}, to stronger automated policies like AutoAug~\cite{autoaugment} and RandAug~\cite{randaugment}. These are compared against the Flipflop baseline, which serves as the standard augmentation strategy employed throughout our experiments in Section~\ref{sec:6}.

\begin{figure*}[ht]
  \centering
  \vspace{0.05cm} 
  \begin{subfigure}[b]{0.24\textwidth}
    \includegraphics[width=\textwidth]{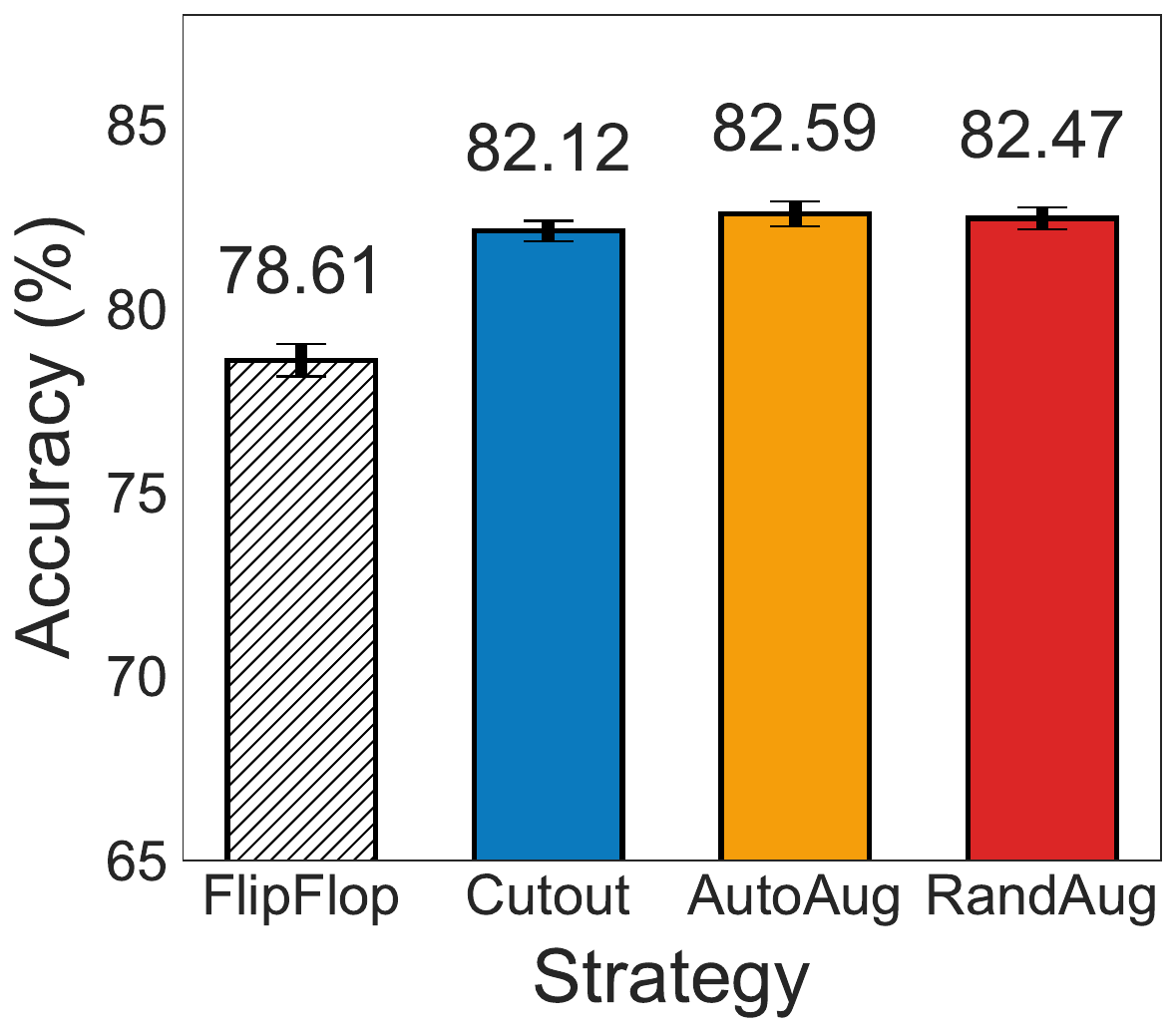}
    \caption{CIFAR-10}
    \label{fig:aug1}
  \end{subfigure}
  \begin{subfigure}[b]{0.24\textwidth}
    \includegraphics[width=\textwidth]{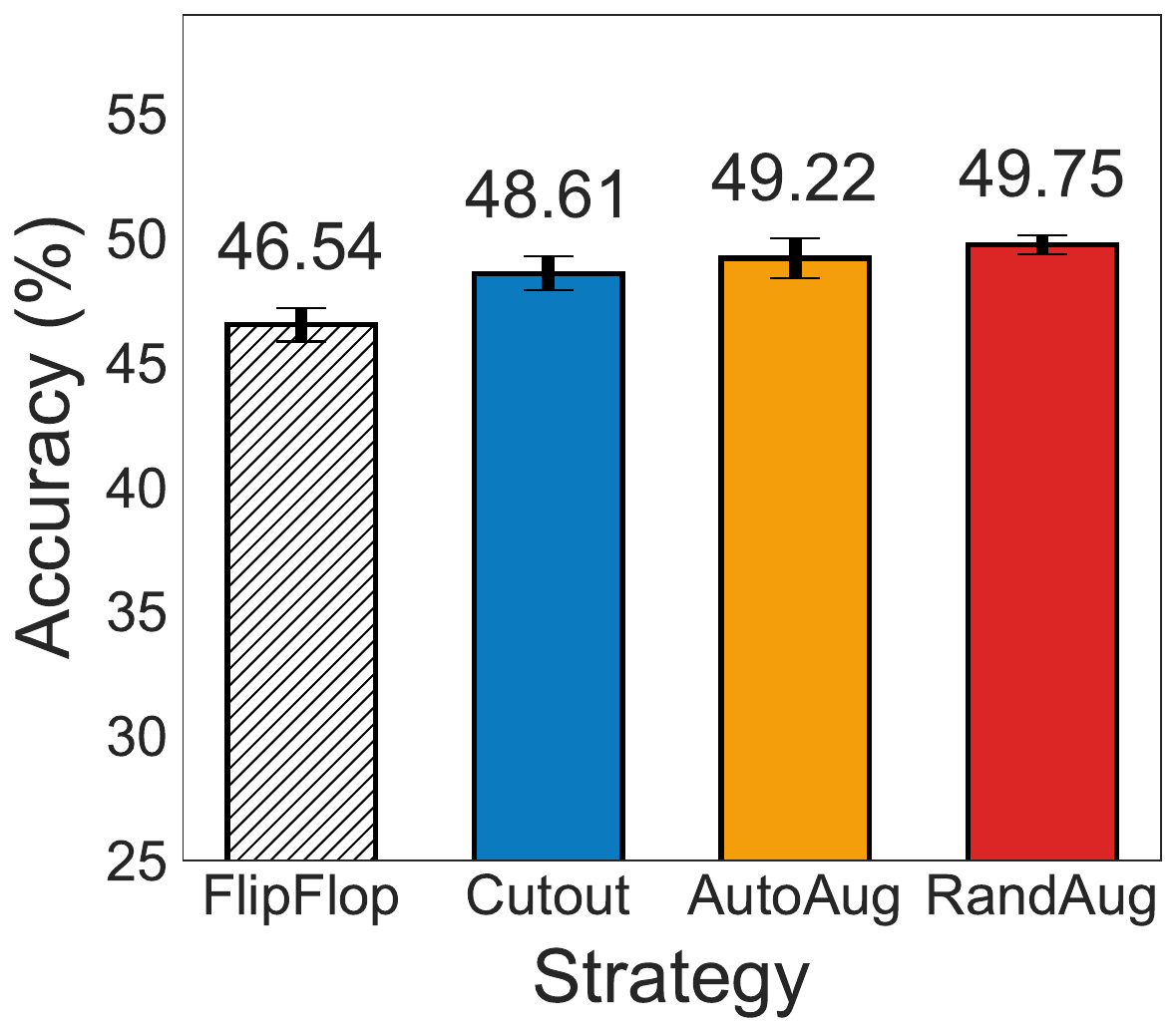}
    \caption{CIFAR-20}
    \label{fig:aug2}
  \end{subfigure}
  \begin{subfigure}[b]{0.24\textwidth}
    \includegraphics[width=\textwidth]{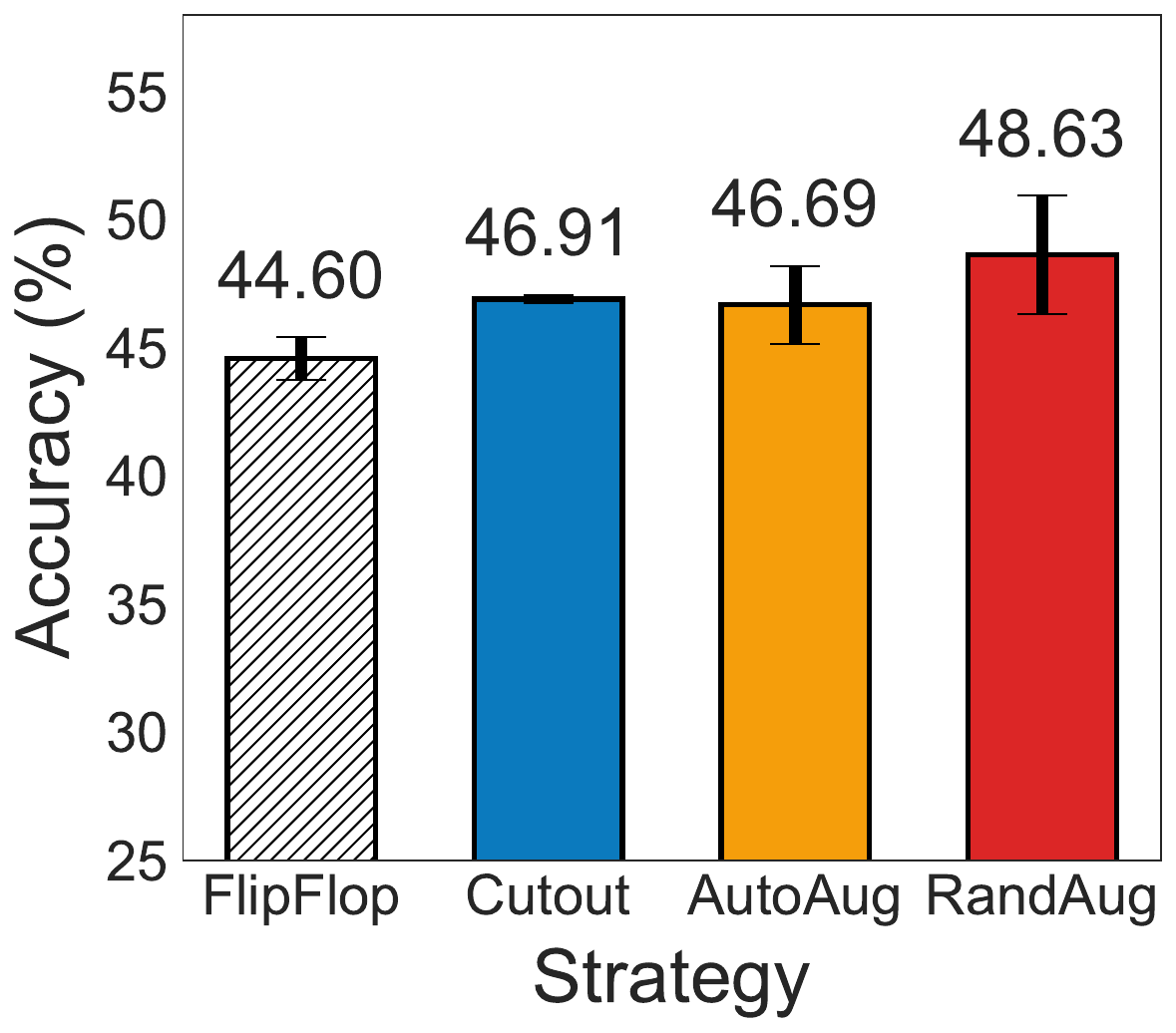}
    \caption{CIFAR-100}
    \label{fig:aug3}
  \end{subfigure}
  \begin{subfigure}[b]{0.24\textwidth}
    \includegraphics[width=\textwidth]{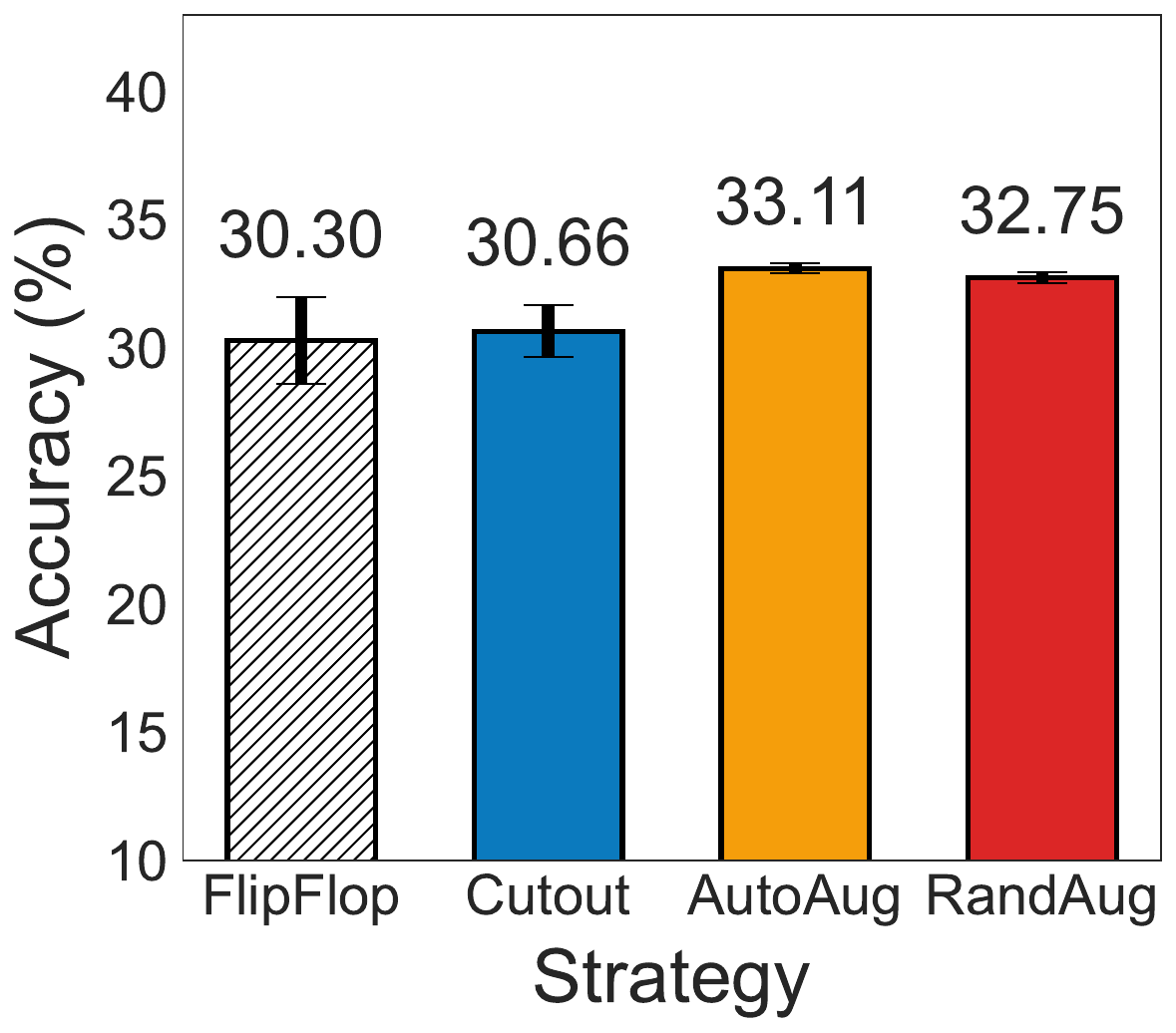}
    \caption{TinyImageNet-200}
    \label{fig:aug4}
  \end{subfigure}  \caption{Performance comparison of different data augmentation strategies integrated with BICL when training on FWD algorithm.}
\label{fig:augmentation}
\end{figure*}

As illustrated in Figure~\ref{fig:augmentation} demonstrates a clear performance improvement of BICL across datasets by incorporating data augmentations compared to the baseline simple Flipflop. Cutout serves as an effective immediate step, performing competitively on easy dataset like CIFAR-10. However, as the complexity of the dataset increases, the gap between Cutout and the stronger augmentation technique widens. RandAug and AutoAug prove to be the most robust out of all strategies tested, constantly delivering significant boost across all datasets. 
Overall, the addition of data augmentations to the our approach yields an increase to the model's accuracy regardless of the dataset intricacy.

\subsection{Effect of Encoder Choice on BICL Performance}
\label{effect_encoder}

In the CLL setting, ordinary labels are unavailable or costly to obtain. As a result, the clustering induced by the encoder plays a critical role in grouping semantically similar images into collections waiting for VLM annotation.

\begin{figure}[ht]
  \centering
  \begin{subfigure}[htb]{0.24\textwidth}
    \includegraphics[width=\textwidth]{Sections/Figures/bclcifar10_pie.pdf}
    \caption{SimSiam}
  \end{subfigure}
  \begin{subfigure}[htb]{0.24\textwidth}
    \includegraphics[width=\textwidth]{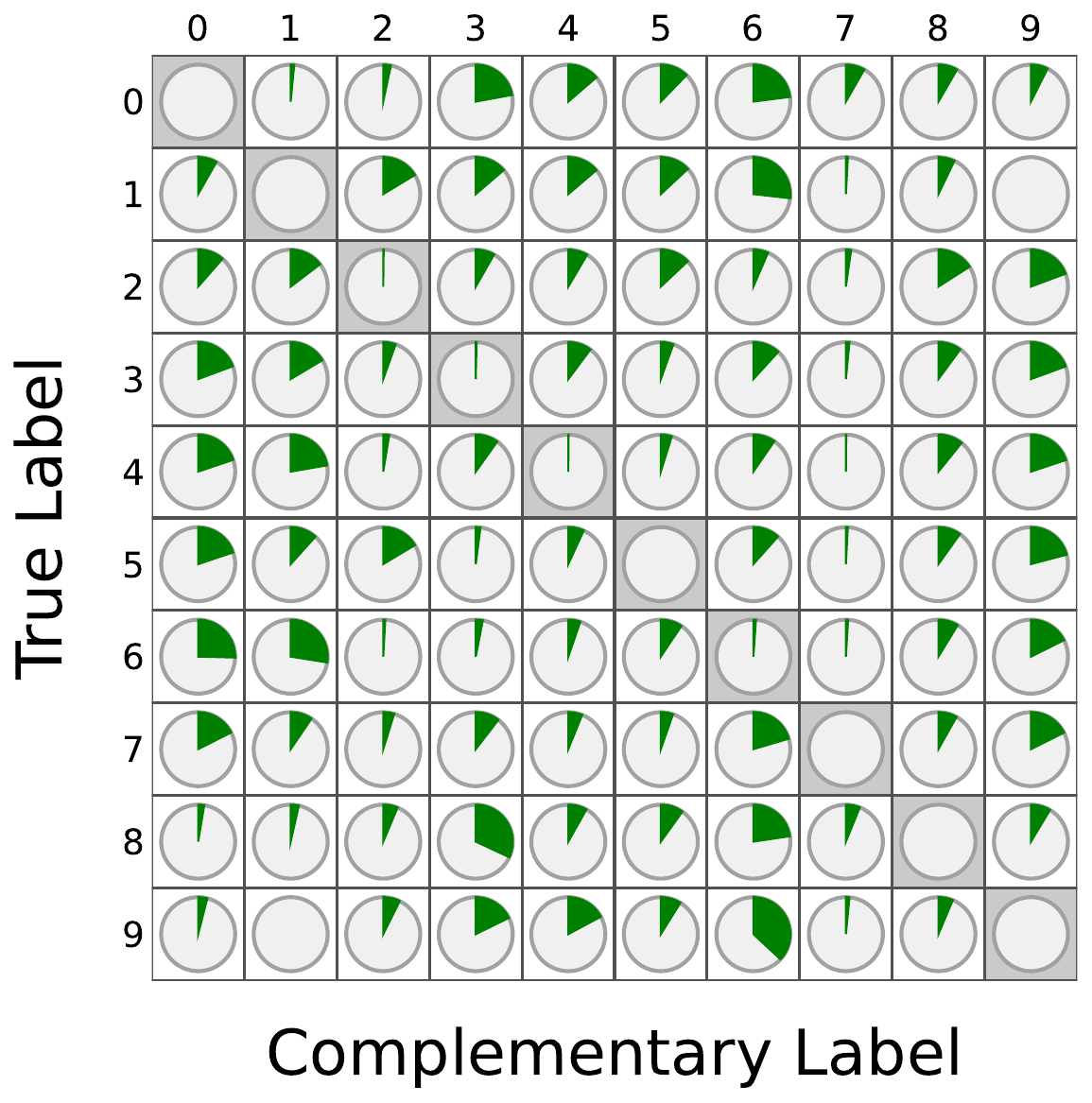}
    \caption{SimCLR}
  \end{subfigure}
  \begin{subfigure}[htb]{0.24\textwidth}
    \includegraphics[width=\textwidth]{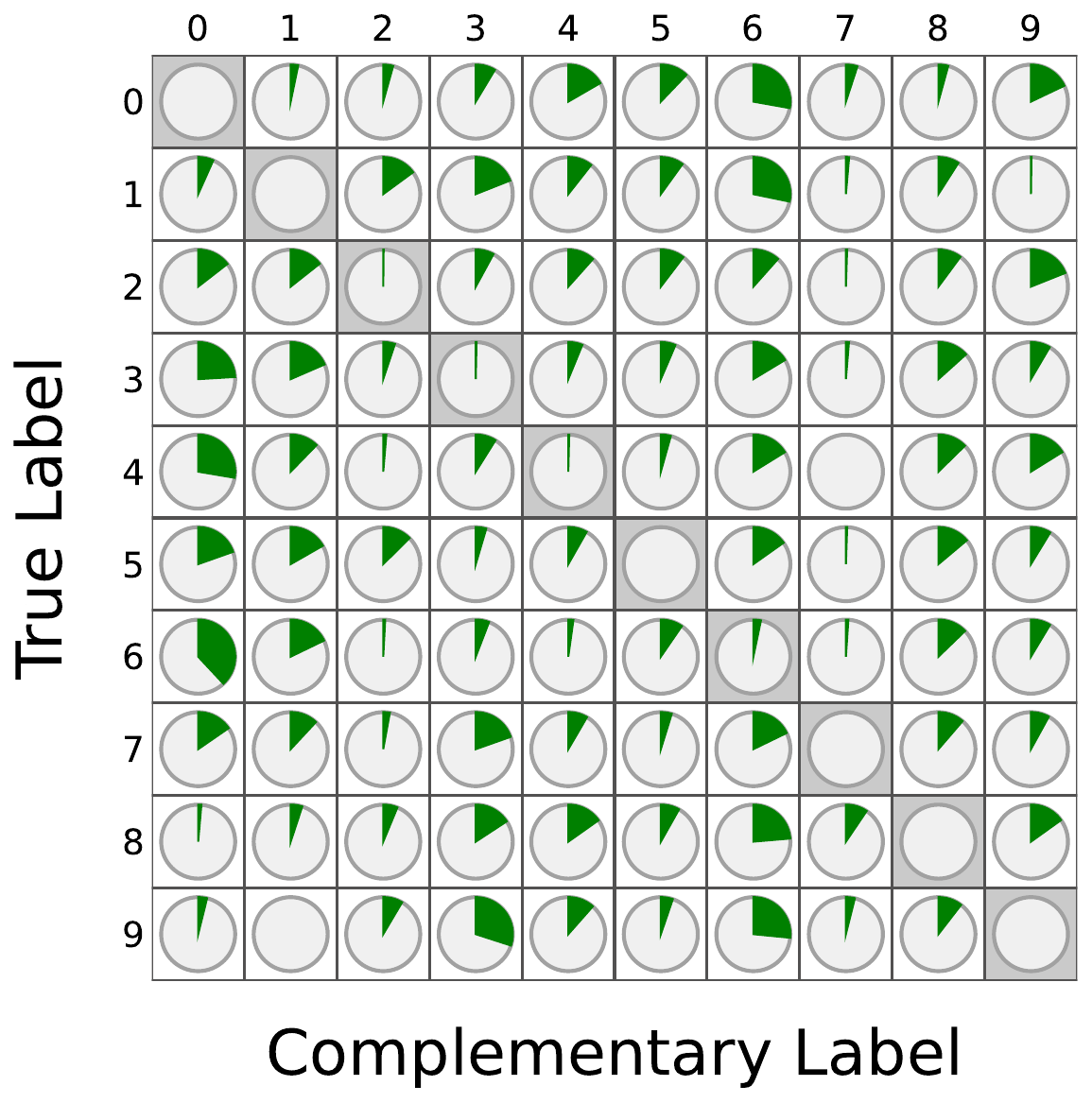}
    \caption{BYOL}
  \end{subfigure}
  \begin{subfigure}[htb]{0.24\textwidth}
    \includegraphics[width=\textwidth]{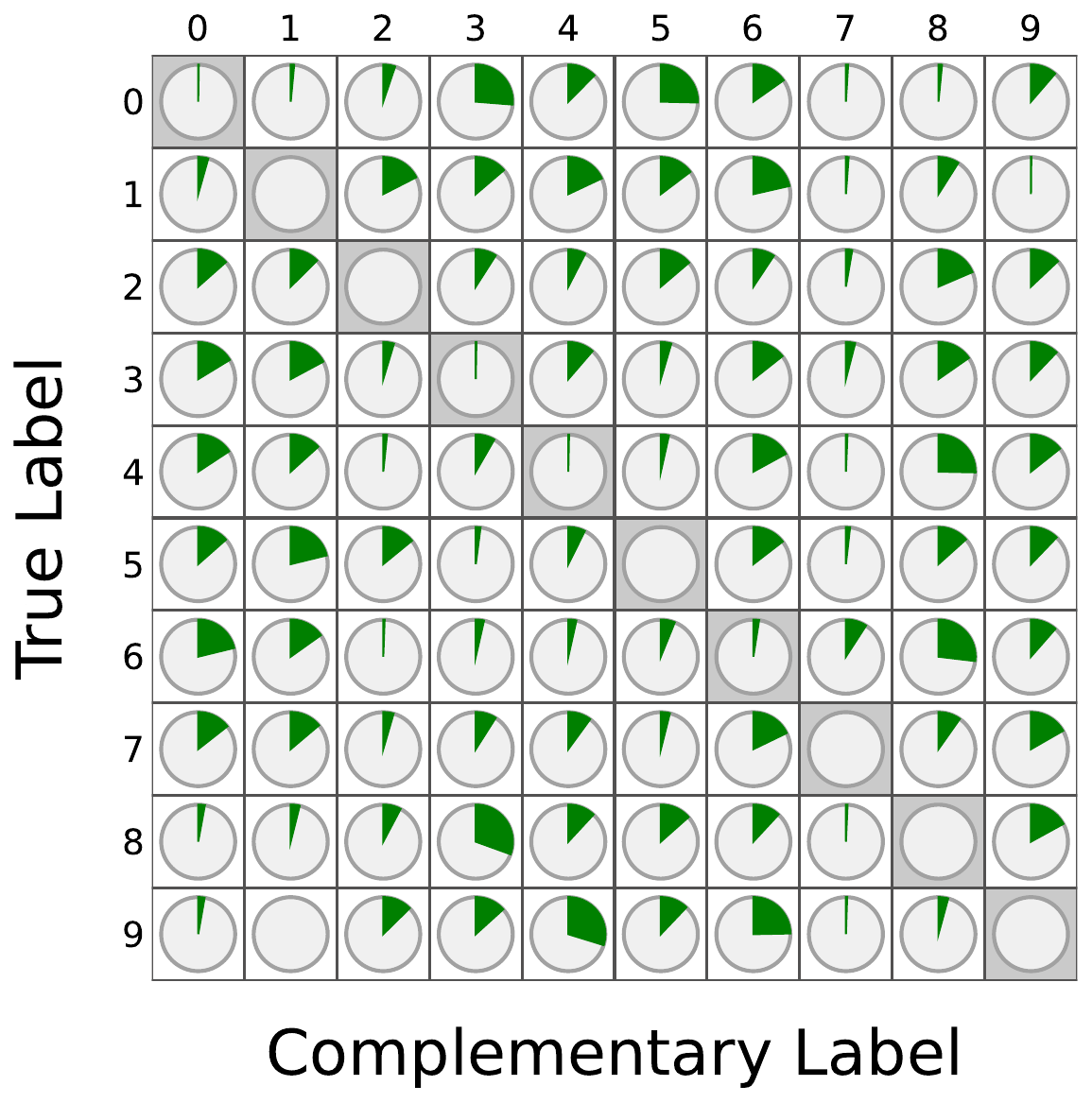}
    \caption{MoCov3}
  \end{subfigure}
  \caption{Transition matrix induced by the BICL protocol when using different encoder backbones for label selection.}
  \label{fig:transition-matrix-encoder}
\end{figure}

To investigate this, we conducted an ablation study comparing SimSiam~\cite{chen2020simsiam} with other self-supervised encoders, including SimCLR~\cite{simclr}, BYOL~\cite{byol} and MoCov3~\cite{mocov3}. All additional pretrain models were trained with a ResNet18 backbone on CIFAR10 for 800 epochs, using the same hyperparameters as for SimSiam.


\begin{table}[ht]
\centering
\caption{Analysis of conditional entropy and mutual information across different encoder networks used for clustering in BICL on CIFAR-10 dataset.}
\label{tab:encoder_analysis}
\vspace{5pt}
\begin{tabular}{l|ccc}
\toprule
\textbf{Clustering Encoder} & Noise Ratio of CLs & $H(Y \mid \bar{Y})$ & $I(Y; \bar{Y})$ \\
\midrule
SimSiam & 0.23\% & \textbf{1.5720} & \textbf{1.2778} \\
SimCLR  & 0.21\% & 2.7835 & 0.4244 \\
BYOL    & 0.44\% & 2.7906 & 0.3908 \\
MoCov3  & 0.36\% & 2.8091 & 0.4278 \\
\bottomrule
\end{tabular}
\begin{tablenotes}
\item \textbf{Note}: In our setting, a better clustering encoder should yield lower conditional entropy $H(Y \mid \bar{Y})$ and higher mutual information $I(Y; \bar{Y})$. Among the compared encoders, SimSiam achieves the lowest conditional entropy and the highest mutual information.
\end{tablenotes}
\end{table}

\begin{figure*}[htb]
\centering
\includegraphics[width=0.8\textwidth]{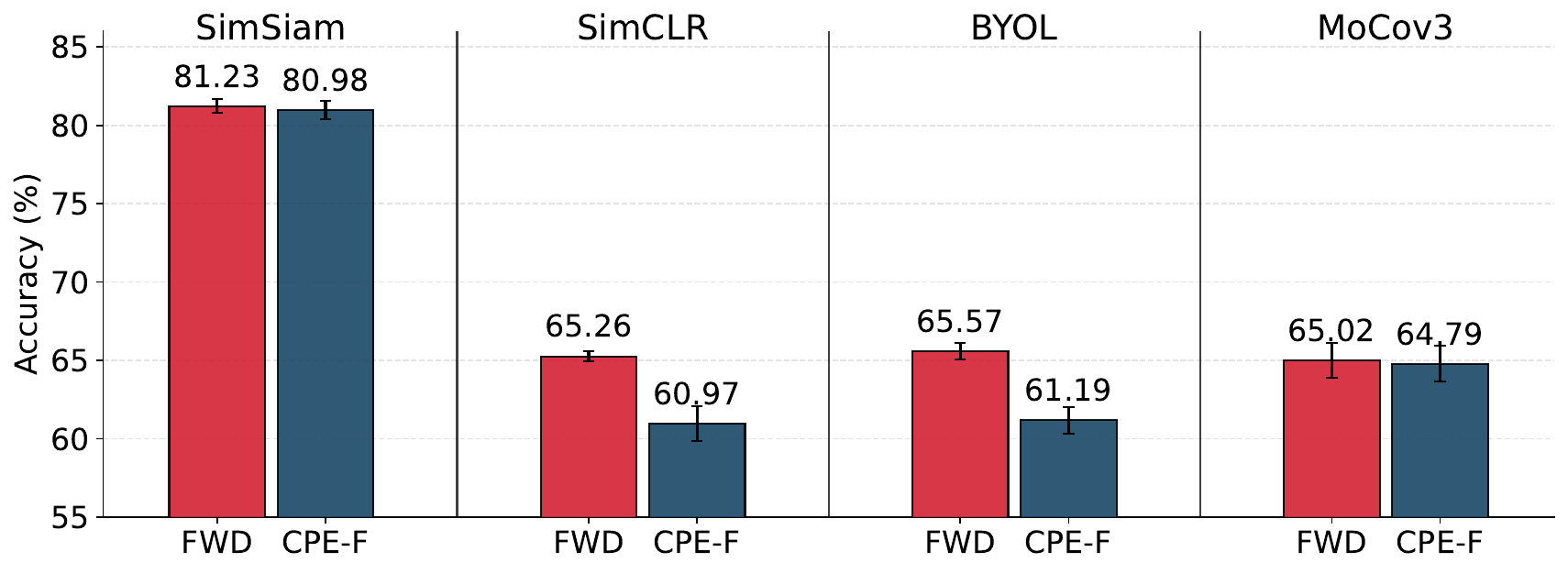}
\caption{BICL performance with different encoder networks used in the label-selection stage, evaluated with FWD and CPE-F.}
  \label{fig:encoder_networks}
\end{figure*}

The efficacy of BICL depends on the encoder's ability to group semantically similar images into coherent clusters. In our proposed method, a highly biased transition matrix is much more desirable. Ideally, all sample of the same class should fall into a single cluster assigned with specific random CLs. In Figure \ref{fig:transition-matrix-encoder} and \ref{fig:encoder_networks}, one can make the correlation between a biased matrix with concentrated probability mass and improved model performance. A spread-out distribution in the transition matrix indicates that the encoder has wrongly scattered instances of the same class across multiple clusters. We can observe that stronger encoders produce significantly less sparse matrices, therefore reducing ambiguity and enhancing the robustness of the model.

In our ablation study, the SimSiam encoder network produced exceptional transition matrices - its ability to cluster images is superior compared to others. Specifically, SimSiam induces the strongest bias and also achieves the best performance in Figure~\ref{fig:encoder_networks}, suggesting that bias is the main factor behind the performance differences. Consistently, stronger bias is also associated with higher mutual information and lower entropy. Our further analysis can be found in Table~\ref{tab:encoder_analysis}.
 
The resulting transition matrix of SimSiam exhibits probability concentrations, with true labels mapping almost deterministically to specific complementary label.

\subsection{Effect of VLM Choice on BICL Performance}
\label{vlm_choice}
A core component of our data collection protocol is the leveraging of systematic biases inherent to VLM-based annotation. Different model architectures or pre-training objectives may produce varying degrees of ``preference'' for certain negative labels, which in turn shapes the sparsity and informativeness of the resulting transition matrix.

To evaluate the sensitivity of the BICL framework to the underlying VLM, we conduct an ablation study across a diverse suite of contemporary architectures. While our primary experiments utilize LLaVA-v1.6-7B~\cite{LLava}, we extend this analysis to include several frontier open-weight families including: Qwen3~\cite{yang2025qwen3technicalreport} (with various parameters version) and Gemma4-E4B~\cite{gemmateam2024gemmaopenmodelsbased}. Every setting for this extended experiment is kept the same as in Section \ref{sec:6.1}, except that the VLM is replaced with alternative models.

\begin{table*}[htb]
\centering
\caption{Accuracy (\%) comparison between the uniform baseline and different VLM-based complementary-label generation strategies. Best performance for each dataset--method pair is in bold.}
\label{tab:vlm_compare_methods}
\small
\setlength{\tabcolsep}{3pt}
\renewcommand{\arraystretch}{0.9}

\begin{tabular}{l|l|l|ccccc}
\toprule
\textbf{Method} & \textbf{Dataset} 
& \textbf{Uniform} & \textbf{LLaVA-7B} & \textbf{Qwen2-7B} & \textbf{Qwen3-4B} & \textbf{Qwen3-8B} & \textbf{Gemma4-E4B} \\
\midrule
\multirow{4}{*}{\textbf{FWD}}
 & CIFAR-10  & 64.99\tiny{$\pm$0.64}&\textbf{81.23}{\tiny $\pm$0.05} & 79.45\tiny{$\pm$0.82} & 80.65\tiny{$\pm$0.53} & 81.00\tiny{$\pm$0.39} & 78.67\tiny{$\pm$0.72} \\
 & CIFAR-20  & 20.50\tiny{$\pm$0.75}&\textbf{50.84}{\tiny $\pm$0.32} & 48.09\tiny{$\pm$0.93} & 49.58\tiny{$\pm$0.91} & 51.10\tiny{$\pm$0.41} & 51.06\tiny{$\pm$0.14} \\
 & CIFAR-100 & 5.53\tiny{$\pm$0.47}&\textbf{46.70}{\tiny $\pm$0.60} & 38.26\tiny{$\pm$1.40} & 36.15\tiny{$\pm$2.69} & 38.92\tiny{$\pm$0.34} & 39.13\tiny{$\pm$0.58} \\
 & TinyImageNet-200 & 4.00\tiny{$\pm$0.43}&\textbf{32.15}{\tiny $\pm$0.30} & 32.30\tiny{$\pm$1.56} & 33.50\tiny{$\pm$0.29} & 34.24\tiny{$\pm$0.25} & 34.06\tiny{$\pm$1.84} \\
\midrule
\multirow{4}{*}{\textbf{CPE-F}}
 & CIFAR-10  & 64.97\tiny{$\pm$0.61}&\textbf{80.98}{\tiny $\pm$0.18} & 79.57\tiny{$\pm$0.86} & 80.12\tiny{$\pm$0.45} & 80.68\tiny{$\pm$0.37} & 77.61\tiny{$\pm$0.69} \\
 & CIFAR-20  & 20.73\tiny{$\pm$1.17}&\textbf{50.89}{\tiny $\pm$0.30} & 47.46\tiny{$\pm$1.29} & 48.36\tiny{$\pm$0.86} & 49.96\tiny{$\pm$0.25} & 49.62\tiny{$\pm$0.24} \\
 & CIFAR-100 & 5.06\tiny{$\pm$0.76}&\textbf{46.57}{\tiny $\pm$0.09} & 37.81\tiny{$\pm$1.77} & 36.01\tiny{$\pm$2.73} & 38.63\tiny{$\pm$0.40} & 39.04\tiny{$\pm$0.46} \\
 & TinyImageNet-200 & 2.14\tiny{$\pm$0.87}&\textbf{31.89}{\tiny $\pm$0.07} & 30.41\tiny{$\pm$1.72} & 33.37\tiny{$\pm$0.34} & 33.83\tiny{$\pm$0.13} & 33.68\tiny{$\pm$2.05} \\
\bottomrule
\end{tabular}
\begin{tablenotes}
    \item \textit{\textbf{Note}:} 
    \textbf{Uniform} denotes the non-VLM baseline, where CLs are generated under a uniform assumption~\cite{ishida2017learning}. Our proposed VLM-based settings generally achieve substantially higher accuracy than the uniform baseline.
\end{tablenotes}
\end{table*}
The results of this ablation study are reported in Table~\ref{tab:vlm_compare_methods}. 
Our designed method consistently outperforms the uniform baseline across all evaluated VLM models, datasets, and learning methods, suggesting that the proposed strategy is robust to different VLM backbones.
The model used in our main experiments (LLaVA-v1.6-7B) serves as a surprisingly dominant annotator for CIFAR-based datasets, achieving the highest accuracy in nearly all CIFAR-10, 20, and 100 trials. This suggests that its visual-linguistic alignment is particularly well-suited for the semantic granularity of these specific label spaces. A critical finding occurs in the TinyImageNet-200 results. While LLaVA performs well, it is outperformed by Qwen3-8B and Gemma4-E4B. Newer architectures demonstrate more advanced reasoning capabilities, enabling them to better leverage the rich features present in high-resolution TinyImageNet images.

These findings confirm that BICL is not only robust across various VLMs but actually scales in effectiveness as the underlying annotator models gain the ability to process more detailed visual information. This underscores the potential for BICL to continue improving as visual encoder technology advances toward even higher-definition image reasoning.

\subsection{BICL Performance on Additional Negative Prompt}
\label{additional_negative_prompt}
The success of the BICL framework relies on the VLM-based complementary-label annotation phase, where the model is tasked with identifying an incorrect class from a constrained candidate set. As prompting is a critical component of our data collection protocol, we conduct an ablation study to investigate the fragility of prompt engineering and its impact on the resulting downstream classifier performance.

While the base negative prompt used in our main experiments specifically instructs the VLM to ``answer the question with a single word'' (as detailed in Section~\ref{sec:3.1}), we observe that even semantically similar prompts can lead to variations in the transition matrix $Q$, which leads to variations in final classifier performance. For this experiment, we keep all settings identical to those in Section \ref{sec:6.1}, but replace the prompt with an alternative formulation in Appendix \ref{neg_prompt2}. Notably, the VLM annotation phase is conducted using LLaVA-v1.6-7B.

\scalebox{1.0}{
\begin{tcolorbox}[colback=white, colframe=gray, title=\texttt{\textbf{New Prompt}}]
\small
\texttt{\textbf{Question:}<image> Which label does not belong to this image? (1) {labels[0]} (2) {labels[1]} (3) {labels[2]} (4) {labels[3]} Please respond with only the number of the correct answer.}
\label{neg_prompt2}
\end{tcolorbox}
}

\scalebox{1.0}{
\begin{tcolorbox}[colback=white, colframe=gray, title=\texttt{\textbf{Current Prompt}}]
\small
\texttt{\textbf{Question:}<image> Which label does not belong to this image? Answer the question with a single word: \{{label[0]}, {label[1]}, {label[2]}, {label[3]}\}.}
\label{neg_prompt1}
\end{tcolorbox}
}

Instead of directly naming the complementary label for each image, this prompt requires the VLM to respond with the index corresponding to a given option. From a human perspective, both prompts describe essentially the same task, and we would expect them to yield the same answer for a given image. However, for the VLM, the transition from semantic identification to index-based mapping introduces several layers of inductive and positional bias. Unlike direct naming, which leverages the VLM’s strong visual-textual alignment, the multiple-choice format requires an intermediate step of mapping a semantic class to an arbitrary number index.




\begin{figure*}[htb]
  \centering
  \includegraphics[width=0.38\linewidth]{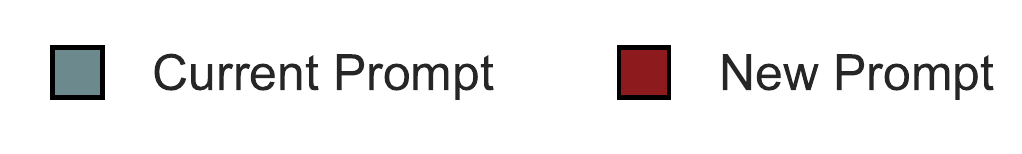}\\
  \begin{subfigure}[b]{0.24\textwidth}
    \includegraphics[width=\textwidth]{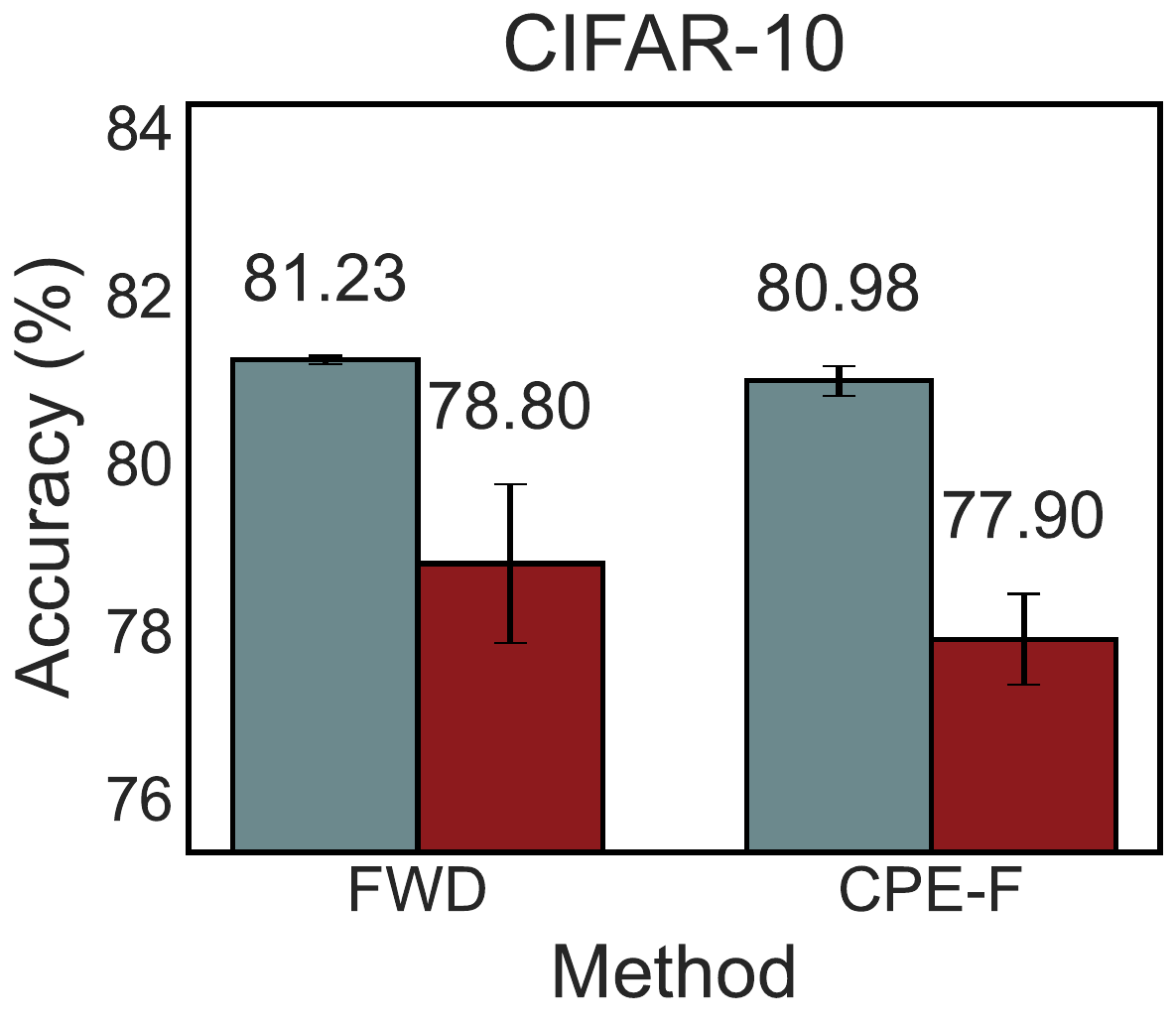}
    \caption{CIFAR-10}
  \end{subfigure}
  \begin{subfigure}[b]{0.24\textwidth}
    \includegraphics[width=\textwidth]{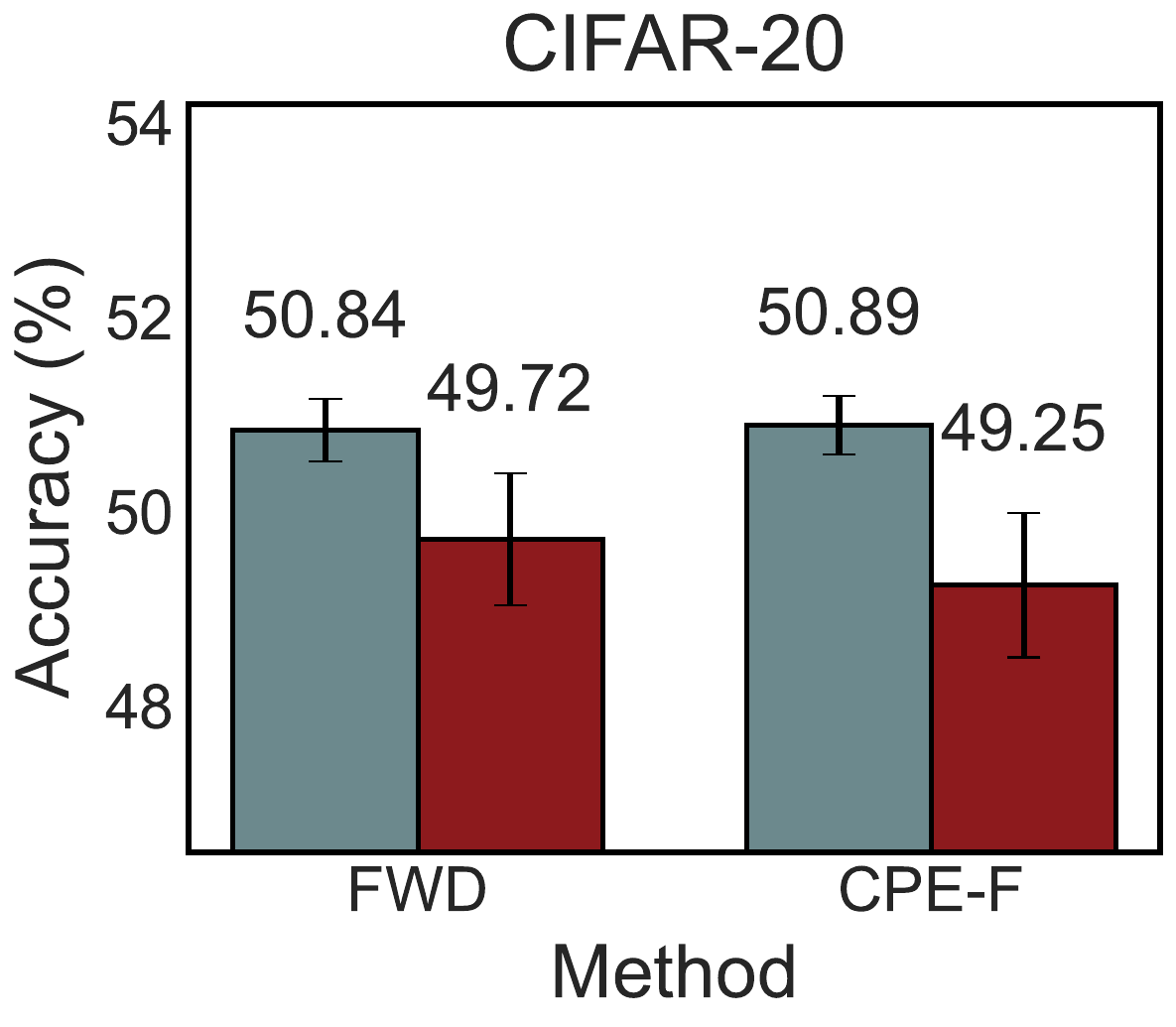}
    \caption{CIFAR-20}
  \end{subfigure}
  \begin{subfigure}[b]{0.24\textwidth}
    \includegraphics[width=\textwidth]{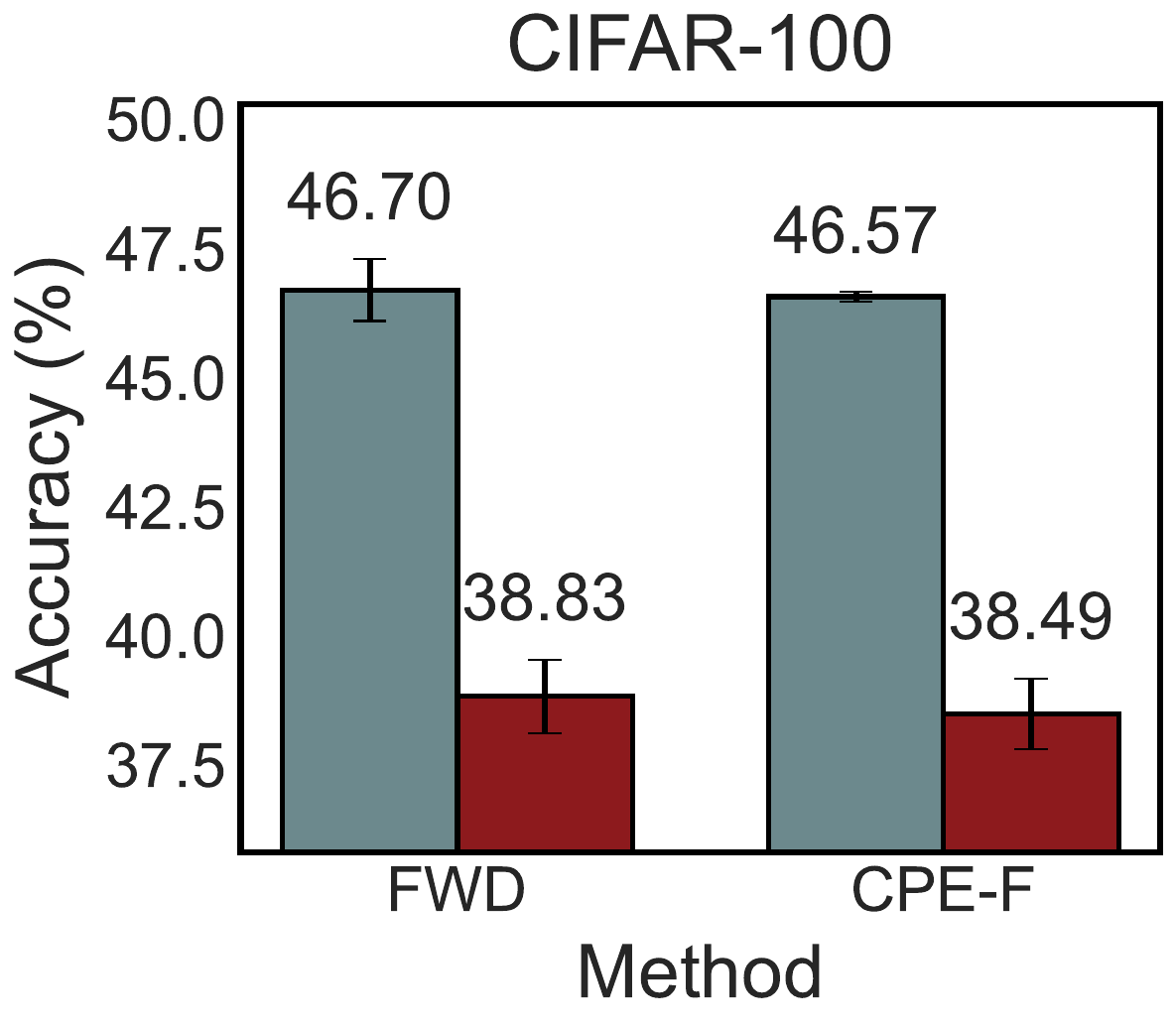}
    \caption{CIFAR-100}
  \end{subfigure}
  \begin{subfigure}[b]{0.24\textwidth}
    \includegraphics[width=\textwidth]{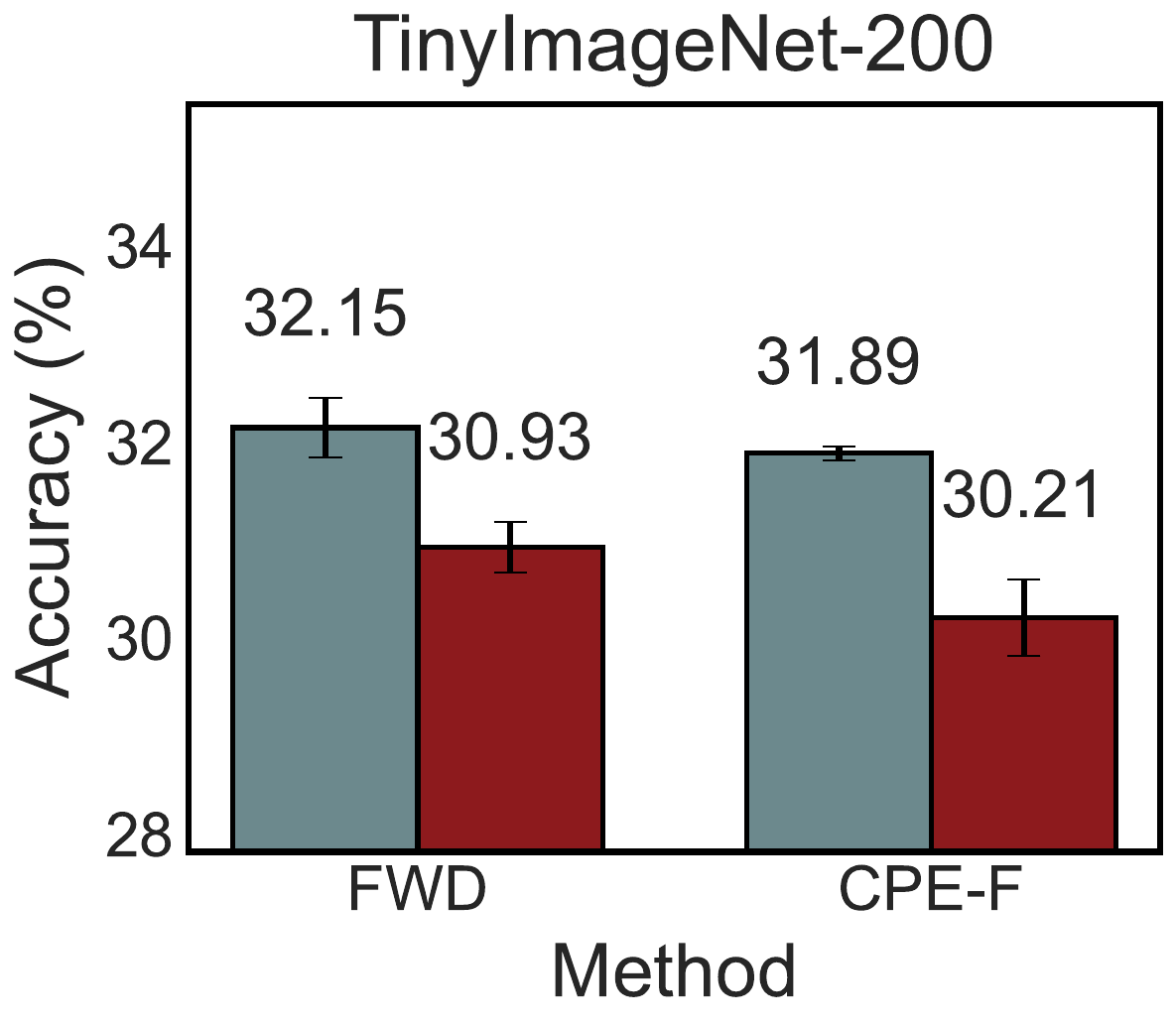}
    \caption{TinyImageNet-200}
  \end{subfigure}  \caption{Performance comparison of different prompt on model performance across datasets.}
\label{fig:prompt_design_4charts}
\end{figure*}

\begin{table*}[ht]
\centering
\caption{Impact of prompt design on noise and mutual information characteristics across datasets.}
\label{tab:prompt_comparison_multi}
\setlength{\tabcolsep}{4pt}
\renewcommand{\arraystretch}{0.9}
\resizebox{0.9\textwidth}{!}{
\begin{tabular}{l|cc|cc|cc|cc}
\toprule
\textbf{Datasets}
& \multicolumn{2}{c|}{\textbf{CIFAR-10}}
& \multicolumn{2}{c|}{\textbf{CIFAR-20}}
& \multicolumn{2}{c|}{\textbf{CIFAR-100}}
& \multicolumn{2}{c}{\textbf{TinyImageNet-200}} \\
\midrule
\textbf{Prompt}
& Current & New
& Current & New
& Current & New
& Current & New \\
\midrule
\midrule

Noise ratio
& 0.23\% & 0.64\%
& 0.80\% & 1.03\%
& 0.026\% & 0.014\%
& 0.006\% & 0.036\% \\
\midrule
$H(Y \mid \bar Y)$
& \textbf{2.1999} & 2.4604
& 3.6177 & \textbf{3.6016}
& \textbf{4.5727} & 5.5809
& \textbf{5.8390} & 5.8683 \\

$I(Y; \bar{Y})$
& \textbf{1.1219} & 0.8616
& 0.7041 & \textbf{0.7202}
& \textbf{2.0711} & 1.0630
& \textbf{1.8048} & 1.7749 \\

\bottomrule
\end{tabular}
}
\end{table*}

The choice of negative prompt significantly influences the VLM's ability to generate informative supervision, illustrating in Table~\ref{tab:prompt_comparison_multi} and Figure~\ref{fig:prompt_design_4charts} that how the labeling task is presented can result in a vast discrepancy in downstream performance. As presented in our mathematical justification, dataset which has transition matrix with higher mutual information tends to perform better in inference phase. We can see that trends most clearly here where applying ``New Prompt'' to CIFAR-100 produce a dataset with halved the mutual information metric compare to the ``Current Prompt''. This sharp reduction in supervision informativeness results in a significant 8 percentage points drop in final accuracy. Across the majority of benchmark datasets, iterations utilizing this alternative prompt induce less favorable information metrics characterized by higher conditional entropy $H$ or reduced mutual information $I$ which consistently correlates with degraded classifier performance. These findings underscore the fragility of prompt engineering and suggest that the VLM's ability to provide a high-quality supervision signal is sensitive to the linguistic framing of the task.

\subsection{BICL Performance on Recent CLL Methods}
\label{additional_cll_methods}

In addition to the standard CLL algorithms for biased designed, we conduct additional experiments with recent CLL baselines, including SCARCE~\cite{conu2023} and OP~\cite{orderpreserving2023}. We would like to emphasize that these methods belong to two different technical directions in the CLL literature. Methods such as SCL-NL, SCARCE, COVR~\cite{covr_2024}, and OP are primarily developed for the standard CLL setting, where the complementary-label transition is assumed to be relatively uniform or unbiased. In contrast, methods such as FWD, CPE, URE-TNN, and URE-TGA are designed to model or handle biased transition structures.

Since BICL explicitly induces an extremely biased complementary-label distribution, we consider the second group to be more appropriate for the main comparison. For completeness, we also evaluated recent methods such as SCARCE and OP, although their assumptions are less well aligned with the setting induced by BICL, as shown in Table~\ref{tab:recent_CLL}.

\begin{table}[ht]
\centering
\caption{Performance comparison of recent CLL methods under uniform and BICL distributions on CIFAR-20 and CIFAR-100 (accuracy (\%), mean$\pm$std).}
\vspace{5pt}
\label{tab:recent_CLL}
\resizebox{0.65\textwidth}{!}{
\small
\setlength{\tabcolsep}{2.5pt}
\begin{tabular}{l|cc|cc}
\toprule
\textbf{Datasets}
& \multicolumn{2}{c|}{\textbf{CIFAR-20}}
& \multicolumn{2}{c}{\textbf{CIFAR-100}} \\
\midrule
\textbf{Method}
& Uniform (\%) & BICL (\%)
& Uniform (\%) & BICL (\%) \\
\midrule
SCARCE
& 17.53\scriptsize{$\pm$0.49} & 5.02\scriptsize{$\pm$0.10}
& 1.41\scriptsize{$\pm$0.07} & 1.01\scriptsize{$\pm$0.02} \\

OP
& 20.71\scriptsize{$\pm$1.04} & 5.48\scriptsize{$\pm$0.43}
& 1.21\scriptsize{$\pm$0.18} & 1.02\scriptsize{$\pm$0.05} \\

\midrule
FWD
& 20.50\scriptsize{$\pm$0.75} & {50.84}\scriptsize{$\pm$0.32}
& 5.53\scriptsize{$\pm$0.47} & {46.70}\scriptsize{$\pm$0.60} \\

CPE-F
& 20.73\scriptsize{$\pm$1.17} & {50.89}\scriptsize{$\pm$0.30}
& 5.06\scriptsize{$\pm$0.76} & {46.57}\scriptsize{$\pm$0.09} \\
\bottomrule
\end{tabular}
}
\end{table}

Table \ref{tab:recent_CLL} shows that recent CLL methods such as SCARCE and OP perform poorly under the BICL distribution, suggesting limited robustness to highly biased complementary-label collection. On CIFAR-20, their accuracy drops substantially from 17.53\% and 20.71\% under the uniform setting to 5.02\% and 5.48\%, respectively. A similar trend is observed on CIFAR-100, where both methods remain near chance level under BICL. In contrast, FWD and CPE-F benefit markedly from the BICL distribution, achieving large improvements over the uniform setting on both datasets. These results indicate that extremely biased complementary-label distributions can severely hinder some recent CLL methods, while our setting remains effective for more robust baselines such as FWD and CPE-F.

\subsection{Comparison with Approaches Using Few True Labels}
\label{compared_few_true_labels}

In CLL, true labels are not accessible during training. However, several \emph{transition-aware} ($Q$-aware) algorithms~\cite{fwd2018,cpe2023} assume that the complementary-label transition matrix $Q$ is known or can be reliably estimated. In practice, this assumption is often violated. When learning solely from CLs, information about the true label distribution is largely lost, making it fundamentally difficult to estimate $Q$ without additional supervision, as $Q$ is closely tied to the class priors of true labels.

On the other hand, while annotating large-scale datasets with full supervision is costly, it is often feasible in practice to obtain a small number of confidently labeled examples per class. Motivated by this observation, we consider a setting in which five true labels per class are manually provided and used to estimate the transition matrix. In this sense, $Q$-aware CLL methods implicitly rely on a small amount of true-label supervision.

To ensure a fair comparison, we conduct an ablation study in which five true labels per class are used to train a classifier under standard supervision, while the remaining data are treated as complementarily labeled. Both the true-label and complementary-label distributions are generated uniformly and without noise. Following~\cite{ishida2017learning}, we combine the losses from ordinary labels and CLs as
\begin{equation}
\label{combine_ol_cl}
\hat{R}(f)
= \underbrace{\frac{\alpha}{m}\sum_{j=1}^{m} \mathcal{L}\!\big(f(\mathbf{x}_j),y_j\big)}_{CE}
+ \underbrace{\frac{(1-\alpha)(K-1)}{n}\sum_{i=1}^{n} \mathcal{L}\left(Q^\top \sigma(g(\mathbf{x}_i)), \bar{y}_i\right)}_{FWD},
\nonumber
\end{equation}
where $\{(\mathbf{x}_j,y_j)\}_{j=1}^{m}$ denotes the ordinarily labeled data and $\{(\mathbf{x}_i,\bar{y}_i)\}_{i=1}^{n}$ denotes the complementarily labeled data,  $\alpha \in [0,1]$ is a hyper-parameter that interpolates between the two risks, we set $\alpha = 0.5$ in this experiment. $Q$ is estimated transition matrix, and $g(\mathbf{x})$ denote the classifier outputs (logits) and $\sigma(\cdot)$ be the softmax function, $\mathcal{L}(\cdot, \cdot)$ is a standard loss function (i.e., Cross-Entropy).

\begin{table}[ht]
\centering
\caption{Performance comparison with methods using a small number of true labels per class.}
\label{tab:true_label_ablation}
\vspace{5pt}
\small
\setlength{\tabcolsep}{6pt}
\begin{tabular}{l|l|c|c|c|c}
\toprule
\textbf{} & \textbf{Algorithm} & \textbf{CIFAR-10} & \textbf{CIFAR-20} & \textbf{CIFAR-100} & \textbf{TinyImageNet-200} \\
\midrule
5 (True Labels) & CE + FWD & 77.93 & 45.99 & 44.52 & 27.18\\
\midrule
\textbf{BICL} (Ours)  & FWD & \textbf{81.23} & \textbf{50.84} & \textbf{46.70} & \textbf{32.15}\\
\midrule  
& $\Delta$ & \textcolor{blue}{$\uparrow$ 3.30} & \textcolor{blue}{$\uparrow$ 4.85} & \textcolor{blue}{$\uparrow$ 2.18} & \textcolor{blue}{$\uparrow$ 4.97}\\

\bottomrule
\end{tabular}
\begin{tablenotes}
\item \textbf{\textit{CE+FWD}}: In this setup, we train on the small set of true-labeled samples (five per class) using cross-entropy (CE), and on the remaining samples using FWD. The final objective is the sum of the CE and FWD losses. Notably, the CLs used for FWD in this baseline are synthetically generated under the noiseless, uniform assumption of~\citep{ishida2017learning}. In contrast, FWD in BICL (ours) is trained on biased CLs collected using our BICL protocol.
\end{tablenotes}
\end{table}

We use ResNet18 as the backbone network and adopt the same hyperparameter settings described in Section~\ref{sec:6.1}. The results in Table~\ref{tab:true_label_ablation} show that BICL consistently outperforms the combined-loss baseline that leverages five true labels per class across all datasets. These findings indicate that our approach is more effective than relying on a small amount of true-label supervision combined with massive CLs, highlighting the advantage of BICL in settings where limited true labels are available.

\subsection{Compared to True Label Collection Approach with VLM}
\label{sec:compare_true}
To better quantify the gap between collecting \emph{true labels} and \emph{complementary labels} with a VLM, we conduct an ablation study in which we replace complementary-label collection with true-label collection using the same VLM. We then train a classifier under \emph{standard supervision} using the cross-entropy loss and compare its performance against CLL under our BICL protocol. 
Across datasets, BICL outperforms this standard-supervision baseline, where the model is trained directly on VLM-collected true labels. We attribute this gap to the fact that, in large label spaces, VLMs often struggle to precisely identify the correct class among many visually similar alternatives. In contrast, BICL simplifies the VLM's decision process by framing annotation as a constrained \emph{rejection} task and restricting the candidate label space through our data collection protocol. 

This advantage yields large gains on challenging multi-class benchmarks: compared with standard supervision, BICL improves accuracy by 17.65 percentage points on CIFAR-20, 10.11 percentage points on CIFAR-100, and nearly doubles accuracy on TinyImageNet-200 (see the \emph{Std-Supervision} row in Tables~\ref{tab:2} and~\ref{tab:3}). Overall, these results suggest that in scalable settings, determining what an image is \emph{not} can be more reliable for VLMs than determining what it \emph{is}.

\subsection{Partial-Label Learning (PLL) vs. CLL under BICL}
\label{compared_pll_cll_methods}

While CLL is technically a subclass of PLL, but it is the most difficult problem of PLL. To examine this connection empirically, we conducted additional experiments with representative PLL methods, including PiCO~\cite{wang2022pico}, PRODEN~\cite{lv2020progressive}, and SoLar~\cite{wang2022solar}, on the BICL datasets.

In these experiments, we converted each single complementary label provided by the VLM into a partial-label set by treating all remaining labels in the label space as candidate labels. Thus, each instance in BCLCIFAR-20 is associated with a partial-label set containing 19 candidate labels, and each instance in BCLCIFAR-100 is associated with a partial-label set containing 99 candidate labels. We trained each PLL method for 800 epochs with learning rate $10^{-3}$, weight decay $10^{-4}$, and momentum 0.9. The corresponding results are reported in Table~\ref{tab:pll_cll_bicl}.

\begin{table}[ht]
\centering
\caption{Comparison of PLL and CLL methods under BICL on BCLCIFAR-20 and BCLCIFAR-100.}
\label{tab:pll_cll_bicl}
\vspace{5pt}
\resizebox{0.9\textwidth}{!}{
\small
\setlength{\tabcolsep}{5pt}
\begin{tabular}{l|l|l|c|c}
\toprule
\textbf{Setting} & \textbf{Method} & \textbf{Dataset} & \textbf{Label Set Size} & \textbf{Accuracy (\%)} \\
\midrule
\multirow{6}{*}{PLL Algorithms}
& PiCO   & BCLCIFAR-20  & 19 & 6.01 \\
& PRODEN & BCLCIFAR-20  & 19 & 7.74 \\
& SoLar  & BCLCIFAR-20  & 19 & 4.74 \\
\cmidrule(lr){2-5}
& PiCO   & BCLCIFAR-100 & 99 & 1.15 \\
& PRODEN & BCLCIFAR-100 & 99 & 1.01 \\
& SoLar  & BCLCIFAR-100 & 99 & 1.10 \\
\midrule
\multirow{4}{*}{CLL Algorithms}
& FWD    & BCLCIFAR-20  & 1 & 50.84 \\
& CPE-F  & BCLCIFAR-20  & 1 & 50.89 \\
\cmidrule(lr){2-5}
& FWD    & BCLCIFAR-100 & 1 & 46.70 \\
& CPE-F  & BCLCIFAR-100 & 1 & 46.57 \\
\bottomrule
\end{tabular}
}
\end{table}

The results show that PLL methods perform very poorly in this setting. On BCLCIFAR-20, PiCO, PRODEN, and SoLar achieve 6.01\%, 7.74\%, and 4.74\% accuracy, respectively, which is far below the performance of the CLL method FWD (50.84\%). On BCLCIFAR-100, PLL performance further drops to around 1\%, while FWD still achieves 46.70\%.

This issue becomes even more severe as the number of classes increases. For example, the conversion yields candidate label sets of size 99 for CIFAR-100 and 199 for TinyImageNet-200, making the supervision increasingly ambiguous. Thus, although multiple complementary-label learning is more closely related to PLL, the single-complementary-label setting considered here is much more difficult for PLL methods after conversion, because the candidate set contains almost the entire label space.

In addition, most PLL methods assume that candidate labels are generated in a relatively random or unbiased manner. By contrast, BICL induces a highly biased candidate-label structure. Even with large candidate sets, PLL methods can still succeed when the inclusion pattern provides a reliable signal that helps distinguish the true label from incorrect ones. That signal becomes much weaker in our setting due to the strong bias introduced by BICL, which further explains the poor performance of PLL methods.

More generally, this highlights an interesting research direction connecting CLL and PLL. To the best of our knowledge, most PLL methods have not been systematically evaluated in large-scale settings with very large candidate label sets.

\subsection{Effect of the Number of Classes on Performance}
\label{effect_num_classes}

In CLL, the number of classes plays an important role, as a larger label space generally makes the learning problem more challenging. To better understand how performance degrades as a function of label-space size under CLL, we conduct additional ablation studies to further examine the impact of class count within a fixed dataset. Specifically, we construct two subsets from CIFAR-100, comprising 20 and 50 classes, respectively. The subset preserves the original images and their associated labels, while restricting the overall label space to a smaller set of classes. The 20-class subset of CIFAR-100 is constructed by selecting one randomly chosen fine-grained class from each coarse class (\textit{i.e., the CIFAR-20 superclass grouping}). For the 50-class subset, all classes are randomly selected. To further investigate the scalability of BICL on even larger label spaces, we introduce three additional subsets derived from TinyImageNet-200, containing 50, 100, and 150 classes, respectively. Similar to the 50-class CIFAR-100 configuration, all classes in these TinyImageNet subsets are randomly selected from the original global label space. 

In this experiment, the VLM is asked to provide a complementary label by selecting from four options, under the assumption that the \textit{true label} is not included (as BICL Analysis Case). All other parameters were kept the same as in our previous experiments. As shown in Figure~\ref{fig:label-space-effect}, the results confirm that model performance consistently decreases as the number of classes increases. 


\begin{figure*}[htb]
  \centering
  \includegraphics[width=0.95\linewidth]{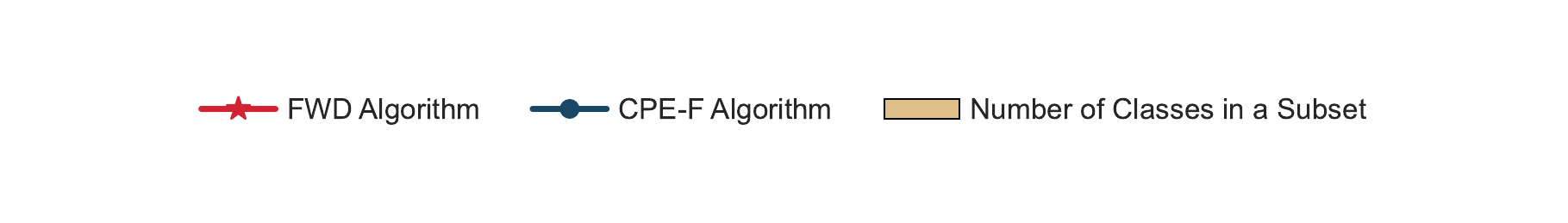}\\
  \vspace{-15pt} 
  \begin{subfigure}[b]{0.45\textwidth}
    \includegraphics[width=\textwidth]{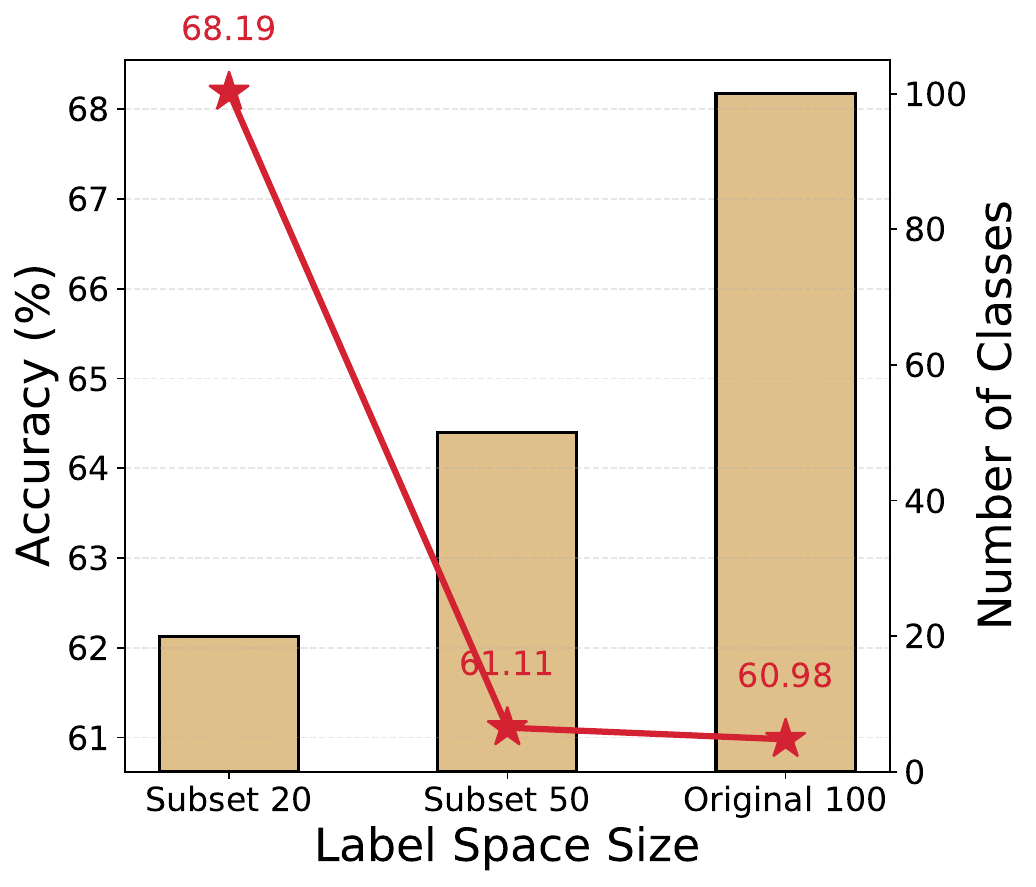}
    \caption{Subset of CIFAR-100 with FWD algorithm.}
  \end{subfigure}
  \begin{subfigure}[b]{0.45\textwidth}
  \centering
    \includegraphics[width=\textwidth]{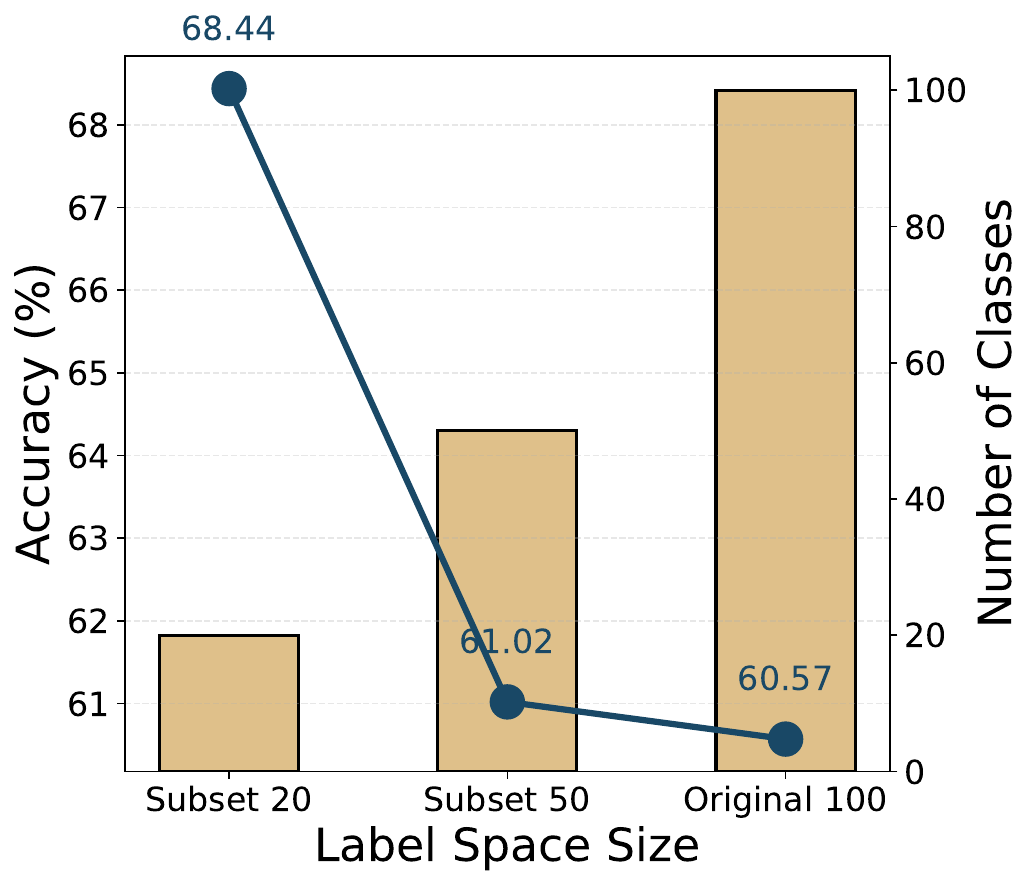}
    \caption{Subset of CIFAR-100 with CPE-F algorithm.}
  \end{subfigure}
  \begin{subfigure}[b]{0.45\textwidth}
    \includegraphics[width=\textwidth]{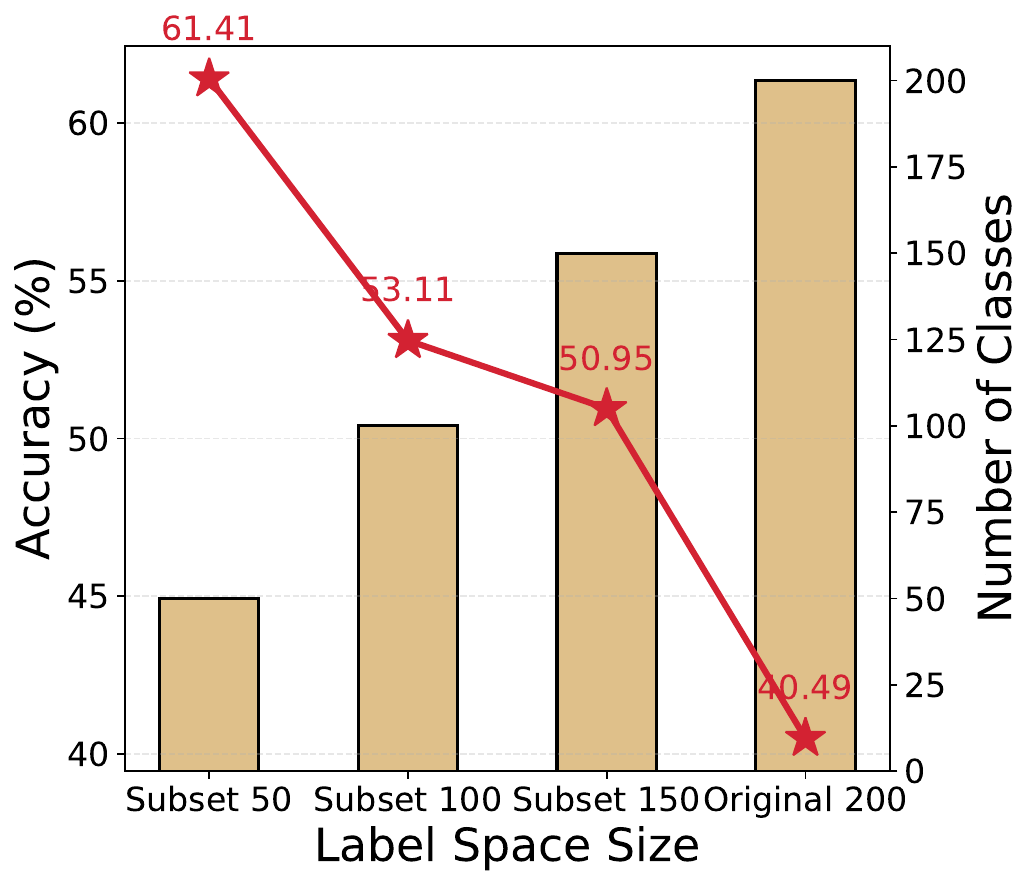}
    \caption{Subset of TinyImageNet-200 with FWD \\algorithm.}
  \end{subfigure}
  \begin{subfigure}[b]{0.45\textwidth}
  \centering
    \includegraphics[width=\textwidth]{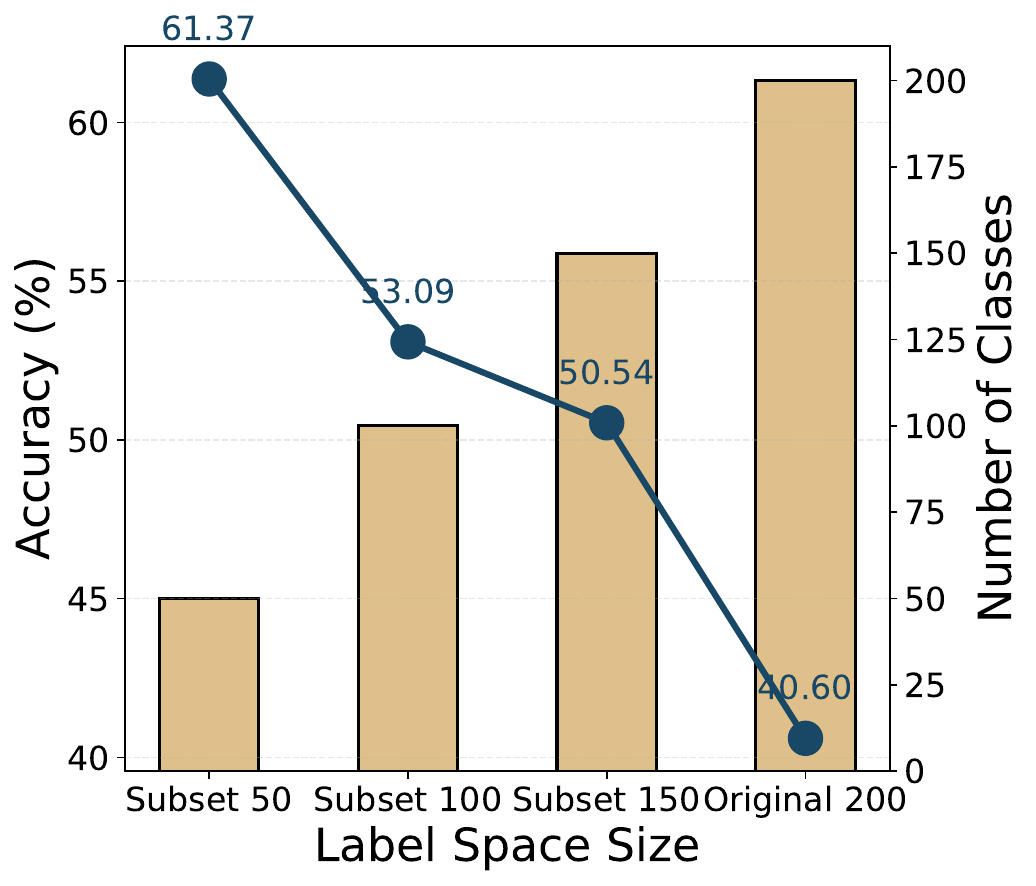}
    \caption{Subset of TinyImageNet-200 with CPE-F \\algorithm.}
  \end{subfigure}  
  \caption{Effect of label-space size on performance within CIFAR-100 and TinyImageNet-200.}
\label{fig:label-space-effect}
\end{figure*}

The most prominent trend across all four figures is the clear inverse correlation between label-space cardinality and classification accuracy. On both CIFAR-100 and TinyImageNet-200 subsets, we observe that inference model performance drops significantly as the number of classes increases, reflecting the escalating difficulty of the classification task. Notably, the FWD and CPE-F algorithms exhibit nearly identical performance profiles across all subsets, suggesting that these trends are driven by inherent data complexity. In conclusion, while the BICL framework successfully enables CLL to scale to high-cardinality environments, the mounting supervision ambiguity in larger label spaces remains a critical bottleneck for performance stability.

\section{Details of Experimental Setup}
\label{details_setup}
This section describes the experimental configuration used to evaluate the proposed method. We first detail the VLMs employed for automated annotation and the benchmark datasets used in our experiments. We then summarize the compared approaches and CLL algorithms, followed by the architectural details of the models used for training. Together, these descriptions ensure reproducibility and provide sufficient context for interpreting the empirical results.

\subsection{Details of Vision--Language Models}

For automated annotation, we use LLaVA-v1.6-7B~\cite{LLava} (Large Language-and-Vision Assistant). We choose LLaVA to enable a direct comparison with ACLImage~\cite{aclimage2025}, which also adopts this model for complementary-label collection. In particular, ACLImage reports that LLaVA offers a favorable trade-off between computational cost and labeling quality for CLL. From a technical perspective, the LLaVA architecture bridges a pre-trained vision encoder (such as CLIP ViT-L/14) and a large language model via a simple linear or MLP projection matrix. We use the 7B-parameter variant of the version 1.6, which introduces dynamic high-resolution image processing and enhanced visual reasoning, providing an optimal balance between vision reasoning capability and resource efficiency.

To further evaluate the sensitivity and robustness of our framework across diverse multimodal architectures, our extended ablation studies incorporate frontier open-weight models, specifically from the Qwen3~\cite{yang2025qwen3technicalreport} and Gemma~\cite{gemmateam2024gemmaopenmodelsbased} families. The Qwen3 architecture distinguishes itself through its native dynamic-resolution visual encoding and sophisticated visual token compression mechanism. Rather than forcing images into a fixed grid, this standout methodology allows Qwen3 to optimally preserve aspect ratios and fine-grained details at massive resolutions without overwhelming the context window. Similarly, we evaluate Gemma-based vision models (e.g., the Gemma4-E4B variant). The defining hallmark of this Gemma lineage is its deep, seamless integration of the highly optimized SigLIP (Sigmoid Loss for Language Image Pre-training) vision encoder. This architectural choice yields extreme parameter efficiency, allowing a compact model to achieve the granular spatial awareness and dense visual reasoning typically reserved for much larger architectures. Incorporating these modern models demonstrates that our data collection protocol scales effectively as underlying annotators gain specialized, cutting-edge abilities to extract rich features and process high-definition images.

All VLM queries are performed in a zero-shot setting with temperature set to 0. The VLM receives each input image at its original resolution (as downloaded). We instruct the model using a task-specific prompt derived from the base \textit{negative prompt} in Section~\ref{sec:3.1}, with minor modifications depending on the experiment. For Figure~\ref{fig:sampled_label}, we use the same base prompt while expanding the candidate label options to match the required number of labels $n$.

In the standard supervision experiments, the VLM acts as a zero-shot classifier over the global label space. The model is explicitly instructed to find the true label using a positive prompt in Appendix~\ref{pos_prompt}. Since all the images were raw, LLaVA actually struggle to answer for this task consistently and struggle frequently return back reasons such as ``\textit{This image is too blurry, I can not select the true label}''. After rigorous prompting, the final instruction enforced a forced-choice mechanism in the selection phase, eliminate uncertain outputs.

\scalebox{1.0}{
\begin{tcolorbox}[colback=white, colframe=gray, title=\texttt{\textbf{Prompt}}]
\small
\texttt{\textbf{Question:}<image> Select the closest label from [{labels}]. Output strictly one label, no explanation or reasoning. If the image is unclear, choose the most probable label.}
\label{pos_prompt}
\end{tcolorbox}
}

\subsection{Details of Benchmark Datasets}

In this section, we report the index-to-class-name mappings for the benchmark datasets used throughout our experiments. Table~\ref{cifar10-20-labelname} lists the class indices for CIFAR-10 and the corresponding superclass indices for CIFAR-20, following the standard CIFAR-100 taxonomy where CIFAR-20 groups the CIFAR-100 fine classes into 20 superclasses. Table~\ref{tab:cifar100_labels} provides the complete mapping between class indices (0--99) and fine-grained label names for CIFAR-100. 
These mappings are included to ensure clarity and reproducibility when interpreting results reported by class index (i.e., in error analysis and ablation studies) and to facilitate direct comparison across datasets with different label granularities.

\begin{table}[htb]
\centering
\caption{The correspondence between index and label names of CIFAR-10 and CIFAR-20 datasets.}
\label{cifar10-20-labelname}
\begin{tabular}{c|l|l}
\toprule
\textbf{Index} & \textbf{CIFAR-10 Label Name} & \textbf{CIFAR-20 Label Name} \\
\midrule
0  & airplane   & aquatic mammals \\
1  & automobile & fish \\
2  & bird       & flowers \\
3  & cat        & food containers \\
4  & deer       & fruit, vegetables and mushrooms \\
5  & dog        & household electrical devices \\
6  & frog       & household furniture \\
7  & horse      & insects \\
8  & ship       & large carnivores and bear \\
9  & truck      & large man-made outdoor things \\
10 &            & large natural outdoor scenes \\
11 &            & large omnivores and herbivores \\
12 &            & medium-sized mammals \\
13 &            & non-insect invertebrates \\
14 &            & people \\
15 &            & reptiles \\
16 &            & small mammals \\
17 &            & trees \\
18 &            & transportation vehicles \\
19 &            & non-transportation vehicles \\
\bottomrule
\end{tabular}
\end{table}

\begin{table}[ht]
\centering
\caption{Fine-grained class indices and names of the CIFAR-100 dataset.}
\label{tab:cifar100_labels}
\small
\setlength{\tabcolsep}{6pt}
\renewcommand{\arraystretch}{1.15}
\begin{tabular}{c|l|c|l|c|l|c|l}
\toprule
\textbf{Index} & \textbf{Label Name} &
\textbf{Index} & \textbf{Label Name} &
\textbf{Index} & \textbf{Label Name} &
\textbf{Index} & \textbf{Label Name} \\
\midrule
0  & apple            & 25 & couch          & 50 & mouse          & 75 & skunk \\
1  & aquarium\_fish   & 26 & crab           & 51 & mushroom       & 76 & skyscraper \\
2  & baby             & 27 & crocodile      & 52 & oak\_tree      & 77 & snail \\
3  & bear             & 28 & cup            & 53 & orange         & 78 & snake \\
4  & beaver           & 29 & dinosaur       & 54 & orchid         & 79 & spider \\
5  & bed              & 30 & dolphin        & 55 & otter          & 80 & squirrel \\
6  & bee              & 31 & elephant       & 56 & palm\_tree     & 81 & streetcar \\
7  & beetle           & 32 & flatfish       & 57 & pear           & 82 & sunflower \\
8  & bicycle          & 33 & forest         & 58 & pickup\_truck  & 83 & sweet\_pepper \\
9  & bottle           & 34 & fox            & 59 & pine\_tree     & 84 & table \\
10 & bowl             & 35 & girl           & 60 & plain          & 85 & tank \\
11 & boy              & 36 & hamster        & 61 & plate          & 86 & telephone \\
12 & bridge           & 37 & house          & 62 & poppy          & 87 & television \\
13 & bus              & 38 & kangaroo       & 63 & porcupine      & 88 & tiger \\
14 & butterfly        & 39 & keyboard       & 64 & possum         & 89 & tractor \\
15 & camel            & 40 & lamp           & 65 & rabbit         & 90 & train \\
16 & can              & 41 & lawn\_mower    & 66 & raccoon        & 91 & trout \\
17 & castle           & 42 & leopard        & 67 & ray            & 92 & tulip \\
18 & caterpillar      & 43 & lion           & 68 & road           & 93 & turtle \\
19 & cattle           & 44 & lizard         & 69 & rocket         & 94 & wardrobe \\
20 & chair            & 45 & lobster        & 70 & rose           & 95 & whale \\
21 & chimpanzee       & 46 & man            & 71 & sea            & 96 & willow\_tree \\
22 & clock            & 47 & maple\_tree    & 72 & seal           & 97 & wolf \\
23 & cloud            & 48 & motorcycle     & 73 & shark          & 98 & woman \\
24 & cockroach        & 49 & mountain       & 74 & shrew          & 99 & worm \\
\bottomrule
\end{tabular}
\end{table}

\subsection{Descriptions of Compared Approaches}
\label{app:compared_approaches}

In this subsection, we provide detailed descriptions of the existing real-world complementary-label datasets used for comparison: CLImage~\cite{wang2024climage} and ACLImage~\cite{aclimage2025}. We further discuss the methodological differences regarding the specific annotators (human vs. VLM) used in these approaches compared to our proposed BICL.

\subsubsection{CLImage: Human-Annotated Complementary Labels}
CLImage~\cite{wang2024climage} represents the first effort to collect CLs from human annotators to capture real-world noise and bias distributions. The dataset collection process involves utilizing crowdsourcing platforms (i.e., Amazon Mechanical Turk) where human workers are presented with images and asked to select a label that does \textit{not} match the image content.

\textbf{Protocol.}
Annotators are presented with a randomly sampled subset of candidate labels (specifically 4 labels) from the label space and asked to select one incorrect class. This process relies on human subjectivity, where annotators may select CLs based on visual dissimilarity or random choice.

\paragraph{Characteristics.}
According to recent benchmarks~\cite{libcll_2024}, CLImage exhibits a relatively high label noise rate (i.e., 3.93 percentage points on CIFAR-10) due to human error. However, interestingly, it maintains a relatively low imbalance ratio (approx. 1.56), suggesting that human errors and choices are somewhat more evenly distributed across classes compared to model-based annotation.

\subsubsection{ACLImage: VLM-Annotated Complementary Labels}
To address the scalability issues and high costs of human annotation, ACLImage~\cite{aclimage2025} automates the collection process using VLMs, specifically leveraging models like \textbf{LLaVA}~\cite{LLava}.

\textbf{Protocol.}
ACLImage adopts the same protocol as CLImage to generate CLs. For each image:
\begin{enumerate}
    \item A candidate set of 4 labels is uniformly sampled from the label space.
    \item The VLM (LLaVA) is prompted to select the label from this set that does \textit{not} describe the image.
\end{enumerate}

\textbf{Characteristics.}
While using the same protocol, the VLM annotator significantly reduces label noise, achieving a noise rate of approximately 0.24 percentage points on CIFAR-10, which is much lower than CLImage. However, contrary to the expectation that uniform candidate sampling would yield a uniform distribution, ACLImage exhibits a \textit{higher} imbalance ratio (approx. 3.26) compared to CLImage (1.56). This indicates that even with restricted candidates, VLMs exhibit inherent biases towards selecting specific ``safe'' or common negative labels.

\subsubsection{Comparison with BICL}
Our proposed BICL differs fundamentally from these approaches in design intent and resulting distribution.

\textbf{Bias Structure.}
While ACLImage exhibits moderate bias (imbalance ratio $\approx$ 3.26) as an unintended side effect of VLM preferences, BICL explicitly \textit{induces} extreme bias (imbalance ratio $\approx$ 4966.5) through our deterministic cluster-based candidate selection. We argue that this structured, high-bias signal is more informative for learning than the lower-bias distribution of CLImage or the moderate bias of ACLImage.

\textbf{Noise vs. Bias Trade-off.}
CLImage represents a high-noise, low-bias regime (Human). ACLImage represents a low-noise, moderate-bias regime (LLaVA). BICL represents a low-noise, high-bias regime (LLaVA + Clustering). Our experiments demonstrate that in the context of complementary-label learning, pushing the bias to extreme levels (as in BICL) while maintaining low noise yields superior classifier performance.

\subsection{Details of Model Architecture}
For all experiments, we employed ResNet architectures (ResNet-18, ResNet-34, and ResNet-50) as the backbone networks. To accommodate the lower resolution of the CIFAR ($32 \times 32$) and TinyImageNet ($64 \times 64$) datasets compared to standard ImageNet inputs, we modified the initial stem of the network. Specifically, we replaced the original $7 \times 7$ convolutional layer (stride 2) with a $3 \times 3$ convolutional layer with stride 1 and padding 1. This modification prevents the loss of fine-grained spatial information during the early downsampling stages. The network weights were initialized using Kaiming initialization~\cite{He2015}.

\subsection{Complementary-Label Algorithms}
\label{baseline-cll-algorithms}

In this section, we review the six CLL algorithms used as baselines in our evaluation. These methods can be broadly categorized into $Q$-aware methods, which utilize a transition matrix $Q$ to model the relationship between latent true labels and observed CLs.

\subsubsection{Forward Correction (FWD).}
Proposed by~\citep{fwd2018}, the Forward Correction (FWD) method provides a way to learn from CLs by correcting the loss function using the transition matrix $Q$. Instead of predicting the complementary label directly, FWD utilizes the transition matrix to map the predicted probability of the true label to the probability of the complementary label.

Let $g(\mathbf{x})$ denote the classifier outputs (logits) and $\sigma(\cdot)$ be the softmax function. The loss function for FWD is defined as:
\begin{equation}
    R_{\text{FWD}}(g) = \frac{1}{N}\sum_{i=1}^{N} \mathcal{L}\left(Q^\top \sigma(g(\mathbf{x}_i)), \bar{y}_i\right),
    \nonumber
\end{equation}
where $N$ is the number of samples, $\bar{y}_i$ is the complementary label, and $\mathcal{L}(\cdot, \cdot)$ is a standard loss function (i.e., Cross-Entropy). By incorporating $Q$, the model learns to approximate the true class distribution even when supervision is provided only via CLs.

\subsubsection{Unbiased Risk Estimator (URE).}
\citep{ishida2019complementarylabel} proposed a framework to estimate the classification risk unbiasedly using CLs. The core idea relies on an inverse transition matrix to recover the unbiased risk of the true classifier. The general loss formulation is:
\begin{equation}
    R_{\text{URE}}(g) = \frac{1}{N}\sum_{i=1}^{N} e_{\bar{y}_i}^\top Q^{-1} \mathcal{L}(g\mathbf{(x}_i)),
    \nonumber
\end{equation}
where $e_{\bar{y}_i}$ is the one-hot vector of the complementary label, and $\mathcal{L}(g(\mathbf{x}_i))$ denotes the vector of losses for all possible classes. However, this unbiased estimator can result in negative empirical risk values during training, leading to overfitting. To address this, two correction mechanisms were introduced:

\paragraph{URE-TNN (Non-Negative)}
The Non-Negative (NN) correction imposes a constraint that the estimated risk for any class cannot be negative. If the risk estimate for a specific class becomes negative during a training iteration, it is clipped to zero to maintain stability.

\paragraph{URE-TGA (Gradient Ascent)}
The Gradient Ascent (GA) correction strategy takes a different approach. When the empirical risk for a specific class is negative, the algorithm performs gradient ascent (reversing the gradient direction) on that component. This prevents the model from becoming over-confident in incorrect predictions caused by the negative risk estimation.

\subsubsection{Complementary Probability Estimation (CPE).}
\citep{cpe2023} introduced the Complementary Probability Estimation (CPE) framework. Unlike risk-correction methods, CPE focuses on directly estimating the probability of a label being complementary, denoted as $p(\bar{y}|\mathbf{x})$. The objective is to minimize the divergence between the model's output and the complementary target. CPE employs a surrogate complementary estimation loss (SCEL) as:
\begin{equation}
    R_{\text{CPE}}(\mathbf{g}) = \frac{1}{N} \sum_{i=1}^{N} \mathcal{L}(\mathbf{g}(\mathbf{x}_i), e_{\bar{y}}).
    \nonumber
\end{equation}

\paragraph{CPE-I (Identity)}
CPE-I utilizes a direct mapping where no explicit transition layer is applied to the model output before loss calculation. It treats the complementary probability estimation directly without structural modification involving the transition matrix $Q$.

\paragraph{CPE-F (Fixed)}
CPE-F incorporates a fixed transition layer derived from the known or estimated transition matrix $Q$. The model predicts the ordinary label distribution, which is then passed through this fixed layer to produce complementary probabilities $p(\bar{y}|\mathbf{x})$ for training.

\paragraph{CPE-T (Trainable)}
CPE-T extends CPE-F by making the transition layer trainable. This allows the model to jointly learn the classifier and fine-tune the transition matrix simultaneously, providing robustness against noise or inaccuracies in the initial transition matrix assumption.

\begin{figure}[htb]
  \centering
  \begin{subfigure}[b]{0.70\textwidth}
    \includegraphics[width=\textwidth]{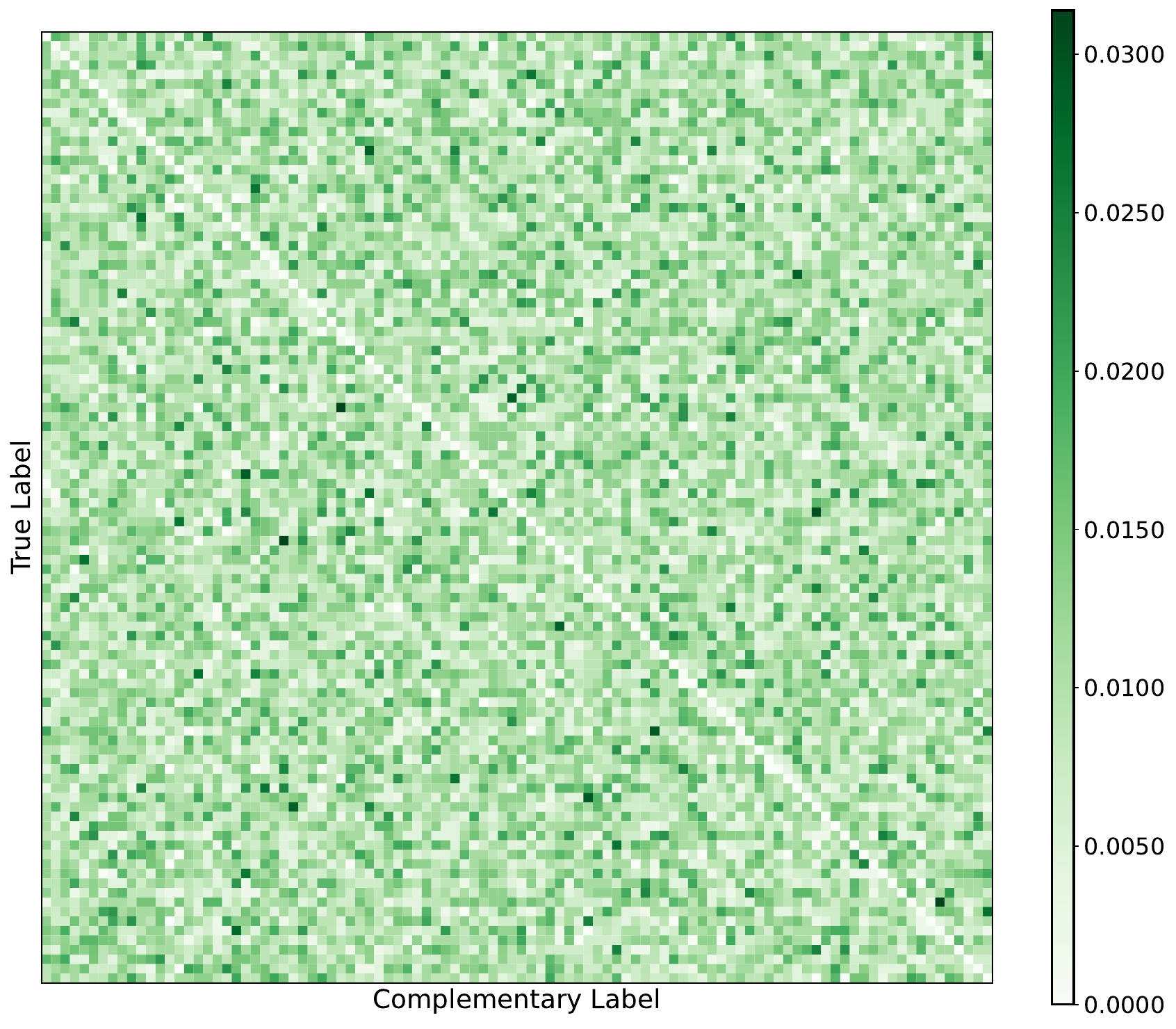}
    \caption{CIFAR-100}
  \end{subfigure}
  \begin{subfigure}[b]{0.70\textwidth}
    \includegraphics[width=\textwidth]{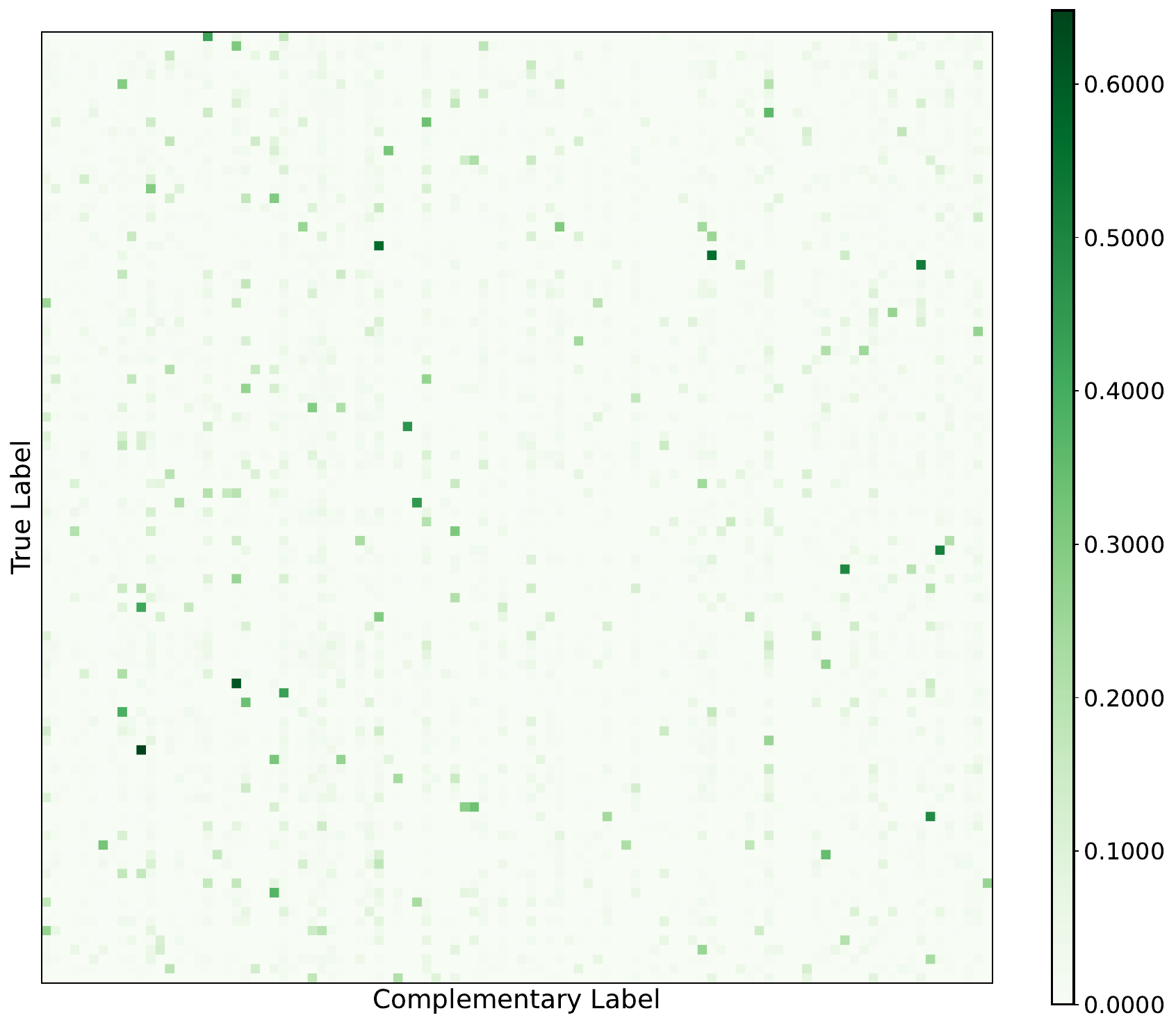}
    \caption{BCLCIFAR-100}
  \end{subfigure}
  \caption{Bias-Induced Constrained Labeling transition matrix on CIFAR-100 variants.}
  \label{fig:transition-matrix-cl100}
\end{figure}



\end{document}